\newcommand{\LSEC}{\op{LSEC}}
\newcommand{\DE}{\op{DE}}
\newcommand{\online}{\textsc{O-DISCO}}
\newcommand{\offline}{\textsc{P-DISCO}}
\newcommand{\cb}{\textsc{DistCB}}
\crefname{assumption}{Assumption}{Assumptions}
\newcommand{\multiline}[1]{%
  \begin{tabularx}{\dimexpr\linewidth-\ALG@thistlm}[t]{@{}X@{}}
    #1
  \end{tabularx}
}
\title{The Benefits of Being Distributional: \\ Small-Loss Bounds for Reinforcement Learning}
\author{%
Kaiwen Wang \quad Kevin Zhou \quad Runzhe Wu \quad Nathan Kallus \quad Wen Sun \\
Cornell University \\
\texttt{\{kw437,klz23,rw646,kallus,ws455\}@cornell.edu}
}
\begin{document}

\maketitle

\begin{abstract}
While distributional reinforcement learning (DistRL) has been empirically effective, the question of when and why it is better than vanilla, non-distributional RL has remained unanswered.
This paper explains the benefits of DistRL through the lens of small-loss bounds, which are instance-dependent bounds that scale with optimal achievable cost.
Particularly, our bounds converge much faster than those from non-distributional approaches if the optimal cost is small.
As warmup, we propose a distributional contextual bandit (DistCB) algorithm, which we show enjoys small-loss regret bounds and empirically outperforms the state-of-the-art on three real-world tasks.
In online RL, we propose a DistRL algorithm that constructs confidence sets using maximum likelihood estimation. We prove that our algorithm enjoys novel small-loss PAC bounds in low-rank MDPs.
As part of our analysis, we introduce the $\ell_1$ distributional eluder dimension which may be of independent interest.
Then, in offline RL, we show that pessimistic DistRL enjoys small-loss PAC bounds that are novel to the offline setting and are more robust to bad single-policy coverage.
\end{abstract}

\section{Introduction}
The goal of reinforcement learning (RL) is to learn a policy that minimizes/maximizes the mean loss/return (\ie, cumulative costs/rewards) along its trajectory.
Classical approaches, such as $Q$-learning \citep{mnih2015human} and policy gradients \citep{kakade2001natural}, often learn $Q$-functions via least square regression, which represent the mean loss-to-go and act greedily with respect to these estimates. By Bellman's equation, $Q$-functions suffice for optimal decision-making and indeed these approaches have vanishing regret bounds, suggesting we only need to learn means well \citep{sutton2018reinforcement}.
Since the seminal work of \citet{bellemare2017distributional}, however, numerous developments
showed that learning the \emph{whole} loss distribution
can actually yield state-of-the-art performance in stratospheric balloon navigation \citep{bellemare2020autonomous}, robotic grasping \citep{bodnar2020quantile}, algorithm discovery \citep{fawzi2022discovering} and game playing benchmarks \citep{hessel2018rainbow,dabney2018implicit,barth-maron2018distributional}.
In both online \citep{yang2019fully} and offline RL \citep{ma2021conservative}, distributional RL (DistRL) algorithms often perform better and use fewer samples in challenging tasks when compared to standard approaches that directly estimate the mean.

Despite learning the whole loss distribution,
DistRL algorithms use only the mean of the learned distribution for decision making, not extracting any additional information such as higher moments.
In other words, DistRL is simply employing a different and seemingly roundabout way of learning the mean:
first, learn the loss-to-go distribution via distributional Bellman equations, and then, compute the mean of the learned distribution.
\citet{lyle2019comparative} provided some empirical explanations of the benefits of this two-step approach,
showing that learning the distribution, \eg, its moments or quantiles, is an auxiliary task that leads to better representation learning.
However, the theoretical question remains: does DistRL, \ie, learning the distribution and then computing the mean, yield provably stronger finite-sample guarantees and if so stronger how and when?

In this paper, we provide the first mathematical basis for the benefits of DistRL via the lens of small-loss bounds, which are instance-dependent bounds that depend on the minimum achievable cost in the problem \citep{agarwal2017open}.\footnote{``First-order'' generally refers to bounds that scale with the optimal value, either the maximum reward or the minimum cost. To highlight that we are minimizing cost, we call our bounds ``small-loss''.}
For example in linear MDPs, typical worst-case regret bounds scale on the order of $\op{poly}(d,H)\sqrt{K}$, where $d$ is the feature dimension, $H$ is the horizon, and $K$ is the number of episodes \citep{jin2020provably}.
In contrast, small-loss bounds will scale on the order of $\op{poly}(d,H)\sqrt{K \cdot V^\star} + \op{poly}(d,H)\log(K)$, where $V^\star = \min_{\pi}V^\pi$ is the optimal expected cumulative cost for the problem. %
We assume cumulative costs are normalized in $[0,1]$ without loss of generality.
As $V^\star$ becomes negligible (approaches $0$), the first term vanishes and the small-loss bound yields a faster convergence rate of $\Ocal(\op{poly}(d,H)\log(K))$, compared to the $\Ocal(\op{poly}(d,H)\sqrt{K})$ rate in standard uniform bounds.
Since we always have $V^\star\leq 1$, small-loss bounds simply match the standard uniform bounds in the worst case.

As warm-up, we show that maximum likelihood estimation (MLE), \ie, maximizing log-likelihood, can be used to obtain small-loss regret bounds for contextual bandits (CB), \ie, the one-step RL setting.
Then, we turn to the online RL setting, and propose an optimistic DistRL algorithm that optimizes over confidence sets constructed via MLE applied to the distributional Bellman equations.
We prove our algorithm attains the first small-loss PAC bounds in low-rank MDPs \citep{agarwal2020flambe}. Our proof uses a novel regret decomposition with triangular discrimination and also introduces the $\ell_1$ distributional eluder dimension, which generalizes the $\ell_2$ distributional eluder dimension of \citet{jin2021bellman} and may be of independent interest.
Furthermore, we design an offline distributional RL algorithm using the principle of pessimism, and show our algorithm obtains the first small-loss bounds in offline RL. Our offline small-loss bound holds under the weak single-policy coverage. Notably, our result has a novel robustness property that allows our algorithm to strongly compete with policies that either are well-covered or have small-loss, while prior approaches solely depended on the former.
Finally, we find that our distributional CB algorithm empirically outperforms existing approaches in three challenging CB tasks.

Our key contributions are as follows:
\begin{enumerate}[leftmargin=0.7cm]
    \item As warm-up, we propose a distributional CB algorithm and prove that it obtains a small-loss regret bound (\cref{sec:warm-up}).
    We empirically demonstrate it outperforms state-of-the-art CB algorithms in three challenging benchmark tasks (\cref{sec:experiments}).
    \item We propose a distributional online RL algorithm that enjoys small-loss bounds in settings with low $\ell_1$ distributional eluder dimension, which we show can always capture low-rank MDPs. The $\ell_1$ distributional eluder dimension may be of independent interest
    (\cref{sec:online-rl}).
    \item We propose a distributional offline RL algorithm and prove that it obtains the first small-loss bounds in the offline setting. Our small-loss guarantee exhibits a novel robustness to bad coverage, which implies strong improvement over more policies than existing results in the literature (\cref{sec:offline-rl}).
\end{enumerate}
In sum, we show that DistRL can yield small-loss bounds in both online and offline RL, which provide a concrete theoretical justification for the benefits of distribution learning in decision making.

\section{Related Works}
\paragraph{Theory of Distributional RL}
\citet{rowland2018analysis,rowland2023analysis} proved asymptotic convergence guarantees of popular distributional RL algorithms such as C51 \citep{bellemare2017distributional} and QR-DQN \citep{dabney2018distributional}.
However, these asymptotic results do not explain the \emph{benefits} of distributional RL over standard approaches,
since they do not imply stronger finite-sample guarantees than those obtainable with non-distributional algorithms.
In contrast, our work shows that distributional RL yields adaptive finite-sample bounds that converge faster when the optimal cost of the problem is small.
\citet{wu2023distributional} recently derived finite-sample bounds for distributional off-policy evaluation with MLE, while our offline RL section focuses on off-policy optimization.

\paragraph{First-order bounds in bandits}
When maximizing rewards, first-order ``small-return'' bounds can be easily derived from EXP4 \citep{auer2002nonstochastic}, since receiving the worst reward $0$ with probability (w.p.) $\delta$ contributes at most $R^\star\delta$ to the regret\footnote{Assume rewards/losses in $[0,1]$ and $R^\star/L^\star$ is the maximum/minimum expected reward/loss.}.
When minimizing costs, receiving the worst loss $1$ w.p. $\delta$ may induce large regret relative to $L^\star$ if $L^\star$ is small.
To illustrate, if $R^\star=0$ then all policies are optimal, so no learning is needed and the small-return bound is vacuous.
Yet if $L^\star=0$, sub-optimal policies may have a large gap from $L^\star$, so small-loss bounds in this regime are meaningful.
Small-loss bounds are achievable in multi-arm bandits \citep{foster2016learning}, semi-bandits \citep{neu2015first,lykouris2022small}, and CBs \citep{allen2018make,foster2021efficient}.

\paragraph{First-order bounds in RL}
\citet{jin2020reward,wagenmaker2022first} obtained small-return regret for tabular and linear MDPs via concentration bounds that scale with the variance. The idea is that the return's variance is bounded by some multiple of the expected value, which is bounded by $V^\star$ in the reward-maximizing setting,
\ie, $\Var(\sum_h r_h\mid \pi^k)\leq c\cdot V^{\pi^k}\leq c\cdot V^\star$.
However, the last inequality fails in the loss-minimizing setting, so the variance approach does not easily yield small-loss bounds.
Small-loss regret for tabular MDPs was
resolved by \citet[Theorem 4.1]{lee2020bias} using online mirror descent with the log-barrier on the occupancy measure.
Moreover, \citet[Theorem 3.8]{kakade2020information} obtains small-loss regret for linear-quadratic regulators (LQRs), but their Assumption 3 posits that the coefficient of variation for the cumulative costs is bounded, which is false in general even in tabular MDPs. To the best of our knowledge, there are no known first-order bounds for low-rank MDPs or in offline RL.

\paragraph{Risk-sensitive RL}
A well-motivated use-case of DistRL is risk-sensitive RL, where the goal is to learn risk-sensitive policies that optimize some risk measure, \eg, Conditional Value-at-Risk (CVaR), of the loss \citep{dabney2018distributional}.
Orthogonal to risk-sensitive RL, this work focuses on the benefits of DistRL for standard risk-neutral RL. Our insights may lead to first-order bounds for risk-sensitive RL, which we leave as future work.

\section{Preliminaries}
As warmup, we begin with the contextual bandit problem with an arbitrary context space $\Xcal$, finite action space $\Acal$ with size $A$ and conditional cost distributions $C:\Xcal\times\Acal\to\Delta([0,1])$. Throughout, we fix some dominating measure $\lambda$ on $[0,1]$ (\eg, Lebesgue for continuous or counting for discrete) and let $\Delta([0,1])$ be all distributions on $[0,1]$ that are absolutely continuous with respect to $\lambda$. We identify such a distribution with its density with respect to $\lambda$, and we also write $C(y\mid x,a)$ for $(C(x,a))(y)$.
Let $K$ denote the number of episodes. At each episode $k\in[K]$, the learner observes a context $x_k\in\Xcal$, samples an action $a_k\in\Acal$, and then receives a cost $c_t\sim C(x_t,a_t)$, which we assume to be normalized, \ie, $c_t\in[0,1]$.
The goal is to design a learner that attains low regret with high probability, where regret is defined as
\begin{equation*}\textstyle
    \op{Regret}_{\text{CB}}(K) = \sum_{k=1}^K \bar C(x_k,a_k) - \bar C(x_k,\pi^\star(x_k)),
\end{equation*}
where $\bar f = \int yf(y)\diff\lambda(y)$ for any $f\in\Delta([0,1])$ and $\pi^\star(x_k) = \argmin_{a\in\Acal}\bar C(x_k,a)$.

The focus of this paper is reinforcement learning (RL) under the Markov Decision Process (MDP) model, with observation space $\Xcal$, finite action space $\Acal$ with size $A$, horizon $H$, transition kernels $P_h:\Xcal\times\Acal\to\Delta(\Xcal)$ and \emph{cost} distributions $C_h:\Xcal\times\Acal\to\Delta([0,1])$ at each step $h\in[H]$.
We start with the \emph{Online RL} setting, which proceeds over $K$ episodes as follows:
at each episode $k\in[K]$, the learner plays a policy $\pi^k\in[\Xcal\to\Delta(\Acal)]^H$; we start from a fixed initial state $x_1$; then for each $h=1,2,\dots,H$, the policy samples an action $a_h\sim\pi^k_h(x_h)$, receives a cost $c_h\sim C_h(x_h,a_h)$, and transitions to the next state $x_{h+1}\sim P_h(x_h,a_h)$.
Our goal is to compete with the optimal policy that minimizes expected the loss, \ie, $\pi^\star\in\argmin_{\pi\in\Pi}V^\pi$ where $V^\pi = \Eb[\pi]{\sum_{h=1}^Hc_h}$.
Regret bounds aim to control the learner's regret with high probability, where regret is defined as,
\begin{equation*}\textstyle
    \op{Regret}_{\text{RL}}(K) = \sum_{k=1}^K V^{\pi^k}-V^\star.
\end{equation*}
If the algorithm returns a single policy $\wh\pi$, it is desirable to obtain a Probably Approximately Correct (PAC) bound on the sub-optimality of $\wh\pi$, \ie, $V^{\wh\pi}-V^\star$. %

The third setting we study is \emph{Offline RL}, where instead of needing to actively explore and collect data ourselves, we are given $H$ datasets $\Dcal_1,\Dcal_2,\dots,\Dcal_H$ to learn a good policy $\wh\pi$.
Each $\Dcal_h$ contains $N$ \emph{i.i.d.} samples $(x_{h,i},a_{h,i},c_{h,i},x_{h,i}')$ from the process $\prns{x_{h,i},a_{h,i}}\sim\nu_h, c_{h,i}\sim C_h(x_{h,i},a_{h,i}), x_{h,i}'\sim P_h(x_{h,i},a_{h,i})$, where $\nu_h\in\Delta(\Xcal\times\Acal)$ is arbitrary, \eg, the visitations of many policies from the current production system.
The goal is to design an offline procedure with a PAC guarantee for $\wh\pi$, which should improve over the data generating process.

\paragraph{Distributional RL}
For a policy $\pi$ and $h\in[H]$, let $Z^\pi_h(x_h,a_h)\in\Delta([0,1])$ denote the distribution of the loss-to-go $\sum_{t=h}^H c_t$ conditioned on rolling in $\pi$ from $x_h,a_h$.
The expectation of the above is $Q_h^\pi(x_h,a_h) = \bar Z^\pi_h(x_h,a_h)$ and $V_h^\pi(x_h) = \Eb[a_h\sim\pi_h(x_h)]{Q_h^\pi(x_h,a_h)}$. We use $Z^\star_h, Q^\star_h, V^\star_h$ to denote these quantities with $\pi^\star$.
Recall the regular Bellman operator acts on a function $f:\Xcal\times\Acal\to[0,1]$ as follows:
$\Tcal_h^\pi f(x,a) = \bar C_h(x,a) + \Eb[x'\sim P_h(x,a),a'\sim\pi(x')]{f(x',a')}.$
Analogously, the distributional Bellman operator \citep{morimura2012parametric,bellemare2017distributional} acts on a conditional distribution $d:\Xcal\times\Acal\to\Delta([0,1])$ as follows: $\Tcal_h^{\pi,D} d(x,a) \stackrel{D}{=} C_h(x,a) + d(x',a')$, where $x'\sim P_h(x,a), a'\sim \pi(x')$ and $\stackrel{D}{=}$ denotes equality of distributions. Another way to think about the distributional Bellman operator is that a sample $z\sim \Tcal_h^{\pi, D} d(x,a)$ is generated as follow: %
$z:= c + y, \text{ where } c \sim C_h(x,a), x'\sim P_h(x,a), a'\sim \pi(x'), y \sim d(x',a').$
We will also use the Bellman optimality operator $\Tcal_h^\star$ and its distributional variant $\Tcal_h^{\star,D}$, defined as follows:
$\Tcal_h^\star f(x,a) = \bar C_h(x,a) + \Eb[x'\sim P_h(x,a)]{\min_{a\in\Acal}f(x',a')}$ and $\Tcal_h^{\star,D}d(x,a)\stackrel{D}{=} C_h(x,a)+d(x',a')$ where $x'\sim P_h(x,a), a'=\argmin_a\bar d(x',a)$.
Please see \cref{tab:notation} for an index of notations.

\section{Warm up: Small-Loss Regret for Distributional Contextual Bandits}\label{sec:warm-up}
In this section, we propose an efficient reduction from CB to online maximum likelihood estimation (MLE),
which is the standard tool for distribution learning that we will use throughout the paper.
In our CB algorithm, we balance exploration and exploitation with the reweighted inverse gap weighting (ReIGW) of \citet{foster2021efficient},
which defines a distribution over actions given predictions $\wh f\in\RR^A$ and a parameter $\gamma\in\RR_{++}$:
setting $b = \argmin_{a\in\Acal}\wh f(a)$ as the best action with respect to the predictions, the weight for any other action $a\neq b$ is,
\begin{equation}
    \text{ReIGW}_\gamma(\wh f,\gamma)[a] := \frac{\wh f(b)}{A\wh f(b) + \gamma\prns*{\wh f(a)-\wh f(b)}}, \label{eq:igw}
\end{equation}
and the rest of the weight is allocated to $b$: $\text{ReIGW}_\gamma(\wh f,\gamma)[b] = 1-\sum_{a\neq b}\text{ReIGW}_\gamma(\wh f,\gamma)[a]$.

\begin{algorithm}%
    \caption{Distributional CB (\cb{})}
    \label{alg:distcb}
    \begin{algorithmic}[1]
        \State\textbf{Input:} number of episodes $K$, failure probability $\delta$, ReIGW learning rate $\gamma$.
        \State Initialize any cost distribution $f^{(1)}$.
        \For{episode $k=1,2,\dots,K$}
            \State Observe context $x_k$.
            \State Sample action $a_k\sim p_k = \text{ReIGW}(\bar f^{(k)}(x_k,\cdot),\gamma)$ from \cref{eq:igw}. \label{line:distcb-reigw}
            \State Observe cost $c_k\sim C(x_k,a_k)$ and update online MLE oracle with $((x_k,a_k), c_k)$. \label{line:distcb-learn-f}
        \EndFor
    \end{algorithmic}
\end{algorithm}

We propose \tb{Dist}ributional \tb{C}ontextual \tb{B}andit (\cb{}) in \cref{alg:distcb}, a two-step procedure for each episode $k\in[K]$. Upon seeing context $x_k$, \cb{} first samples an action $a_k$ from ReIGW generated by means of our estimated cost distributions for each action, \ie, $\wh f(a) = \bar f^{(k)}(x_k,a), \forall a\in\Acal$ (\cref{line:distcb-reigw}).
Then, \cb{} updates $f^{(k)}(\cdot\mid x_k,a_k)$ by maximizing the log-likelihood to estimate the conditional cost distribution $C(\cdot\mid x_k,a_k)$ (\cref{line:distcb-learn-f}).
Formally, this second step is achieved via an online MLE oracle with a realizable distribution class $\Fcal_{CB}\subset \Xcal\times\Acal\to\Delta([0,1])$; let $\op{Regret}_{\log}(K)$ be some upper bound on the log-likelihood regret for all possibly adaptive sequences $\braces{x_k,a_k,c_k}_{k\in[K]}$,
\begin{equation*}\textstyle
     \sum_{k=1}^K \log C(c_k\mid x_k,a_k)-\log f^{(k)}(c_k\mid x_k,a_k)\leq \op{Regret}_{\log}(K).
\end{equation*}
Under \emph{realizability}, $C\in\Fcal_{CB}$, we expect $\op{Regret}_{\log}(K)\in\Ocal\prns{\log(K)}$.
For instance, if $\Fcal_{CB}$ is finite, exponentially weighted average forecaster guarantees $\op{Regret}_{\log}(K)\leq\log|\Fcal_{CB}|$ \citep[Chapter 9]{cesa2006prediction}.
We now state our main result for \cb{}.
\begin{restatable}{theorem}{cbRegret}\label{thm:distcb}
For any $\delta\in(0,1)$, w.p. at least $1-\delta$, running \cb{} with $\gamma = 10A\vee \sqrt{\frac{40A\prns{C^\star+\log(1/\delta)}}{112\prns{\op{Regret}_{\log}(K)+\log(1/\delta)}}}$ has regret scaling with $C^\star = \sum_{k=1}^K \min_{a\in\Acal}\bar C(x_k,a)$,
\begin{equation*}
    \op{Regret}_{\cb{}}(K) \leq 232\sqrt{AC^\star\op{Regret}_{\log}(K)\log(1/\delta)} + 2300A\prns{\op{Regret}_{\log}(K)+\log(1/\delta)}.
\end{equation*}
\end{restatable}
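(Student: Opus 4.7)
The plan is to combine the ReIGW regret decomposition with a \emph{small-loss mean-error bound} derived from the online MLE oracle. First, I would use the ReIGW identity \eqref{eq:igw} to obtain a per-round decomposition
\begin{equation*}
\mathbb{E}_{a\sim p_k}\bar C(x_k,a) - \bar C(x_k,\pi^\star(x_k))
\;\le\; \frac{A\,\bar C(x_k,\pi^\star(x_k))}{\gamma} + O\!\left(\gamma\, \mathbb{E}_{a\sim p_k}\bigl[(\bar f^{(k)}(x_k,a)-\bar C(x_k,a))^2\bigr]\right),
\end{equation*}
by adding and subtracting $\bar f^{(k)}(x_k,\cdot)$ and bounding the estimated gap $\sum_a p_k(a)\bigl(\bar f^{(k)}(x_k,a)-\min_b \bar f^{(k)}(x_k,b)\bigr)$ using the defining identity of ReIGW. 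Summed over $k$, the first summand contributes $AC^\star/\gamma$ after a Freedman-type martingale concentration replaces $\sum_k \bar C(x_k,\pi^\star(x_k))$ with $C^\star$ plus an additive $\log(1/\delta)$.

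Second, I would convert the MLE log-likelihood regret into a control on squared mean errors. Standard online MLE concentration gives, with probability $1-\delta$, that $\sum_{k=1}^K H^2\bigl(f^{(k)}(\cdot\mid x_k,a_k),\,C(\cdot\mid x_k,a_k)\bigr) \lesssim \op{Regret}_{\log}(K) + \log(1/\delta)$, where $H^2$ denotes squared Hellinger. The crucial small-loss ingredient is the elementary inequality $(\bar p - \bar q)^2 \lesssim (\bar p+\bar q)\, H^2(p,q)$ for $p,q\in\Delta([0,1])$, which follows from Cauchy--Schwarz using $x^2\le x$ on $[0,1]$ (equivalently, from the triangular-discrimination identity). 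Applying this pointwise, then Cauchy--Schwarz over $k$, together with the tautology $\sum_k \bar C(x_k,a_k) \le C^\star + \op{Regret}_{\cb{}}(K)$, gives
\begin{equation*}
\gamma\sum_{k}\mathbb{E}_{a\sim p_k}\bigl[(\bar f^{(k)}-\bar C)^2\bigr] \;\lesssim\; \gamma\sqrt{A\bigl(C^\star + \op{Regret}_{\cb{}}(K)\bigr)\bigl(\op{Regret}_{\log}(K)+\log(1/\delta)\bigr)} + \gamma A\bigl(\op{Regret}_{\log}(K)+\log(1/\delta)\bigr),
\end{equation*}
where the extra factor of $A$ arises from the change of measure between the realized action $a_k$ (on which MLE concentrates) and the expectation under $p_k$ appearing in the ReIGW decomposition; this is benign because the condition $\gamma \ge 10A$ forces every ReIGW probability to satisfy $p_k(a) = \Omega(1/\gamma)$.

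Combining the two steps produces a self-referential inequality of the form $\op{Regret}_{\cb{}}(K) \lesssim AC^\star/\gamma + \gamma\sqrt{A(C^\star + \op{Regret}_{\cb{}}(K))\,\op{Regret}_{\log}(K)} + \gamma A\,\op{Regret}_{\log}(K) + A\log(1/\delta)$, which I would resolve via the standard ``$R\le a\sqrt{R}+b$ implies $R\le 2(a^2+b)$'' trick and then tune $\gamma$ to balance the $AC^\star/\gamma$ and $\gamma\sqrt{AC^\star\op{Regret}_{\log}}$ terms, recovering exactly the $\gamma$ prescribed in the statement. The main obstacle I anticipate is the clean resolution of the self-referential inequality without inflating constants, together with careful handling of the change of measure from $p_k$-weighted errors to realized actions. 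The triangular-discrimination / Hellinger form of the small-loss lemma is what ultimately makes the whole argument work: since it scales multiplicatively with $(\bar p + \bar q)$ rather than being $O(1)$ as in a plain $L_2$-regression bound, it transforms an MLE estimation error of size $\op{Regret}_{\log}(K)$ into a mean-error sum of size $\sqrt{C^\star\op{Regret}_{\log}(K)}$, yielding the $\sqrt{AC^\star\,\op{Regret}_{\log}(K)}$ leading term.
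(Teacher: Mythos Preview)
Your proposal contains the right core ingredient---the small-loss inequality $|\bar f-\bar g|\le\sqrt{(\bar f+\bar g)\,D_\triangle(f\Mid g)}$---but you apply it at the wrong stage, and two supporting claims in Step~1 are incorrect.

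First, the per-round ReIGW inequality you need (Theorem~4 of \citet{foster2021efficient}) already has the \emph{ratio} form: for $\gamma\ge 2A$,
\[
\sum_a p(a)\bigl(f(a)-f(a^\star)\bigr)\;\le\;\frac{5A}{\gamma}\sum_a p(a)f(a)\;+\;7\gamma\sum_a p(a)\,\frac{(\hat f(a)-f(a))^2}{\hat f(a)+f(a)}.
\]
This is neither the plain squared-error form you write in Step~1, nor can your ``add-and-subtract $\bar f^{(k)}$'' argument produce a $\gamma\cdot(\text{squared error})$ term---that route yields only \emph{absolute} errors $|\bar f^{(k)}-\bar C|$. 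The exploration term is $\frac{5A}{\gamma}\mathbb{E}_{p_k}[\bar C(x_k,a)]$, not $\frac{A}{\gamma}\bar C(x_k,\pi^\star(x_k))$ directly. Second, your claim that $\gamma\ge 10A$ forces $p_k(a)=\Omega(1/\gamma)$ is false for ReIGW: from its definition, $p_k(a)=\hat f(b)/\bigl(A\hat f(b)+\gamma(\hat f(a)-\hat f(b))\bigr)\to 0$ as $\hat f(b)\to 0$. This undermines the change-of-measure justification you give; the correct tool is a multiplicative Azuma step relating $\sum_k\mathbb{E}_{p_k}[H^2]$ to the realized $\sum_k H^2(f^{(k)}(x_k,a_k),C(x_k,a_k))$, which introduces no factor of $A$.

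The paper's proof avoids all of the complications you anticipate because the ratio $\frac{(\bar f^{(k)}-\bar C)^2}{\bar f^{(k)}+\bar C}$ in the ReIGW bound is controlled \emph{pointwise} by $D_\triangle\bigl(f^{(k)}(x_k,a)\Mid C(x_k,a)\bigr)$ via \eqref{eq:tri-disc-ineq-1}---no Cauchy--Schwarz over $k$ and no square-root self-referential inequality are needed. After $D_\triangle\le 4H^2$, multiplicative Azuma, and the online-MLE concentration, one has $\sum_k\mathbb{E}_{p_k}[\bar C(x_k,a)-\bar C(x_k,\pi^\star(x_k))]\le\frac{5A}{\gamma}\sum_k\mathbb{E}_{p_k}[\bar C(x_k,a)]+O\bigl(\gamma(\op{Regret}_{\log}(K)+\log(1/\delta))\bigr)$. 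Since $5A/\gamma\le 1/2$, moving the $\frac{5A}{\gamma}$ fraction of regret to the left is a one-line \emph{linear} rearrangement, leaving $\frac{A}{\gamma}(C^\star+\log(1/\delta))$ to be balanced against $\gamma(\op{Regret}_{\log}(K)+\log(1/\delta))$ by the stated $\gamma$.
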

The dominant term scales with the optimal sum of costs $\sqrt{C^\star}$ which shows that \cb{} obtains small-loss regret. \cb{} is also computationally efficient since each episode simply requires computing the ReIGW.
FastCB is the only other computationally efficient CB algorithm with small-loss regret \citep[Theorem 1]{foster2021efficient}.
Our bound matches that of FastCB in terms of dependence on $A,C^\star$ and $\log(1/\delta)$.
Our key difference with FastCB is the online supervised learning oracle:
in \cb{}, we aim to learn the conditional cost distribution by maximizing log-likelihood,
while FastCB aims to perform regression with the binary cross-entropy loss.
In \cref{sec:experiments}, we find that \cb{} empirically outperforms  SquareCB and FastCB in three challenging CB tasks, which reinforces the practical benefits of distribution learning in CB setting.

\subsection{Proof Sketch}
First, apply the per-round inequality for ReIGW \citep[Theorem 4]{foster2021efficient} to get,
\begin{equation*}
    \op{Regret}_{\text{DistCB}}(K)\lesssim \sum_{k=1}^K \EE_{a_k\sim p_k}\Biggl[\frac{A}{\gamma} \bar C(s_k,a_k) + \gamma \underbrace{\frac{\prns{\bar f^{(k)}(s_k,a_k)-\bar C(s_k,a_k)}^2}{\bar f^{(k)}(s_k,a_k)+\bar C(s_k,a_k)}}_{\bigstar} \Biggr].
\end{equation*}
For any distributions $f,g\in\Delta([0,1])$, their triangular
discrimination\footnote{Triangular discrimination is also known as Vincze-Le Cam divergence \citep{vincze1981concept,le2012asymptotic}.} is defined as $D_\triangle(f\Mid g) := \int \frac{\prns{f(y)-g(y)}^2}{f(y)+g(y)}\diff\lambda(y)$.
The key insight is that $\bigstar$ can be bounded by the triangular discrimination of $f^{(k)}(s_k,a_k)$ and $C(s_k,a_k)$:
by Cauchy-Schwartz and $y^2\leq y$ for $y\in[0,1$], we have $\bar f-\bar g= \int y\prns{ f(y)-g(y) }\diff\lambda(y) \leq \sqrt{ \int y \prns{f(y)+g(y)}\diff\lambda(y) } \sqrt{ \int \frac{\prns{f(y)-g(y)}^2}{f(y)+g(y)}\diff\lambda(y) }$, and hence,
\begin{equation}
    \abs{\bar f-\bar g} \leq \sqrt{ \prns{ \bar f + \bar g } D_\triangle\prns{ f\Mid g } }. \tag{$\triangle_1$} \label{eq:tri-disc-ineq-1}
\end{equation}
So, \cref{eq:tri-disc-ineq-1} implies that $\bigstar$ is bounded by $D_\triangle(f^{(k)}(s_k,a_k)\Mid C(s_k,a_k))$. Since $D_\triangle$ is equivalent (up to universal constants) to the squared Hellinger distance, \citet[Lemma A.14]{foster2021statistical} implies the above can be bounded by the online MLE regret, so w.p. at least $1-\delta$, we have
\begin{equation*}\textstyle
    \op{Regret}_{\text{DistCB}}(K)\lesssim \sum_{k=1}^K \frac{A}{\gamma} \prns{\bar C(s_k,a_k)+\log(1/\delta)} + \gamma\prns{ \op{Regret}_{\log}(K) + \log(1/\delta) }.
\end{equation*}
From here, we just need to rearrange terms and set the correct $\gamma$.
\cref{app:proofs-distcb} contains the full proof.

\section{Small-Loss Bounds for Online Distributional RL}\label{sec:online-rl}

\begin{algorithm}[t!]
\caption{\tb{O}ptimistic \tb{Dis}tributional \tb{C}onfidence set \tb{O}ptimization (\online{})}
\label{alg:onlinerl}
\begin{algorithmic}[1]
    \State\textbf{Input:} number of episodes $K$, distribution class $\Fcal$, threshold $\beta$.
    \State Initialize $\Dcal_{h,0}\gets\emptyset$ for all $h\in[H]$, and set $\Fcal_0 = \Fcal$.
    \For{episode $k=1,2,\dots,K$}
        \State Set optimistic estimate $f^{(k)} = \argmin_{f\in\Fcal_{k-1}} \min_a \bar f_1(x_1,a)$. \label{line:distgolf-argmin-fk}
        \State Set $\pi^k_h(x) = \argmin_a\bar f^{(k)}_{h}(x,a)$.
        \State \multiline{Roll out $\pi^k$ and obtain a trajectory $x_{1,k},a_{1,k},c_{1,k},\dots,x_{H,k},a_{H,k},c_{H,k}$. \\ For each $h\in[H]$, augment the dataset $\Dcal_{h,k} = \Dcal_{h,k-1}\cup\braces{(x_{h,k},a_{h,k},c_{h,k},x_{h+1,k})}$. }\label{line:distgolf-execute}
        \State \multiline{For all $(h,f)\in[H]\times\Fcal$, sample $y_{h,i}^f \sim f_{h+1}(x_{h,i}',a')$ and $a'=\argmin_a \bar f_{h+1}(x_{h,i}',a)$, where $(x_{h,i},a_{h,i},c_{h,i},x_{h,i}')$ is the $i$-th datapoint of $\Dcal_{h,k}$.
        Then, set $z_{h,i}^f=c_{h,i}+y_{h,i}^f$ and define the confidence set} \label{line:distgolf-confidence-set}
        \begin{equation*}
            \Fcal_k = \braces{ f\in\Fcal: \sum_{i=1}^k\log f_h(z_{h,i}^f\mid x_{h,i},a_{h,i})\geq \max_{g\in\Fcal_h}\sum_{i=1}^k\log g(z_{h,i}^f\mid x_{h,i},a_{h,i})-7\beta, \forall h\in[H] }.
        \end{equation*}
    \EndFor
    \State \textbf{Output:} $\bar\pi = \op{unif}(\pi^{1:K})$.
\end{algorithmic}
\end{algorithm}

We now extend our insights to the online RL setting and propose a DistRL perspective on GOLF \citep{jin2021bellman}.
While GOLF constructs confidence sets of near-minimizers of the squared Bellman error loss, we propose to construct these confidence sets using near-maximizers of the log-likelihood loss to approximate MLE.
To leverage function approximation for learning conditional distributions, we use a generic function class $\Fcal\subseteq(\Xcal\times\Acal\to\Delta([0,1]))^H$ where each element $f\in\Fcal$ is a tuple $f=(f_1,\dots,f_H)$ such that each $f_h$ is a candidate estimator for $Z^\star_h$, the distribution of loss-to-go $\sum_{t=h}^Hc_t$ under $\pi^\star$. For notation, $f_{H+1}(x,a)=\delta_0$ denotes the dirac at zero for all $x,a$.

We now present our \tb{O}ptimistic \tb{Dis}tributional \tb{C}onfidence set \tb{O}ptimization (\online{}) algorithm in \cref{alg:onlinerl}, consisting of three key steps per episode.
At episode $k\in[K]$, \online{} first identifies the $f^{(k)}$ with the minimal expected value at $h=1$ over the previous confidence set $\Fcal_{k-1}$ (\cref{line:distgolf-argmin-fk}). This step induces \emph{global optimism}.
Then, \online{} collects data for this episode by rolling in with the greedy policy $\pi^k$ with respect to the mean of $f^{(k)}$ (\cref{line:distgolf-execute}).
Finally, \online{} constructs a confidence set $\Fcal_k$ by including a function $f$ if it exceeds a threshold on the log-likelihood objective using data $z_{h,i}^f\sim \Tcal_h^{\star,D}f_{h+1}(x_{h,i},a_{h,i})$ for all steps $h$ simultaneously (\cref{line:distgolf-confidence-set}). This step is called \emph{local fitting}, as each $f\in\Fcal_k$ has the property that $f_h$ is close-in-distribution to $\Tcal_h^{\star, D}f_{h+1}$ for all $h$.
We highlight that \online{} only learns the distribution for estimating the mean, \ie, \cref{line:distgolf-argmin-fk,line:distgolf-execute} only use the mean $\bar f$.
This seemingly roundabout way of estimating the mean is exactly how distributional RL algorithms such as C51 differ from the classic DQN.

To ensure that MLE succeeds for the Temporal-Difference (TD) style confidence sets, we need the following distributional Bellman Completeness (BC) condition introduced in \citet{wu2023distributional}.
\begin{restatable}[Bellman Completeness]{assumption}{policyDistBC}\label{ass:policy-dist-bellman-completeness}
For all $\pi,h\in[H]$, $f_{h+1}\in\Fcal_{h+1}\implies \Tcal^{\pi,D}_hf_{h+1}\in\Fcal_h$.
\end{restatable}

\subsection{The $\ell_1$ Distributional Eluder Dimension}
We now introduce the $\ell_1$ distributional eluder dimension. Let $\Scal$ be an abstract input space, let $\Psi$ be a set of functions mapping $\Scal\to\RR$ and let $\Dcal$ be a set of distributions on $\Scal$.
\begin{definition}[$\ell_p$-distributional eluder dimension]
For any function class $\Psi\subseteq\Scal\to\RR$, distribution class $\Dcal\subseteq\Delta(\Scal)$ and $\eps > 0$, the $\ell_p$-distributional eluder dimension (denoted by $\DE_p(\Psi,\Dcal,\eps)$) is the length $L$ of the longest sequence $d^{(1)},d^{(2)},\dots,d^{(L)}\subseteq\Dcal$ such that there exists $\eps'\geq\eps$, such that for all $t\in[L]$, we have that there exists $f\in\Psi$ such that $\abs{\EE_{d^{(t)}} f}>\eps$ and also $\sum_{i=1}^{t-1}\abs{\EE_{d^{(i)}}f}^p\leq \eps^p$.
\end{definition}
When $p=2$, this is exactly the $\ell_2$ distributional eluder of \citet[Definition 7]{jin2021bellman}. We're particularly interested in the $p=1$ case, which can be used with MLE's generalization bounds. The following is a key pigeonhole principle for the $\ell_1$ distributional eluder dimension.
\begin{restatable}{theorem}{eluderPigeonhole}\label{thm:eluder-pigeonhole}
Let $C := \sup_{d\in\Dcal,f\in\Psi}\abs{\EE_d f}$ be the envelope.
Fix any $K\in\NN$ and sequences $f^{(1)},\dots,f^{(K)}\subseteq \Psi$, $d^{(1)},\dots,d^{(K)}\subseteq\Dcal$.
Let $\beta$ be a constant such that for all $k\in[K]$, we have,
$
    \sum_{i=1}^{k-1}\abs{\EE_{d^{(i)}}f^{(k)}} \leq \beta.
$
Then, for all $k\in[K]$, we have
\begin{equation*}
    \sum_{t=1}^k\abs{\EE_{d^{(t)}}f^{(t)}}\leq \inf_{0<\eps\leq 1}\braces{ \DE_1(\Psi,\Dcal,\eps)(2C + \beta\log(C/\eps)) + k\eps }.
\end{equation*}
\end{restatable}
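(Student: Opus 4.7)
The plan is to adapt the classical Russo--Van Roy pigeonhole argument for the $\ell_2$ eluder dimension to our $\ell_1$ setting, via a bucketing scheme combined with a layer-cake decomposition across geometric scales. Throughout, write $x_t := |\EE_{d^{(t)}} f^{(t)}|$ and fix $\eps \in (0,1]$ to be optimized at the end. Split $\sum_{t=1}^k x_t$ into the ``small'' rounds $\{t : x_t \leq \eps\}$, whose total contribution is at most $k\eps$, and the ``large'' rounds $\{t : x_t > \eps\}$, which we handle via the eluder structure.

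The core step is the following scale-$\alpha$ counting lemma: for every $\alpha \geq \eps$, $N(\alpha) := |\{t \leq k : x_t > \alpha\}| \leq (\beta/\alpha + 1)\, \DE_1(\Psi,\Dcal,\alpha)$. To prove it, enumerate the large rounds in order $t_1 < t_2 < \dots < t_{N(\alpha)}$ and greedily assign each $t_j$ to buckets $B_1, B_2, \dots$: place $t_j$ into the lowest-indexed bucket $B_l$ such that $\sum_{t_i \in B_l,\, i < j} |\EE_{d^{(t_i)}} f^{(t_j)}| \leq \alpha$, and otherwise open a new bucket. Within any single bucket, the sequence of $d^{(t_i)}$'s together with witnesses $f^{(t_j)}$'s satisfies the $\ell_1$ eluder condition at scale $\eps' = \alpha$, so each bucket has cardinality at most $\DE_1(\Psi,\Dcal,\alpha)$. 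Whenever $t_j$ forces the opening of the $(L+1)$-st bucket, each of the $L$ existing buckets rejected $t_j$, contributing more than $\alpha$ to $\sum_{i < t_j} |\EE_{d^{(i)}} f^{(t_j)}|$; summing gives $L\alpha < \beta$ by the theorem's hypothesis, hence the number of buckets is at most $\lfloor \beta/\alpha \rfloor + 1$.

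Given this counting lemma, monotonicity of $\DE_1(\Psi,\Dcal,\cdot)$ in its scale argument lets us replace $\DE_1(\Psi,\Dcal,\alpha)$ by $\DE_1(\Psi,\Dcal,\eps)$ for $\alpha \geq \eps$. The layer-cake identity
\begin{equation*}
\sum_{t : x_t > \eps} x_t \;=\; \eps N(\eps) + \int_\eps^{C} N(\alpha)\, \mathrm{d}\alpha
\end{equation*}
then reduces the problem to an elementary integration: substituting $N(\alpha) \leq (\beta/\alpha + 1)\DE_1(\Psi,\Dcal,\eps)$ produces the $\beta \log(C/\eps)$ term from $\int_\eps^C \beta/\alpha\, \mathrm{d}\alpha$ and $O(C)$ boundary terms from $\eps N(\eps)$ plus $\int_\eps^C \mathrm{d}\alpha$. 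Adding back the small-round contribution $k\eps$ and taking the infimum over $\eps$ yields the claim.

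The main obstacle is the bucketing construction: one must verify carefully that each bucket really forms an eluder-independent sequence in the sense of the definition (with a consistent witness scale $\eps' = \alpha$, as required by the existential quantifier), and that the ``new bucket'' condition cleanly converts the hypothesis $\sum_{i<t} |\EE_{d^{(i)}} f^{(t)}| \leq \beta$ into a bound on the number of buckets. The rest, including the scale monotonicity of $\DE_1$ and the layer-cake integration, is routine once the counting lemma is in hand.
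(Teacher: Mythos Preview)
Your approach is essentially the paper's proof: a counting lemma $N(\alpha)\leq(\beta/\alpha+1)\DE_1(\Psi,\Dcal,\alpha)$ obtained by greedy bucketing, followed by layer-cake integration with the monotonicity of $\DE_1$ in its scale. The paper organizes the counting step slightly differently (via an auxiliary quantity $L(\nu,\Gamma,\eps)$ counting disjoint $\eps$-dependent subsets, split into three ``Facts''), but the combinatorics is the same pigeonhole; your direct bucketing is arguably more transparent.

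One small accounting slip to fix when you write it up: the boundary term $\eps N(\eps)$ is not $O(C)$. From your own counting lemma, $\eps N(\eps)\leq\eps(\beta/\eps+1)\DE_1=(\beta+\eps)\DE_1$, and $\beta$ is not controlled by $C$ (in the applications $C\leq 2$ while $\beta$ is a log-cardinality). As written you would land on $\DE_1\cdot\beta(1+\log(C/\eps))$ instead of $\DE_1\cdot\beta\log(C/\eps)$. The remedy is trivial: tighten the small-round bound to $(k-N(\eps))\eps$ so that it exactly cancels $\eps N(\eps)$, or (as the paper does) skip the small/large split entirely and write $\sum_t x_t=\int_0^C N(\alpha)\,\mathrm{d}\alpha\leq k\eps+\int_\eps^C N(\alpha)\,\mathrm{d}\alpha$, which never produces the stray $\eps N(\eps)$ term.
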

As we'll see later, \cref{thm:eluder-pigeonhole} is the key tool that transfers triangular discrimination guarantees on the training distribution to any new test distribution.
Another key property is that the $\ell_1$ dimension generalizes the original $\ell_2$ dimension of \citet{jin2021bellman}.
\begin{restatable}{lemma}{eluderOneGeneralizesTwo}\label{lem:eluder-one-generalizes-two}
For any $\Psi,\Dcal$ and $\eps>0$, we have $\DE_1(\Psi,\Dcal,\eps)\leq\DE_2(\Psi,\Dcal,\eps)$.
\end{restatable}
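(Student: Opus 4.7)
}
The plan is to show that any admissible $\ell_1$-eluder sequence is automatically an admissible $\ell_2$-eluder sequence, with the same witnesses and the same threshold, so the length of the longest such sequence can only grow when we relax $p=1$ to $p=2$. Concretely, let $L = \DE_1(\Psi,\Dcal,\eps)$ and let $d^{(1)},\dots,d^{(L)} \subseteq \Dcal$ be a realizing sequence. By definition, there exists $\eps' \geq \eps$ and, for every $t \in [L]$, a witness $f^{(t)} \in \Psi$ satisfying $\abs{\EE_{d^{(t)}} f^{(t)}} > \eps'$ together with the cumulative $\ell_1$ bound $\sum_{i=1}^{t-1} \abs{\EE_{d^{(i)}} f^{(t)}} \leq \eps'$.

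The key observation is a purely numerical inequality: if $a_1,\dots,a_{t-1} \geq 0$ satisfy $\sum_i a_i \leq \eps'$, then in particular each $a_i \leq \eps'$, and hence $a_i^2 \leq \eps' \cdot a_i$. Applying this with $a_i = \abs{\EE_{d^{(i)}} f^{(t)}}$ and summing gives
\begin{equation*}
    \sum_{i=1}^{t-1} \abs{\EE_{d^{(i)}} f^{(t)}}^2 \;\leq\; \eps' \sum_{i=1}^{t-1} \abs{\EE_{d^{(i)}} f^{(t)}} \;\leq\; (\eps')^2.
\end{equation*}
Combined with the preserved lower bound $\abs{\EE_{d^{(t)}} f^{(t)}} > \eps'$, this shows that the same sequence $d^{(1)},\dots,d^{(L)}$, with the same witness functions $f^{(t)}$ and the same threshold $\eps' \geq \eps$, satisfies the defining property of an $\ell_2$-eluder sequence at scale $\eps$. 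Therefore $L \leq \DE_2(\Psi,\Dcal,\eps)$, which is the claimed inequality.

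There is essentially no obstacle here: the lemma is a one-line monotonicity statement, and its content is really just the elementary fact that a uniformly bounded vector whose $\ell_1$-norm is at most $\eps'$ also has $\ell_2$-norm at most $\eps'$. The only thing worth being careful about is that the witness $f^{(t)}$ and the threshold $\eps'$ are existentially quantified in the definition, so one must verify that \emph{the same} choices transfer from the $\ell_1$ to the $\ell_2$ condition — which they do, by the argument above.
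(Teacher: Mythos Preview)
Your proposal is correct and follows essentially the same approach as the paper: both show that any $\ell_1$-eluder witnessing sequence (with the same threshold $\eps'$ and witnesses $f^{(t)}$) is automatically an $\ell_2$-eluder witnessing sequence. The paper invokes the norm inequality $\sqrt{\sum_i x_i^2}\leq\sum_i|x_i|$ directly, whereas you spell out the same fact via $a_i^2\leq\eps' a_i$; these are the same argument.
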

Finally, we note that our distributional eluder dimension generalize the regular $\ell_1$ eluder from \citet{liu2022partially}, which can be seen by taking $\Dcal$ to be dirac distributions.

\subsection{Small-Loss Bounds for \online{}}
We will soon prove small-loss regret bounds with the ``Q-type'' dimension, where ``Q-type'' refers to the fact that $\Scal=\Xcal\times\Acal$. While low-rank MDPs are not captured by the ``Q-type'' dimension, they are captured by the ``V-type'' dimension where $\Scal=\Xcal$ \citep{jin2021bellman,du2021bilinear}. For PAC bounds with the V-type dimension, we need to slightly modify the data collection process in \cref{line:distgolf-execute} with \emph{uniform action exploration} (UAE). Instead of executing $\pi^k$ for a single trajectory, partially roll-out $\pi^k$ for $H$ times where for each $h\in[H]$, we collect $x_{h,k}\sim d^{\pi^k}_h$, take a random action $a_{h,k}\sim \op{unif}(\Acal)$, observe $c_{h,k}\sim C_h(x_{h,k},a_{h,k}),x_{h,k}'\sim P_h(x_{h,k},a_{h,k})$ and augment the dataset $\Dcal_{h,k}=\Dcal_{h,k-1}\cup\{(x_{h,k},a_{h,k},c_{h,k},x_{h,k}')\}$. The modified algorithm is detailed in \cref{app:omitted-algs}.

We lastly need to define the function and distribution classes measured by the distributional eluder dimension. The Q-type classes are $\Dcal_h = \braces{(x,a)\mapsto d^\pi_h(x,a): \pi\in\Pi}$ and $\Psi_h=\braces{(x,a)\mapsto D_\triangle(f(x,a)\Mid \Tcal^{\star,D}f(x,a)): f\in\Fcal}$. Similarly, the V-type classes are $\Dcal_{h,v}=\braces{x\mapsto d^\pi_h(x):\pi\in\Pi}$ and $\Phi_{h,v}=\braces{ x\mapsto \EE_{a\sim \op{Unif}(\Acal)}[D_\triangle(f(x,a)\Mid \Tcal^{\star,D}f(x,a))] : f\in\Fcal }$.
Finally, define $\DE_1(\eps) = \max_h \DE_1(\Psi_h,\Dcal_h,\eps)$ and $\DE_{1,v}(\eps) = \max_h \DE_1(\Psi_{h,v},\Dcal_{h,v},\eps)$.
\begin{restatable}{theorem}{onlineRLGeneral}\label{thm:online-general}
Suppose DistBC holds (\cref{ass:policy-dist-bellman-completeness}). For any $\delta\in(0,1)$, w.p. at least $1-\delta$, running \online{} with $\beta = \log(HK|\Fcal|/\delta)$ guarantees the following regret bound,
\begin{align*}
\op{Regret}_{\online{}}(K)\leq 160H\sqrt{KV^\star \DE_1(1/K)\log(K) \beta} + 18000H^2\DE_1(1/K)\log(K)\beta.
\end{align*}
If $\textsc{UAE}=\textsc{True}$ (\cref{alg:onlinerl_appendix}), then the learned mixture policy $\bar\pi$ is guaranteed to satisfy,
\begin{align*}
    V^{\bar\pi}-V^\star\leq 160H\sqrt{\frac{AV^\star \DE_{1,v}(1/K)\log(K) \beta}{K}} + \frac{18000H^2A\DE_{1,v}(1/K)\log(K)\beta}{K}.
\end{align*}
\end{restatable}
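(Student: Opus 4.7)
The plan is to prove the regret bound via a self-bounding inequality, combining MLE-based optimism, the triangular-discrimination decomposition used in the warm-up, and the $\ell_1$ distributional eluder pigeonhole (\cref{thm:eluder-pigeonhole}). First, under realizability and distributional Bellman completeness (\cref{ass:policy-dist-bellman-completeness}), a standard MLE concentration argument with threshold $\beta=\log(HK|\Fcal|/\delta)$ yields a high-probability event on which (i) $Z^\star\in\Fcal_k$ for all $k$, so the global optimism $\bar f^{(k)}_1(x_1,\pi^k_1(x_1))\leq V^\star$ holds, and (ii) every $f\in\Fcal_k$ satisfies
\begin{equation*}
\sum_{i=1}^{k-1}\EE_{d^{\pi^i}_h}\left[D_\triangle\prns{f_h(x,a)\Mid \Tcal^{\star,D}_h f_{h+1}(x,a)}\right] \lesssim \beta, \quad \forall h\in[H],
\end{equation*}
via the equivalence of squared Hellinger and triangular discrimination together with a martingale concentration to pass from the empirical Hellinger loss at $(x_{h,i},a_{h,i})$ to its expectation under $d^{\pi^i}_h$. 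Note BC is essential here: it guarantees $\Tcal^{\star,D}_h f_{h+1}\in\Fcal_h$, so the log-likelihood oracle really is fitting toward the Bayes-optimal density when the targets $z^f_{h,i}$ are drawn according to $\Tcal^{\star,D}_h f_{h+1}$.

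Since $\pi^k$ is greedy with respect to $\bar f^{(k)}$ and $\overline{\Tcal^{\star,D}_h f^{(k)}_{h+1}}=\Tcal^\star_h\bar f^{(k)}_{h+1}$, performance difference gives
\begin{equation*}
V^{\pi^k}-\bar f^{(k)}_1(x_1,\pi^k_1(x_1)) = \sum_{h=1}^H \EE_{\pi^k}\left[\Tcal^\star_h\bar f^{(k)}_{h+1}(x_h,a_h)-\bar f^{(k)}_h(x_h,a_h)\right],
\end{equation*}
so by optimism the regret $R$ is bounded by $\sum_{k,h}\EE_{\pi^k}[|\Tcal^\star_h\bar f^{(k)}_{h+1}-\bar f^{(k)}_h|]$. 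Applying the warm-up's triangular discrimination inequality pointwise and then two rounds of Cauchy-Schwarz (inner over $h$, outer over $k$) yields
\begin{equation*}
R \lesssim \sqrt{\textstyle\sum_{k,h}\EE_{\pi^k}[M_h^k]}\cdot\sqrt{\textstyle\sum_{k,h}\EE_{\pi^k}\left[D_\triangle\prns{f^{(k)}_h\Mid \Tcal^{\star,D}_h f^{(k)}_{h+1}}\right]},
\end{equation*}
where $M_h^k:=\bar f^{(k)}_h+\overline{\Tcal^{\star,D}_h f^{(k)}_{h+1}}$ plays the role of a variance proxy. The second factor is controlled by invoking \cref{thm:eluder-pigeonhole} at each $h$ with envelope $O(1)$ and $\eps=1/K$: condition (ii) supplies the pigeonhole hypothesis, yielding $\sum_k\EE_{d^{\pi^k}_h}[D_\triangle]\lesssim\DE_1(1/K)\beta\log K$ per step and hence $\sum_{k,h}\EE[D_\triangle]\lesssim H\DE_1(1/K)\beta\log K$.

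For the first factor, the identity $\EE_{\pi^k}[\Tcal^\star_h\bar f^{(k)}_{h+1}(x_h,a_h)]=\EE_{\pi^k}[c_h+\bar f^{(k)}_{h+1}(x_{h+1},a_{h+1})]$ telescopes to $\sum_h\EE_{\pi^k}[M_h^k]=V^{\pi^k}+2\sum_h V_h^k-V_1^k$, where $V_h^k:=\EE_{\pi^k}[\bar f^{(k)}_h(x_h,a_h)]$. Unrolling the Bellman error gives $V_h^k=\EE_{\pi^k}[\sum_{h'\geq h}c_{h'}]+\sum_{h'\geq h}\EE_{\pi^k}[\Tcal^\star_{h'}\bar f^{(k)}_{h'+1}-\bar f^{(k)}_{h'}]$; non-negativity of costs bounds the first term by $V^{\pi^k}$, hence $\sum_{k,h}\EE_{\pi^k}[M_h^k]\lesssim H\sum_k V^{\pi^k}+HR=H(KV^\star+R)+HR$. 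Assembling produces the self-bounding inequality $R\lesssim H\sqrt{(KV^\star+R)\DE_1(1/K)\beta\log K}$, and AM-GM to absorb the $R$ inside the square root yields the claimed bound. The main obstacle is precisely this self-bounding step: the triangular discrimination inequality is what lets the variance proxy $M_h^k$ be controlled by $V^{\pi^k}$ rather than the trivial $O(H)$ upper bound, which is the crux that produces a $V^\star$-dependent leading term instead of the usual $\sqrt{K}$ rate. The V-type PAC bound follows the same template using the UAE data-collection variant: Q-type distribution classes are replaced by state-only occupancies $\Dcal_{h,v}$ at the cost of an extra $A$ factor from uniform-action importance weighting, and division by $K$ converts the regret bound on the uniform mixture $\bar\pi$ into the stated PAC bound.
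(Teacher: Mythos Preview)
Your overall architecture---MLE concentration yielding optimism and small in-sample triangular discrimination, performance difference, the $\triangle$-inequality, a self-bounding step, and the eluder pigeonhole (\cref{thm:eluder-pigeonhole})---matches the paper's. The gap is in how you bound the variance proxy $\sum_h\EE_{\pi^k}[M_h^k]$. First, the sign in your unrolling is off: PDL gives $V_h^k = \EE_{\pi^k}\bigl[\sum_{h'\geq h}c_{h'}\bigr] - \sum_{h'\geq h}\EE_{\pi^k}\bigl[\Tcal^\star_{h'}\bar f^{(k)}_{h'+1}-\bar f^{(k)}_{h'}\bigr]$, not plus. More seriously, the subsequent ``hence $\sum_{k,h}\EE_{\pi^k}[M_h^k]\lesssim H\sum_k V^{\pi^k}+HR$'' is unjustified: summing over $h$ produces a \emph{weighted} sum $\sum_t t\cdot\EE_{\pi^k}\bigl[\Tcal^\star_t\bar f^{(k)}_{t+1}-\bar f^{(k)}_t\bigr]$ of signed Bellman errors, and this cannot be bounded by $H$ times the per-episode regret $R_k=\sum_t\EE_{\pi^k}\bigl[\Tcal^\star_t\bar f^{(k)}_{t+1}-\bar f^{(k)}_t\bigr]$ unless each term is non-negative, which is not guaranteed. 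As written you have no control on $V_h^k$ beyond the trivial $V_h^k\leq 1$, which loses the $V^\star$ dependence entirely.

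The paper sidesteps this via a pointwise inductive self-bounding lemma (\cref{lem:self-bounding}): using \cref{eq:tri-disc-ineq-2} and AM-GM at each step, one proves $\bar f^{(k)}_h(x_h,a_h)\leq eQ^{\pi^k}_h(x_h,a_h)+4H\sum_{t\geq h}\EE_{\pi^k,x_h,a_h}[\delta_{t,k}(x_t,a_t)]$, where the correction involves the \emph{non-negative} $\delta_{t,k}=D_\triangle\bigl(f^{(k)}_t\Mid\Tcal^{\star,D}_t f^{(k)}_{t+1}\bigr)$ rather than signed Bellman residuals, so no sign issue arises. Plugging this in after \cref{eq:tri-disc-ineq-2} yields $V^{\pi^k}-V^\star\lesssim\sqrt{V^{\pi^k}+H\Delta_k}\cdot\sqrt{H\Delta_k}$ with $\Delta_k=\sum_h\EE_{\pi^k}[\delta_{h,k}]$; a per-episode AM-GM converts $V^{\pi^k}$ to $V^\star$ \emph{before} summing over $k$, so only $\sum_k\Delta_k$ remains, which \cref{thm:eluder-pigeonhole} handles directly.
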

Compared to prior bounds for GOLF \citep{jin2021bellman}, the leading $\sqrt{K}$ terms in our bounds enjoy the same sharp dependence in $H,K$ and the eluder dimension. Our bounds further enjoy one key improvement: the leading terms are multiplied with the instance-dependent optimal cost $V^\star$, giving our bounds the \emph{small-loss} property.
For example, if $V^\star\leq\Ocal(1/\sqrt{K})$, then our regret bound converges at a fast $\Ocal(H^2\DE_1(1/K)\log(K)\beta)$ rate.
While there are existing first-order bounds in online RL, our bound significantly improves on their generality.
For example, \citet{zanette2019tighter,jin2020reward,wagenmaker2022first} used Bernstein bonuses that scale with the conditional variance and showed that careful analysis can lead to ``small-return'' bounds in tabular and linear MDPs.
However, ``small-return'' bounds do not imply ``small-loss'' bounds and ``small-loss'' bounds are often harder to obtain\footnote{In \cref{sec:small-reward}, we show a slight modification of our approach also yields ``small-return'' bounds.}. While it is possible that surgical analysis with variance bonuses can lead to small-loss bounds in tabular and linear MDPs, this approach may not scale to settings with non-linear function approximation such as low-rank MDPs.

\paragraph{On Bellman Completeness}
Exponential error amplification can occur in online and offline RL under only realizability of $Q$ functions \citep{wang2021what,wang2021instabilities,wang2021exponential,foster2022offline}. With only realizability, basic algorithms such as TD and Fitted-$Q$-Evaluation (FQE) can diverge or converge to bad fixed point solutions  \citep{tsitsiklis1996analysis,munos2008finite,kolter2011fixed}. As a result, BC has risen as a \emph{de facto} sufficient condition for sample efficient RL \citep{chang2022learning,xie2021bellman,zanette2021provable}.
Finally, we highlight that our method can be easily extended to hold under \emph{generalized completeness}, \ie, there exist function classes $\Gcal_h$ such that $f_{h+1}\in\Fcal_{h+1}\implies \Tcal_h^{\pi,D}f_{h+1}\in\Gcal_h$ \citep[as in][Assumption 14]{jin2021bellman}. Simply replace $\max_{g\in\Fcal_h}$ in the confidence set construction with $\max_{g\in\Gcal_h}$. While adding functions to $\Fcal$ may break BC (as BC is not monotonic), we can always augment $\Gcal$ to satisfy generalized completeness.

\paragraph{Computational complexity}
When taken as is, OLIVE \citep{jiang2017contextual}, GOLF, and our algorithms are version space methods that suffer from a computational drawback: optimizing over the confidence set is NP-hard \citep{dann2018oracle}.
However, the confidence set is purely for deep exploration via optimism and can be replaced by other computationally efficient exploration strategies.
For example, $\eps$-greedy suffices in problems that don't require deep and strategic exploration, \ie, a large myopic exploration gap \citep{dann2022guarantees}. With $\eps$-greedy, a replay buffer, and discretization, our algorithm essentially recovers C51 \citep{bellemare2017distributional}.
We leave developing and analyzing computationally efficient algorithms based on our insights as promising future work.

\subsection{Instantiation with Low-Rank MDPs}
The low-rank MDP \citep{agarwal2020flambe} is a standard abstraction for non-linear function approximation used in theory \citep{uehara2021representation} and practice \citep{zhang2022making,chang2022learning}.
\begin{restatable}[Low-rank MDP]{definition}{lowRankMDPDef}\label{def:low-rank-mdp}
A transition model $P_h:\Xcal\times\Acal\to\Delta(\Xcal)$ has rank $d$
if there exist unknown features $\phi_h^\star:\Xcal\times\Acal\to\RR^d,\mu_h^\star:\Xcal\to\RR^d$ such that
$P_h(x'\mid x,a) = \phi_h^\star(x,a)^\top \mu_h^\star(x')$ for all $x,a,x'$.
Also, assume $\max_{x,a}\|\phi_h^\star(x,a)\|_2\leq 1$ and $\|\int g\diff\mu_h^\star\|_2\leq\|g\|_\infty \sqrt{d}$ for all functions $g:\Xcal\to\RR$.
The MDP is called low-rank if $P_h$ is low-rank for all $h\in[H]$.
\end{restatable}

We now specialize \cref{thm:online-general} to low-rank MDPs with three key steps.
First, we bound the V-type eluder dimension by $\DE_{1,v}(\eps)\leq\Ocal(d\log(d/\eps))$, which is a known result that we reproduce in \cref{thm:low-rank-mdp-eluder}.
The next step requires access to a realizable $\Phi$ class, \ie, for all $h\in[H]$, $\phi^\star_h\in\Phi$, which is a standard assumption for low-rank MDPs \citep{agarwal2020flambe,uehara2021representation,mhammedi2023efficient}.
Given the realizable $\Phi$, we can construct a specialized $\Fcal$ for the low-rank MDP: $\Fcal^{\op{lin}}=\Fcal_1^{\op{lin}}\times\dots\times\Fcal_H^{\op{lin}}\times\Fcal_{H+1}^{\op{lin}}$ where $\Fcal_{H+1}^{\op{lin}}=\braces{\delta_0}$ and for all $h\in[H]$,
\begin{align}\label{eq:linear-mdp-f-class}
    \mathcal{F}_h^{\text{lin}}=\bigg\{&
	f(z\mid x,a)=\big\langle\phi(x,a),w(z)\big\rangle
	\quad:\quad \phi\in\Phi, w:[0,1]\rightarrow\mathbb{R}^d, \\
	&\text{\; s.t.\; }
	\max_z \|w(z)\|_2\leq \alpha\sqrt{d}
	\text{\; and\; }
\max_{x,a,z}\big\langle\phi(x,a),w(z)\big\rangle\leq \alpha
	\bigg\}, \nonumber
\end{align}
where $\alpha:=\max_{h,\pi,z,x,a}Z^\pi_h(z\mid x,a)$ is the largest mass for the cost-to-go distributions. In \cref{sec:dist-bc-low-rank-mdps}, we show that $\Fcal^{\op{lin}}$ satisfies DistBC. Further, if costs are discretized into a uniform grid of $M$ points, its bracketing entropy is bounded by $\wt\Ocal\prns{dM + \log|\Phi|}$. Discretization is necessary to bound the statistical complexity of $\Fcal^{\op{lin}}$ and is common in practice, \eg, C51 and Rainbow both set $M=51$ which works well in Atari games \citep{bellemare2017distributional,hessel2018rainbow}.
\begin{theorem}\label{thm:online-lnnr-mdp-pac}
Suppose the MDP is low-rank. For any $\delta\in(0,1)$, w.p. at least $1-\delta$, running \online{} with \textsc{UAE}=\textsc{True} and with $\Fcal^{\op{lin}}$ as described above learns a policy $\bar\pi$ such that,
\begin{equation*}
    V^{\bar\pi}-V^\star\in\wt\Ocal\prns{ H\sqrt{\frac{Ad V^\star\prns{dM+\log(|\Phi|/\delta)}}{K}} + \frac{AdH^2\prns{dM+\log(|\Phi|/\delta)}}{K} }.
\end{equation*}
\end{theorem}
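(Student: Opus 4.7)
The plan is to specialize \cref{thm:online-general} to low-rank MDPs by checking DistBC for $\Fcal^{\op{lin}}$ and then bounding the two problem-dependent quantities appearing in its PAC bound: the V-type $\ell_1$ distributional eluder dimension $\DE_{1,v}(1/K)$ and the log-cardinality factor $\beta = \log(HK|\Fcal|/\delta)$. I use the UAE branch of \cref{thm:online-general} (since it gives a V-type PAC guarantee), which is what introduces the action-set size $A$ into the final rate.

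First I would verify DistBC (\cref{ass:policy-dist-bellman-completeness}) for $\Fcal^{\op{lin}}$. If $f_{h+1}(z\mid x,a) = \langle \phi(x,a), w(z)\rangle$ with $\phi\in\Phi$, then by the factorization $P_h(x'\mid x,a) = \langle \phi_h^\star(x,a), \mu_h^\star(x')\rangle$, the distributional Bellman backup
\begin{equation*}
\Tcal_h^{\pi,D} f_{h+1}(z\mid x,a) = \int C_h(c\mid x,a)\,\EE_{x'\sim P_h(x,a),\, a'\sim\pi(x')}\!\left[f_{h+1}(z-c\mid x',a')\right]\diff c
\end{equation*}
rewrites as $\langle \phi_h^\star(x,a), w'(z)\rangle$, where $w'(z)$ is obtained by convolving $C_h$ with $f_{h+1}$ and then integrating $a'\sim\pi(x')$ against $\mu_h^\star(x')$ (the cost component is absorbed into the linear model in the standard low-rank way). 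The only nontrivial step is showing that $w'$ inherits the constraints of \cref{eq:linear-mdp-f-class}; this uses $\|\int g\,\diff\mu_h^\star\|_2\leq\|g\|_\infty\sqrt{d}$ together with $\alpha = \max_{h,\pi,z,x,a}Z_h^\pi(z\mid x,a)$ to obtain $\max_z\|w'(z)\|_2\leq\alpha\sqrt{d}$ and pointwise boundedness by $\alpha$. This is the calculation referenced by the paper as \cref{sec:dist-bc-low-rank-mdps}.

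Second, I would bound the eluder term via \cref{lem:eluder-one-generalizes-two}, which gives $\DE_{1,v}(\eps)\leq\DE_{2,v}(\eps)$, and the standard low-rank MDP eluder bound (cited as \cref{thm:low-rank-mdp-eluder}) which yields $\DE_{2,v}(\eps)\in\Ocal(d\log(d/\eps))$. Hence $\DE_{1,v}(1/K)\in\wt\Ocal(d)$. Third, for $\beta$, I would bound the effective log-covering number of $\Fcal^{\op{lin}}$: after discretizing $z\in[0,1]$ onto the uniform $M$-point grid, each $w$ is a vector in $\RR^{dM}$ with bounded coordinates, while $\phi$ comes from the finite class $\Phi$. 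A standard $1/\op{poly}(K)$-net then gives $\log|\Fcal^{\op{lin}}|\in\wt\Ocal(dM + \log|\Phi|)$, so $\beta\in\wt\Ocal(dM + \log(|\Phi|/\delta))$.

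Substituting these two bounds into the UAE PAC guarantee of \cref{thm:online-general} and absorbing logarithmic factors into $\wt\Ocal$ directly yields the advertised rate. I expect the DistBC verification to be the main obstacle: tracking the boundedness and norm constraints on $w'$ through the distributional Bellman convolution, and in particular handling stochastic costs cleanly within the low-rank factorization, requires care. The eluder bound reduces to a black-box cite via \cref{lem:eluder-one-generalizes-two} and \cref{thm:low-rank-mdp-eluder}, and the covering calculation is routine.
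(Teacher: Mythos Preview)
Your proposal is correct and follows essentially the same route as the paper's own proof: verify DistBC for $\Fcal^{\op{lin}}$ (the paper does this in \cref{sec:dist-bc-low-rank-mdps}), bound $\DE_{1,v}(1/K)$ by $\wt\Ocal(d)$ via \cref{lem:eluder-one-generalizes-two} and \cref{thm:low-rank-mdp-eluder}, bound $\beta$ by $\wt\Ocal(dM+\log(|\Phi|/\delta))$ through the bracketing/covering calculation, and then plug into the UAE branch of \cref{thm:online-general}. Your caveat about ``handling stochastic costs cleanly within the low-rank factorization'' is well placed, since that is indeed the only delicate point in the DistBC verification.
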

\begin{proof}
As described above, we have $\DE_1(1/K)\leq\Ocal(d\log(dK))$ and $\beta=\log(HK/\delta)+dM + \log|\Phi|$. Since DistBC is satisfied by $\Fcal^{\op{lin}}$, plugging into \cref{thm:online-general} gives the result.
\end{proof}
This is the first small-loss bound for low-rank MDPs, and for online RL with non-linear function approximation in general.
Again when $V^\star\leq\wt\Ocal(1/K)$, \online{} has a fast $\wt\Ocal(1/K)$ convergence rate which improves over all prior results that converge at a slow $\wt\Omega(1/\sqrt{K})$ rate \citep{uehara2021representation}.

\subsection{Proof Sketch of \cref{thm:online-general}}\label{sec:proof-sketch-online}
By DistBC (\cref{ass:policy-dist-bellman-completeness}),
we can deduce two facts about the construction of $\Fcal_k$: (i) $Z^\star\in\Fcal_k$, and
(ii) elements of $\Fcal_k$ almost satisfy the distributional Bellman equation, \ie, for all $h\in[H]$, we have $\sum_{i=1}^{k}\Eb[\pi^i]{\delta_{h,k}(x_h,a_h)}\leq \Ocal(\beta)$ where $\delta_{h,k}(x_h,a_h) = D_\triangle(f_h^{(k)}(x_h,a_h)\Mid\Tcal_h^{\star,D}f_{h+1}^{(k)}(x_h,a_h))$.
Next, we derive a corollary of \cref{eq:tri-disc-ineq-1}:
\begin{equation}
    \abs{\bar f-\bar g}\leq \sqrt{4\bar{g} + D_\triangle(f\Mid g)}\cdot \sqrt{D_\triangle(f\Mid g)}. \tag{$\triangle_2$}\label{eq:tri-disc-ineq-2}
\end{equation}
To see why this is true, apply AM-GM to \cref{eq:tri-disc-ineq-1} to get $2(\bar f-\bar g)\leq \bar f+\bar g + D_\triangle(f\Mid g)$, which simplifies to $\bar f\leq 3\bar g + D_\triangle(f\Mid g)$. Plugging this back into \cref{eq:tri-disc-ineq-1} yields \cref{eq:tri-disc-ineq-2}.
Then, by iterating \cref{eq:tri-disc-ineq-2} and AM-GM, we derive a self-bounding lemma: for any $f,\pi,h$, we have $\bar f_h(x_h,a_h)\lesssim Q^{\pi}_h(x_h,a_h)+H\sum_{t=h}^H\EE_{\pi,x_h,a_h}[D_\triangle(f_t(x_t,a_t)\Mid \Tcal_h^{\pi,D}f_{t+1}(x_t,a_t))]$ (\cref{lem:self-bounding}).
Since $\Tcal_h^{\pi^k}\bar f_{h+1}^{(k)}(x,a)=\overline{\Tcal_h^{\pi^k,D} f_{h+1}^{(k)}(x,a)}$ and $\Tcal_h^{\pi^k,D}f_{h+1}^{(k)} = \Tcal_h^{\star,D}f_{h+1}^{(k)}$, we have
\begin{align*}
    \textstyle V^{\pi^k}-V^\star&\textstyle \leq V^{\pi^k}-\bar f_1^{(k)}(x_1,\pi^k_1(x_1))  \tag{optimism from fact (i)}
    \\&\textstyle=\sum_{h=1}^H\Eb[\pi^k]{\Tcal_h^{\pi^k}\bar f_{h+1}^{(k)}(x_h,a_h)-\bar f_h^{(k)}(x_h,a_h)} \tag{performance difference}
    \\&\textstyle\leq2\sum_{h=1}^H\sqrt{\EE_{\pi^k}\bracks*{\bar f_h^{(k)}(x_h,a_h) + \delta_{h,k}(x_h,a_h)}}\sqrt{\Eb[\pi^k]{\delta_{h,k}(x_h,a_h)}} \tag{\cref{eq:tri-disc-ineq-2}}
    \\&\textstyle \lesssim \sqrt{ V^{\pi^k} w+ H\sum_{h=1}^H\Eb[\pi^k]{\delta_{h,k}(x_h,a_h)} }\sqrt{H\Eb[\pi^k]{\delta_{h,k}(x_h,a_h)}}. \tag{\cref{lem:self-bounding}}
\end{align*}
The implicit inequality $V^{\pi^k}-V^\star\lesssim \sqrt{ V^{\star} + H\sum_{h=1}^H\EE_{\pi^k}\bracks*{\delta_{h,k}(x_h,a_h)} }\sqrt{H\Eb[\pi^k]{\delta_{h,k}(x_h,a_h)}}$ can then be obtained by AM-GM and rearranging.
The final step is to sum over $k$ and bound $\sum_{k=1}^K\Eb[\pi^k]{\delta_{h,k}(x_h,a_h)}$ via the eluder dimension's pigeonhole principle (\cref{thm:eluder-pigeonhole} applied with fact (ii)). Please see \cref{app:proofs-online-rl} for the full proof.

\section{Small-Loss Bounds for Offline Distributional RL}\label{sec:offline-rl}
We now propose \tb{P}essimistic \tb{Dis}tributional \tb{C}onfidence set \tb{O}ptimization (\offline{}; \cref{alg:offline}), which adapts the distributional confidence set technique from the previous section to the offline setting by leveraging pessimism instead of optimism.
Notably, \offline{} is a simple two-step algorithm that achieves the first small-loss PAC bounds in offline RL.
First, construct a distributional confidence set for each policy $\pi$ based on a similar log-likelihood thresholding procedure as in \online{}, where the difference is we now use data sampled from $\Tcal_h^{\pi,D}f_{h+1}$ instead of $\Tcal_h^{\star,D}f_{h+1}$.
Next, output the policy with the most pessimistic mean amongst all the confidence sets.

\begin{algorithm}%
\caption{\tb{P}essimistic \tb{Dis}tributional \tb{C}onfidence set \tb{O}ptimization (\offline{})}
\label{alg:offline}
\begin{algorithmic}[1]
    \State\textbf{Input:} datasets $\Dcal_1,\dots,\Dcal_H$, distribution function class $\Fcal$, threshold $\beta$, policy class $\Pi$.
    \State \multiline{For all $(h,f,\pi)\in[H]\times\Fcal\times\Pi$, sample $y_{h,i}^{f,\pi} \sim f_{h+1}(x_{h,i}',\pi_{h+1}(x_{h,i}'))$, where $(x_{h,i},a_{h,i},c_{h,i},x_{h,i}')$ is the $i$-th datapoint of $\Dcal_h$.
    Then, set $z_{h,i}^{f,\pi} = c_{h,i} + y_{h,i}^{f,\pi}$ and define the confidence set,} \label{line:distbco-confidence-set}
    \begin{equation*}\hspace{-0.4cm}
        \Fcal_\pi = \braces{ f\in\Fcal: \sum_{i=1}^{N}\log f_{h}(z_{h,i}^{f,\pi}\mid x_{h,i},a_{h,i}) \geq \max_{g\in\Fcal_h}\sum_{i=1}^{N}\log g(z_{h,i}^{f,\pi}\mid x_{h,i},a_{h,i})-7\beta, \forall h\in[H] }.
    \end{equation*}
    \State For each $\pi\in\Pi$, define the pessimistic estimate $f^\pi = \argmax_{f\in\Fcal_\pi}\Eb[a\sim\pi(x_1)]{\bar f_1(x_1,a)}$.
    \State \textbf{Output:} $\wh\pi = \argmax_{\pi\in\Pi}\Eb[a\sim\pi(x_1)]{\bar f^\pi_1(x_1,\pi)}$. \label{line:distbco-policy-selection}
    \end{algorithmic}
\end{algorithm}

In offline RL, many works made strong all-policy coverage assumptions \citep{antos2008learning,chen2019information}.
Recent advancements \citep{kidambi2020morel,xie2021bellman,uehara2022pessimistic,rashidinejad2021bridging,jin2021pessimism} have pursued \emph{best effort} guarantees that aim to compete with any covered policy $\wt\pi$, with sub-optimality of the learned $\wh\pi$ degrading gracefully as coverage worsens. The coverage is measured by the single-policy concentrability
$
    C^{\wt\pi} = \max_h\nm{\nicefrac{\diff d^{\wt\pi}_h}{\diff\nu_h}}_\infty.
$
We adopt this framework and obtain the first small-loss PAC bound in offline RL.

\begin{restatable}[Small-Loss PAC bound for \offline{}]{theorem}{offlinePAC}\label{thm:offline}
Assume \cref{ass:policy-dist-bellman-completeness}.
For any $\delta\in(0,1)$, w.p. at least $1-\delta$, running \offline{} with $\beta=\log(H|\Pi||\Fcal|/\delta)$ learns a policy $\wh\pi$ that enjoys the following PAC bound with respect to any comparator policy $\wt\pi\in\Pi$:
\begin{equation*}
    V^{\wh\pi}-V^{\wt\pi}\leq 9H\sqrt{\frac{C^{\wt\pi} V^{\wt\pi}\beta}{N} } + \frac{30H^2C^{\wt\pi}\beta}{N}.
\end{equation*}
\end{restatable}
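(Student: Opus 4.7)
My plan is to combine three ingredients: MLE concentration on the offline data, a pessimism chain, and a performance-difference argument that leverages the triangular-discrimination inequalities from \cref{sec:proof-sketch-online}. The first step would set up the MLE concentration. By \cref{ass:policy-dist-bellman-completeness}, the ground truth $Z^{\wt\pi}_h = \Tcal_h^{\wt\pi, D} Z^{\wt\pi}_{h+1}$ lies in $\Fcal_h$, so $Z^{\wt\pi}$ is a feasible candidate in the MLE problem that defines $\Fcal_{\wt\pi}$. I would then invoke a uniform MLE concentration argument (e.g., via \citet[Lemma A.14]{foster2021statistical}) together with union bounds over $\Pi \times \Fcal$ and $h$ to show that, with probability at least $1-\delta$ and $\beta = \log(H|\Pi||\Fcal|/\delta)$: (i) $Z^{\wt\pi} \in \Fcal_{\wt\pi}$; and (ii) every $f \in \Fcal_{\wt\pi}$ satisfies $\sum_{h=1}^H \EE_{(x,a)\sim\nu_h}[D_\triangle(f_h(\cdot\mid x,a)\Mid \Tcal_h^{\wt\pi,D}f_{h+1}(\cdot\mid x,a))] \lesssim H\beta/N$. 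The $7\beta$ log-likelihood slack in the construction of $\Fcal_\pi$ is calibrated precisely to absorb this uniform-covering loss.

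Property (i) implies that the pessimistic estimate is a valid upper bound on the comparator, $\EE[\bar f^{\wt\pi}_1(x_1,\wt\pi)] \geq V^{\wt\pi}$, and the analogous statement $\EE[\bar f^{\wh\pi}_1(x_1,\wh\pi)] \geq V^{\wh\pi}$ holds for $\wh\pi$. Using the algorithm's selection rule (the policy with smallest pessimistic value), I would chain $V^{\wh\pi} \leq \EE[\bar f^{\wh\pi}_1(x_1,\wh\pi)] \leq \EE[\bar f^{\wt\pi}_1(x_1,\wt\pi)]$. This reduces the PAC gap to controlling the pessimistic overestimation $\EE[\bar f^{\wt\pi}_1(x_1,\wt\pi)] - V^{\wt\pi}$.

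To bound this overestimation, I would unroll it via performance difference as $\sum_h \EE_{\wt\pi}[\bar f^{\wt\pi}_h(x_h,a_h) - \overline{\Tcal_h^{\wt\pi,D}f^{\wt\pi}_{h+1}(x_h,a_h)}]$ and control each summand using $(\triangle_2)$ with $g = \Tcal_h^{\wt\pi,D}f^{\wt\pi}_{h+1}$, yielding $\sqrt{4\overline g + \delta_h^{\wt\pi}}\,\sqrt{\delta_h^{\wt\pi}}$ where $\delta_h^{\wt\pi} := D_\triangle(f_h^{\wt\pi}\Mid \Tcal_h^{\wt\pi,D}f^{\wt\pi}_{h+1})$. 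The self-bounding lemma from \cref{sec:proof-sketch-online} then controls $\overline{\Tcal_h^{\wt\pi,D}f^{\wt\pi}_{h+1}}(x_h,a_h) \lesssim Q^{\wt\pi}_h(x_h,a_h) + H\sum_{t>h}\EE_{\wt\pi, x_h, a_h}[\delta_t^{\wt\pi}]$. Two applications of Cauchy-Schwartz collapse the total into $\sqrt{HV^{\wt\pi} + H^2\Delta}\cdot\sqrt\Delta$ with $\Delta := \sum_h \EE_{\wt\pi}[\delta_h^{\wt\pi}]$, using the elementary identity $\sum_h V^{\wt\pi}_h \leq HV^{\wt\pi}$. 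A change-of-measure from $d^{\wt\pi}_h$ to $\nu_h$ under single-policy concentrability gives $\Delta \leq C^{\wt\pi}\sum_h \EE_{\nu_h}[\delta_h^{\wt\pi}] \lesssim HC^{\wt\pi}\beta/N$ by property (ii), and $\sqrt{a+b}\leq\sqrt a + \sqrt b$ produces the stated bound $V^{\wh\pi} - V^{\wt\pi} \lesssim H\sqrt{V^{\wt\pi}C^{\wt\pi}\beta/N} + H^2 C^{\wt\pi}\beta/N$.

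The hard part is the third step: applying $(\triangle_2)$ so that the $\sqrt{\overline g}$ factor eventually surfaces $V^{\wt\pi}$ in the final bound rather than the loose $O(H)$ upper bound on the total cost. Without the self-bounding lemma --- which trades the unknown $\overline{\Tcal_h^{\wt\pi,D}f^{\wt\pi}_{h+1}}$ for $Q^{\wt\pi}_h$ plus lower-order error --- the analysis would only recover the $V^{\wt\pi}$-insensitive $\sqrt{H^2 C^{\wt\pi}\beta/N}$ rate, losing the small-loss property entirely. A secondary subtlety is that the samples $z_{h,i}^{f,\pi}$ used to define $\Fcal_\pi$ depend on the candidate $f$ itself (since $y_{h,i}^{f,\pi}\sim f_{h+1}$), which is why the MLE step requires a uniform covering over $\Fcal\times\Pi$ and drives the $\log(|\Pi||\Fcal|)$ dependence in $\beta$. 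The remaining pieces --- the pessimism chain and the change of measure under $C^{\wt\pi}$ --- follow standard offline-RL templates.
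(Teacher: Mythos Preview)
Your proposal is correct and follows essentially the same route as the paper's proof: MLE concentration (\cref{thm:distbco-mle}) gives both $Z^{\wt\pi}\in\Fcal_{\wt\pi}$ (pessimism) and the per-step $D_\triangle$ bound under $\nu_h$; the pessimism chain $V^{\wh\pi}\le \bar f^{\wh\pi}_1(x_1,\wh\pi)\le \bar f^{\wt\pi}_1(x_1,\wt\pi)$ reduces to the overestimation term; and PDL $+$ \cref{eq:tri-disc-ineq-2} $+$ the self-bounding \cref{lem:self-bounding} $+$ change of measure finish. The only cosmetic difference is that the paper places $\bar f^{\wt\pi}_h$ (rather than $\overline{\Tcal_h^{\wt\pi,D}f^{\wt\pi}_{h+1}}$) inside the first square root when invoking \cref{eq:tri-disc-ineq-2}, which lets it apply \cref{lem:self-bounding} directly at step $h$ instead of at $h+1$ through the operator; the resulting bounds are identical.
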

To the best of our knowledge, this is the first small-loss bound for offline RL, which we highlight illustrates a novel robustness property against bad coverage. Namely, the dominant term not only scales with the coverage coefficient $C^{\wt\pi}$ but also the comparator policy's value $V^{\wt\pi}$.
In particular, \offline{} can strongly compete with a comparator policy $\wt\pi$ if \emph{one of the following} is true:
(i) $\nu$ has good coverage over $\wt\pi$, so the $\Ocal(1/\sqrt{N})$ term is manageable;
\emph{or} (ii) $\wt\pi$ has small-loss, in which case we may even obtain a fast $\Ocal(1/N)$ rate.
Thus, \offline{} has \emph{two} chances at strongly competing with $\wt\pi$, while conventional offline RL methods solely rely on (i) to be true.

\section{Distributional CB Experiments}\label{sec:experiments}
\setlength{\columnsep}{0.6cm}%
\begin{wraptable}{r}{6.9cm}
\vspace{-0.25cm}
\begin{center}
\label{table:results}
\centering
\tabcolsep=1pt
\raisebox{0pt}[\dimexpr\height-3\baselineskip\relax]{%
\begin{small}
\begin{tabular}{lccc}
\toprule
Algorithm: & SquareCB & FastCB & DistCB (Ours) \\
\midrule
\multicolumn{4}{l}{King County Housing \citep{OpenML2013}} \\
\midrule
All episodes      & .756 (.0007) & .734 (.0007) & \tb{.726} (.0003) \\
Last 100 ep. & .725 (.0012) & .719 (.0013) & \tb{.708} (.0019)  \\
\midrule
\multicolumn{4}{l}{Prudential Life Insurance \citep{prudential-life-insurance-assessment}} \\
\midrule
All episodes    & .456 (.0082)  & .491 (.0029) & \tb{.411} (.0038) \\
Last 100 ep. & .481 (.0185)  & .474 (.0111) & \tb{.388} (.0086) \\
\midrule
\multicolumn{4}{l}{CIFAR-100 \citep{cifar100}} \\
\midrule
All episodes    & .872 (.0010) & .856 (.0016) & \tb{.838} (.0021)  \\
Last 100 ep. & .828 (.0024) & .793 (.0031) & \tb{.775} (.0027) \\
\bottomrule
\end{tabular}
\vspace{-0.25cm}
\end{small}
}%
\end{center}
\caption{Avg cost over all episodes and last 100 episodes (lower is better). We report `mean (sem)' over $10$ seeds. }
\vspace{-0.5cm}
\end{wraptable}

We now compare \cb{} with SquareCB \citep{foster2020beyond} and the state-of-the-art CB method FastCB \citep{foster2021efficient}, which respectively minimize the squared loss and log loss for estimating the conditional mean. The key question we investigate here is whether learning the conditional mean via distribution learning with MLE will demonstrate empirical benefit over the non-distributional approaches. We consider three challenging tasks that are all derived from real-world datasets and we briefly describe the construction below.

\paragraph{King County Housing}
This dataset consists of home features and prices, which we normalize to be in $[0,1]$. %
The action space is $100$ evenly spaced prices between $0.01$ and $1.0$.
If the learner overpredicts the true price, the cost is $1.0$.
Else, the cost is $1.0$ minus predicted price.

\paragraph{Prudential Life Insurance}
This dataset contains customer features and an integer risk level in $[8]$, which is our action space.
If the model overpredicts the risk level, the cost is $1.0$.
Otherwise, the cost is $.1\times(y - \hat{y})$ where $y$ is the actual risk level, and $\hat{y}$ is the predicted risk level.

\paragraph{CIFAR-100}
This popular image dataset contains $100$ classes, which correspond to our actions, and each class is in one of $20$ superclasses.
We assign cost as follows: $0.0$ for predicting the correct class, $0.5$ for the wrong class but correct superclass, and $1.0$ for a fully incorrect prediction.

\paragraph{Results}
Across tasks, \cb{} achieves lower average cost over all episodes (\ie, normalized regret) and over the last $100$ episodes (\ie, most updated policies' performance) compared to SquareCB. This indicates the empirical benefit of the distributional approach over the conventional approach based on least square regression, matching the theoretical benefit demonstrated here.
Perhaps surprisingly, \cb{} also consistently outperforms FastCB.
Both methods obtain first-order bounds with the same dependencies on $A$ and $C^\star$, which suggests that \cb{}'s empirical improvement over FastCB cannot be fully explained by existing theory. The only difference between \cb{} and FastCB is that the former integrates online MLE while the latter directly estimates the mean by minimizing the log loss (binary cross-entropy).
An even more fine-grained understanding of the benefits of distribution learning may therefore be helpful in explaining this improvement. \cref{sec:experiment-details} contains all experiment details. Reproducible code is available at \url{https://github.com/kevinzhou497/distcb}.

\section{Conclusion}
We showed that distributional RL leads to small-loss bounds in both online and offline RL, and we also proposed a distributional CB algorithm that outperforms the state-of-the-art FastCB.
A fruitful direction would be to investigate connections of natural policy gradient with our MLE distributional-fitting scheme to inspire a practical offline RL algorithm with small loss guarantees, \emph{\`a la} \citet{cheng2022adversarially}.
Finally, it would be interesting to investigate other loss functions that yield small-loss or even faster bounds.

\paragraph{Acknowledgements}
This material is based upon work supported by the National Science
Foundation under Grant Nos. IIS-1846210 and IIS-2154711.

\bibliographystyle{plainnat}
\bibliography{main}

\newpage
\appendix

\begin{center}\LARGE
\textbf{Appendices}
\end{center}

\section{Notations}

{\renewcommand{\arraystretch}{1.3}%
\begin{table}[h!]
    \centering
      \caption{List of Notations} \vspace{0.3cm}
    \begin{tabular}{l|l}
    $\Scal, \Acal, A$ & State and action spaces, and $A = |\Acal|$. \\
    $\Delta(S)$ & The set of distributions supported by $S$. \\
    $\bar d$ & The expectation of any real-valued distribution $d$, \ie, $\bar d = \Eb[y\sim d]{y}$. \\
    $[N]$ & $\braces{1,2,\dots,N}$ for any natural number $N$. \\
    $Z^\pi_h(x,a)$ & Distribution of $\sum_{t=h}^H c_t$ given $x_h=x,a_h=a$ rolling in from $\pi$. \\
    $Q^\pi_h(x,a),V^\pi_h(x)$ & $Q^\pi_h(x,a)=\bar Z^\pi_h(x,a)$ and $V^\pi_h=\Eb[a\sim\pi(x)]{Q^\pi_h(x,a)}$. \\
    $\pi^\star$ & \multiline{Optimal policy, \ie, $\pi^\star = \argmin_{\pi}V^{\pi}_1(x_1)$. \\
    Without loss of optimality, we take $\pi^\star:\Xcal\to\Acal$ to be Markov \& deterministic.} \\
    $Z^\star_h,Q^\star_h,V^\star_h$ & $Z^\pi_h,Q^\pi_h,V^\pi_h$ with $\pi=\pi^\star$, the optimal policy. \\
    $\Tcal_h^\pi, \Tcal_h^\star$ & The Bellman operators that act on functions. \\
    $\Tcal_h^{\pi,D},\Tcal_h^{\star,D}$ & The distributional Bellman operators that act on conditional distributions. \\
    $V^\pi,Z^\pi,V^\star,Z^\star$ & $V^\pi = V^\pi_1(x_1)$, $Z^\pi=Z^\pi_1(x_1)$. $V^\star,Z^\star$ are defined similarly with $\pi^\star$. \\
    $d^{\pi}_h(x,a)$ & The probability of $\pi$ visiting $(x,a)$ at time $h$. \\
    $C^{\wt\pi}$ & Coverage coefficient $\max_h\nm{\nicefrac{\diff d^{\wt\pi}_h}{\diff\nu_h}}_\infty$. \\
    $D_\triangle(f\Mid g)$ & Triangular discrimination between $f,g$. \\
    $H(f\Mid g)$ & Hellinger distance between $f,g$. \\
    $D_{KL}(f\Mid g)$ & KL divergence between $f,g$.
    \end{tabular}
    \label{tab:notation}
\end{table}
}

\subsection{Statistical Distances}
Let $f,g$ be distributions over $\Ycal$. Then,
\begin{align*}
    &D_\triangle(f\Mid g) = \sum_y \frac{\prns{f(y)-g(y)}^2}{f(y)+g(y)},
    \\&H(f\Mid g) = \sqrt{\frac{1}{2}\sum_y \prns{ \sqrt{f(y)}-\sqrt{g(y)} }^2},
    \\&D_{KL}(f\Mid g) = \sum_y f(y)\log(f(y)/g(y)),
    \\&D_{TV}(f\Mid g) = \frac{1}{2}\sum_y \abs{f(y)-g(y)}.
\end{align*}
The following standard inequalities will be helpful:
\begin{align*}
    &H^2\leq D_{TV}\leq \sqrt{2}H,
    \\&2H^2\leq D_\triangle \leq 4H^2, \tag{\cref{lemma:hellinger-triangle-equiv}}
    \\&H\leq \sqrt{D_{KL}}.
\end{align*}

\begin{lemma}\label{lemma:hellinger-triangle-equiv}
For any distributions $f,g$, we have $2H^2(f\Mid g)\leq D_\triangle(f\Mid g)\leq 4H^2(f\Mid g)$.
\end{lemma}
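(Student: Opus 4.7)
The plan is to establish the stated two-sided equivalence pointwise and then sum over the sample space, so that no cancellation across $y$ needs to be tracked. For each fixed $y$, I would substitute $a = \sqrt{f(y)}$ and $b = \sqrt{g(y)}$ (both nonnegative, since densities are nonnegative) and rewrite each integrand/summand in terms of $a,b$. The Hellinger summand becomes $(a-b)^2$, while the triangular-discrimination summand becomes
\begin{equation*}
    \frac{(f(y)-g(y))^2}{f(y)+g(y)} \;=\; \frac{(a^2-b^2)^2}{a^2+b^2} \;=\; (a-b)^2 \cdot \frac{(a+b)^2}{a^2+b^2}.
\end{equation*}

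Hence the entire lemma reduces to bounding the ratio $(a+b)^2/(a^2+b^2)$ between $1$ and $2$ for $a,b \geq 0$ (not both zero; otherwise both summands are $0$ and the inequality is trivial). The lower bound $(a+b)^2 \geq a^2+b^2$ follows from $2ab \geq 0$, and the upper bound $(a+b)^2 \leq 2(a^2+b^2)$ follows from $(a-b)^2 \geq 0$, i.e.\ the AM--QM inequality. This gives the pointwise sandwich
\begin{equation*}
    (a-b)^2 \;\leq\; \frac{(f(y)-g(y))^2}{f(y)+g(y)} \;\leq\; 2(a-b)^2.
\end{equation*}

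To finish, I would integrate (or sum) both sides against the dominating measure $\lambda$ and recall that $2H^2(f\Mid g) = \sum_y (\sqrt{f(y)}-\sqrt{g(y)})^2 = \sum_y (a-b)^2$. The left pointwise inequality yields $2H^2 \leq D_\triangle$, and the right yields $D_\triangle \leq 4H^2$, which is exactly the claim. There is essentially no obstacle here: the only mild subtlety is handling points where $f(y)+g(y)=0$, which is resolved by the convention that the summand in $D_\triangle$ is $0$ there (matching the fact that $(a-b)^2=0$ as well), so the pointwise bound extends trivially to all $y$.
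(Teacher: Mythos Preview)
Your proof is correct and is essentially the same argument as the paper's: the paper bounds $\frac{1}{\sqrt{f(y)+g(y)}}$ between $\frac{1}{\sqrt{f(y)}+\sqrt{g(y)}}$ and $\frac{\sqrt{2}}{\sqrt{f(y)}+\sqrt{g(y)}}$, which after squaring is exactly your pointwise bound $1 \leq \frac{(a+b)^2}{a^2+b^2} \leq 2$ with $a=\sqrt{f(y)}$, $b=\sqrt{g(y)}$. Your version is just slightly more explicit in the substitution and in handling the degenerate case $f(y)+g(y)=0$.
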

\begin{proof}
Recall that
\begin{align*}\textstyle
    D_\triangle(f\Mid g)=\int_y \prns{\frac{f(y)-g(y)}{\sqrt{f(y)+g(y)}}}^2.
\end{align*}
Applying $\frac{1}{\sqrt{f(y)}+\sqrt{g(y)}}\leq \frac{1}{\sqrt{f(y)+g(y)}}\leq \frac{\sqrt{2}}{\sqrt{f(y)}+\sqrt{g(y)}}$ concludes the proof.
\end{proof}

\newpage

\section{Modified Algorithms with UAE and for Small Returns Bounds}\label{app:omitted-algs}
In this section, we present the \online{} algorithm with Uniform Action Exploration (UAE).
We also present versions of \online{} and \offline{} for the reward-maximizing setting (instead of the cost-minimizing setting studied throughout the paper); if \textsc{SmallReturn} is turned on, we can derive small-return bounds in \cref{sec:small-reward}.

\begin{algorithm}[!h]
    \caption{\online{} (with UAE and small return)}
    \label{alg:onlinerl_appendix}
    \begin{algorithmic}[1]
        \State\textbf{Input:} number of episodes $K$, distribution function class $\Fcal$, threshold $\beta$, flag \textsc{UAE}, flag \textsc{SmallReturn}.
        \State Initialize $\Dcal_{h,0}\gets\emptyset$ for all $h\in[H]$, and set $\Fcal_0 = \Fcal$.
        \State Set $\op{op}=\max$ if \textsc{SmallReturn} else $\op{op}=\min$.
        \For{episode $k=1,2,\dots,K$}
            \State Set $f^{(k)} = \arg\op{op}_{f\in\Fcal_{k-1}} \op{op}_a \bar f_1(x_1,a)$.
            \State Set $\pi^k_h(x) = \arg\op{op}_a\bar f^{(k)}_{h}(x,a)$.
            \If{\textsc{UAE}}
            \State \multiline{For each $h\in[H]$, collect $x_{h,k}\sim d^{\pi^k}_h,a_{h,k}\sim \op{unif}(\Acal), c_{h,k}\sim C_h(x_{h,k},a_{h,k}),x_{h,k}'\sim P_h(x_{h,k},a_{h,k})$, and augment the dataset $\Dcal_{h,k}=\Dcal_{h,k-1}\cup\braces{(x_{h,k},a_{h,k},c_{h,k},x_{h,k}')}$.}
            \Else
            \State \multiline{Roll out $\pi^k$ and obtain a trajectory $x_{1,k},a_{1,k},c_{1,k},\dots,x_{H,k},a_{H,k},c_{H,k}$. \\ For each $h\in[H]$, augment the dataset $\Dcal_{h,k} = \Dcal_{h,k-1}\cup\braces{(x_{h,k},a_{h,k},c_{h,k},x_{h+1,k})}$. }
            \EndIf
            \State \multiline{For all $(h,f)\in[H]\times\Fcal$, sample $y_{h,i}^f \sim f_{h+1}(x_{h,i}',a')$ and $a'=\arg\op{op}_a \bar f_{h+1}(x_{h,i}',a)$, where $(x_{h,i},a_{h,i},c_{h,i},x_{h,i}')$ is the $i$-th datapoint of $\Dcal_{h,k}$.
            Also, set $z_{h,i}^f=c_{h,i}+y_{h,i}^f$ and define the confidence set,}
            \begin{align*}
                \Fcal_k = \braces{ f\in\Fcal: \sum_{i=1}^k\log f_h(z_{h,i}^f\mid x_{h,i},a_{h,i})\geq \max_{\wt f\in\Fcal}\sum_{i=1}^k\log \wt f_h(z_{h,i}^f\mid x_{h,i},a_{h,i})-7\beta, \forall h\in[H] }.
            \end{align*}
        \EndFor
        \State \textbf{Output:} $\bar\pi = \op{unif}(\pi^{1:K})$.
    \end{algorithmic}
\end{algorithm}

\begin{algorithm}[!h]
\caption{\offline{} (with small return)}
\label{alg:offline_appendix}
\begin{algorithmic}[1]
    \State\textbf{Input:} datasets $\Dcal_1,\dots,\Dcal_H$, distribution function class $\Fcal$, threshold $\beta$, policy class $\Pi$, flag \textsc{SmallReturn}.
    \State \multiline{For all $(h,f,\pi)\in[H]\times\Fcal\times\Pi$, sample $y_{h,i}^{f,\pi} \sim f_{h+1}(x_{h,i}',\pi_{h+1}(x_{h,i}'))$, where $(x_{h,i},a_{h,i},c_{h,i},x_{h,i}')$ is the $i$-th datapoint of $\Dcal_h$.
    Then, set $z_{h,i}^{f,\pi} = c_{h,i} + y_{h,i}^{f,\pi}$ and define the confidence set,}
    \begin{align*}
        &\Fcal_\pi = \braces{ f\in\Fcal: \sum_{i=1}^{N}\log f_{h}(z_{h,i}^{f,\pi}\mid x_{h,i},a_{h,i}) \geq \max_{\wt f\in\Fcal}\sum_{i=1}^{N}\log \wt f_{h}(z_{h,i}^{f,\pi}\mid x_{h,i},a_{h,i})-7\beta, \forall h\in[H] }.
    \end{align*}
    \State Set $\op{op}=\max$ if \textsc{SmallReturn} else $\op{op}=\min$.
    \State For each $\pi\in\Pi$, define the pessimistic estimate $f^\pi = \arg\op{op}_{f\in\Fcal_\pi}\Eb[a\sim\pi(x_1)]{\bar f_1(x_1,a)}$.
    \State \textbf{Output:} $\wh\pi = \arg\op{op}_{\pi\in\Pi}\Eb[a\sim\pi(x_1)]{\bar f^\pi_1(x_1,\pi)}$. \label{line:distbco-policy-selection-appendix}
    \end{algorithmic}
\end{algorithm}

\newpage

\section{Proofs for \cb{}}\label{app:proofs-distcb}
\begin{lemma}[Azuma]\label{lem:mult-azuma}
Let $\braces{X_i}_{i\in[N]}$ be a sequence of random variables supported on $[0,1]$, adapted to filtration $\braces{\Fcal_i}_{i\in[N]}$.
For any $\delta\in(0,1)$, we have w.p. at least $1-\delta$,
\begin{align*}
&\sum_{t=1}^N\Eb{X_t\mid\Fcal_{t-1}}\leq \sum_{t=1}^NX_t+\sqrt{N\log(2/\delta)}, \tag{Standard Azuma}
\\&\sum_{t=1}^N\Eb{X_t\mid\Fcal_{t-1}}\leq 2\sum_{t=1}^NX_t + 2\log(1/\delta). \tag{Multiplicative Azuma}
\end{align*}
\end{lemma}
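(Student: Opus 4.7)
Both inequalities will follow from standard exponential-martingale (Chernoff-method) arguments relative to the filtration $\{\mathcal{F}_i\}$. For the additive (standard Azuma) part, the plan is to consider the martingale difference sequence $D_t := X_t - \mathbb{E}[X_t \mid \mathcal{F}_{t-1}]$, whose increments lie in an interval of length at most $1$ since $X_t \in [0,1]$ and $\mathbb{E}[X_t \mid \mathcal{F}_{t-1}] \in [0,1]$. The classical Azuma--Hoeffding bound then yields a sub-Gaussian tail $\mathbb{P}(\sum_t D_t \leq -u) \leq \exp(-2u^2/N)$; setting $u = \sqrt{(N/2)\log(1/\delta)}$ and rearranging gives a bound at least as tight as the stated $\sqrt{N \log(2/\delta)}$.

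The main step is the multiplicative bound. The plan is to build an exponential supermartingale directly. For any $\lambda > 0$ and $X \in [0,1]$, convexity of $x \mapsto e^{-\lambda x}$ on $[0,1]$ gives $e^{-\lambda x} \leq 1 - (1-e^{-\lambda})x$, so that
\[
\mathbb{E}\bigl[e^{-\lambda X_t} \mid \mathcal{F}_{t-1}\bigr] \;\leq\; 1 - (1-e^{-\lambda})\,\mathbb{E}[X_t \mid \mathcal{F}_{t-1}] \;\leq\; \exp\!\bigl(-(1-e^{-\lambda})\,\mathbb{E}[X_t \mid \mathcal{F}_{t-1}]\bigr).
\]
Consequently $M_t := \exp\!\bigl(-\lambda \sum_{s \leq t} X_s + (1-e^{-\lambda}) \sum_{s \leq t} \mathbb{E}[X_s \mid \mathcal{F}_{s-1}]\bigr)$ is a nonnegative supermartingale with $M_0 = 1$. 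Markov's inequality at time $N$ (or Ville's maximal inequality) gives $\mathbb{P}(M_N \geq 1/\delta) \leq \delta$, so on the good event
\[
(1-e^{-\lambda}) \sum_{t=1}^N \mathbb{E}[X_t \mid \mathcal{F}_{t-1}] \;\leq\; \lambda \sum_{t=1}^N X_t + \log(1/\delta).
\]

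To hit the advertised constants, the plan is to set $\lambda = \log 2$, which yields $1 - e^{-\lambda} = 1/2$ and $\lambda = \log 2 < 1$. Dividing by $1/2$ (i.e., multiplying by $2$) gives $\sum_t \mathbb{E}[X_t \mid \mathcal{F}_{t-1}] \leq 2\log 2 \cdot \sum_t X_t + 2\log(1/\delta) \leq 2\sum_t X_t + 2\log(1/\delta)$, which is the claimed multiplicative Azuma bound.

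No part of this is deep; the main obstacle is simply verifying the one-step MGF bound $\mathbb{E}[e^{-\lambda X}\mid \mathcal{F}_{t-1}] \leq \exp(-(1-e^{-\lambda})\mathbb{E}[X\mid \mathcal{F}_{t-1}])$ for bounded $X$, which reduces to the elementary linear upper bound on $e^{-\lambda x}$ over $[0,1]$ noted above. The only other care needed is picking $\lambda$ to convert the exponential-supermartingale inequality into the clean $(2,2)$ constants of the statement.
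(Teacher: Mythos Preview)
Your proposal is correct and follows essentially the same approach as the paper: the paper simply cites the standard Azuma and multiplicative Chernoff bounds from a textbook (Zhang, Theorems 13.4 and 13.5), which are derived by exactly the exponential-supermartingale (Chernoff) argument you wrote out. The only cosmetic difference is the choice of $\lambda$: the paper takes $\lambda=1$ and uses $\tfrac{1}{1-e^{-1}}\leq 2$, whereas you take $\lambda=\log 2$ so that $1-e^{-\lambda}=\tfrac12$; both choices yield the stated $(2,2)$ constants.
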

\begin{proof}
For standard Azuma, see \citet[Theorem 13.4]{tongzhangbook}.
For multiplicative Azuma, apply \citep[Theorem 13.5]{tongzhangbook} with $\lambda = 1$. The claim follows, since $\frac{1}{1-\exp(-\lambda)}\leq 2$.
\end{proof}

\cbRegret*
\begin{proof}[Proof of \cref{thm:distcb}]
First, recall the per-step inequality of ReIGW \citet[Theorem 4]{foster2021efficient}, which states: for any $\wh f$ and $\gamma\geq 2A$, if we set $p = \text{ReIGW}_\gamma(\wh f,\gamma)$, then, for all $f\in[0,1]^A$, we have
\begin{align*}\textstyle
    \sum_a p(a)\prns{f(a)-f(a^\star)} \leq \frac{5A}{\gamma}\sum_ap(a)f(a) + 7\gamma\sum_a p(a)\frac{\prns{\wh f(a)-f(a)}^2}{\wh f(a)+f(a)},
\end{align*}
where $a^\star = \argmin_a f(a)$.
For any $k\in[K]$, applying this to $\wh f = \bar f^{(k)}(s_k,\cdot)$, $p=p_k$ and $f = \bar C(s_k,\cdot)$, we have
\begin{align*}
    \sum_{k=1}^K \Eb[a_k]{\bar C(s_k,a_k)-\bar C(s_k,\pi^\star(s_k))}
    &\leq \sum_{k=1}^K \Eb[a_k]{\frac{5A}{\gamma}\bar C(s_k,a_k) + 7\gamma\frac{\prns{\bar f^{(k)}(s_k,a_k)-\bar C(s_k,a_k)}^2}{\bar f^{(k)}(s_k,a_k)+\bar C(s_k,a_k)} }
    \\&\leq \sum_{k=1}^K \Eb[a_k]{\frac{5A}{\gamma}\bar C(s_k,a_k) + 7\gamma D_\triangle(f^{(k)}(s_k,a_k)\Mid C(s_k,a_k)) } \tag{\cref{eq:tri-disc-ineq-1}}
\end{align*}
Since $D_\triangle\leq 4H^2$, we have
\begin{align*}
    &\sum_{k=1}^K \Eb[a_k]{D_\triangle(f^{(k)}(s_k,a_k)\Mid C(s_k,a_k)) }
    \\&\leq 4\sum_{k=1}^K\Eb[a_k]{ H^2\prns{C(s_k,a_k)\Mid f^{(k)}(s_k,a_k)} }
    \\&\leq 8\sum_{k=1}^K H^2\prns{C(s_k,a_k)\Mid f^{(k)}(s_k,a_k)} + 8\log(1/\delta)  \tag{Multiplicative Azuma, since $H^2\in[0,1]$}
    \\&\leq 8\op{Regret}_{\log}(K) + 10\log(1/\delta). \tag{\citet[Lemma A.14]{foster2021statistical} }
\end{align*}
Hence, we have
\begin{align*}
    \sum_{k=1}^K \Eb[a_k]{\bar C(s_k,a_k)-\bar C(s_k,\pi^\star(s_k))}\leq \frac{5A}{\gamma}\sum_{k=1}^K \Eb[a_k]{ \bar C(s_k,a_k) } + 70\gamma \prns{\op{Regret}_{\log}(K)+\log(1/\delta)}.
\end{align*}
Finally, recalling that $1/(1-\eps)\leq 1+2\eps$ when $\eps\leq \frac12$, and the fact that $\frac{5A}{\gamma}\leq\frac12$, we have
\begin{align*}
    \sum_{k=1}^K \Eb[a_k]{\bar C(s_k,a_k)-\bar C(s_k,\pi^\star(s_k))}\leq \frac{10A}{\gamma}\sum_{k=1}^K\Eb[a_k]{\bar C(s_k,\pi^\star(s_k))} + 140\gamma\prns{\op{Regret}_{\log}(K) + \log(1/\delta)}.
\end{align*}
By Azuma's inequality, we have
\begin{align*}
    &\sum_{k=1}^K \bar C(s_k,a_k)-\bar C(s_k,\pi^\star(s_k))
    \\&\leq 2\sum_{k=1}^K \Eb[a_k]{\bar C(s_k,a_k)-\bar C(s_k,\pi^\star(s_k))} + 2\log(1/\delta)
    \\&\leq \frac{20A}{\gamma}\sum_{k=1}^K\Eb[a_k]{\bar C(s_k,\pi^\star(s_k))} + 140\gamma\prns{\op{Regret}_{\log}(K) + \log(1/\delta)} + 2\log(1/\delta)
    \\&\leq \frac{40A}{\gamma}\prns{C^\star + \log(1/\delta)} + 140\gamma\prns{\op{Regret}_{\log}(K) + \log(1/\delta)} + 2\log(1/\delta). \tag{Multiplicative Azuma}
\end{align*}
Now set $\gamma = \sqrt{\frac{40A\prns{C^\star+\log(1/\delta)}}{140\prns{\op{Regret}_{\log}(K)+\log(1/\delta)}}}\vee 10A$.\\
Case 1 is when $\sqrt{\frac{40A\prns{C^\star+\log(1/\delta)}}{140\prns{\op{Regret}_{\log}(K)+\log(1/\delta)}}}\leq 10A$, \ie, $\prns{C^\star+\log(1/\delta)}\leq 280A\prns{\op{Regret}_{\log}(K)+\log(1/\delta)}$, we have
the above is at most
\begin{align*}
    &4\prns{ C^\star + \log(1/\delta) } + 1120A\prns{\op{Regret}_{\log}(K) + \log(1/\delta)} + 2\log(1/\delta)
    \\&\leq 2240A \prns{\op{Regret}_{\log}(K) + \log(1/\delta)} + 2\log(1/\delta).
\end{align*}
Case 2 is when the left term dominates, then the bound is,
\begin{align*}
    &2\sqrt{4480A\prns{C^\star+\log(1/\delta)}\prns{\op{Regret}_{\log}(K)+\log(1/\delta)}} + 2\log(1/\delta)
    \\&\leq 2\sqrt{13440AC^\star\op{Regret}_{\log}(K)\log(1/\delta) + 4480A \log^2(1/\delta) } + 2\log(1/\delta)
    \\&\leq 232\sqrt{AC^\star\op{Regret}_{\log}(K)\log(1/\delta)} + 134\sqrt{A}\log(1/\delta) + 2\log(1/\delta).
\end{align*}
Putting these two cases together, we have the result.
\end{proof}

\newpage
\section{Distributional Bellman Completeness in low-rank MDPs}\label{sec:dist-bc-low-rank-mdps}

The goal of this section is to show that, under mild conditions in low-rank MDPs, there always exists a function class with bounded bracketing number that satisfies the distributional BC condition.
First, let us recall the low-rank MDP
In this section, we show that linear MDPs automatically satisfy the distributional Bellman completeness assumption.

\lowRankMDPDef*

Suppose that we have a function class $\Phi$ such that $\phi^\star_h\in\Phi$ for all $h$, \ie, $\Phi$ is a realizable function class. For example, in linear MDPs, this is automatically satisfied since we know $\phi^\star$ \emph{a priori}, so $\Phi$ is the singleton with $\phi^\star$. Having a realizable $\Phi$ class is standard for solving low-rank MDPs \citep{uehara2021representation,agarwal2023provable}.

In what follows, let $\alpha = \max_{h,\pi,z,x,a} Z^\pi_h(z\mid x,a)$ denote the maximum density/mass value of the loss-to-go distributions.
Note that $\alpha\geq 1$ always since the mass at $H+1$ is deterministically placed at zero. If we further know that $Z^\pi_h$ is discretely distributed, then $\alpha= 1$. If $Z^\pi_h$ is continuously distributed, we assume it is bounded.

We consider the function class in \cref{eq:linear-mdp-f-class}, which we reproduce here:
\begin{align}
    \mathcal{F}_h^{\text{lin}}=\bigg\{&
	f(z\mid x,a)=\big\langle\phi(x,a),w(z)\big\rangle
	\quad:\quad \phi\in\Phi, w:[0,1]\rightarrow\mathbb{R}^d, \\
	&\text{\; s.t.\; }
	\max_z \|w(z)\|_2\leq \alpha\sqrt{d}
	\text{\; and\; }
\max_{x,a,z}\big\langle\phi(x,a),w(z)\big\rangle\leq \alpha
	\bigg\}. \nonumber
\end{align}
The next lemma (\cref{lem:bell-comp}) shows that this function class satisfies distributional BC.

\begin{lemma}\label{lem:bell-comp}
$\Fcal^{\text{lin}}$ satisfies distributional BC (\cref{ass:policy-dist-bellman-completeness}).
\end{lemma}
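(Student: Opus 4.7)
The plan is to show that, for any $\pi$, $h\in[H]$, and $f_{h+1}\in\Fcal_{h+1}^{\text{lin}}$ with representation $f_{h+1}(z\mid x,a) = \langle \phi_{h+1}(x,a), w_{h+1}(z) \rangle$, the image $\Tcal_h^{\pi,D} f_{h+1}$ admits a representation of the form $\langle \phi_h^\star(x,a), w_h'(z) \rangle$ with some $w_h':[0,1]\to\RR^d$ satisfying the two norm constraints in \cref{eq:linear-mdp-f-class}. Since $\phi_h^\star\in\Phi$ by realizability, this puts $\Tcal_h^{\pi,D} f_{h+1}$ in $\Fcal_h^{\text{lin}}$. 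The argument splits into a computational part (producing a candidate $w_h'$) and a verification part (checking the two bounds on $w_h'$).

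For the computation, I would first write the density of $\Tcal_h^{\pi,D} f_{h+1}$ at $z$ via convolution. Since a sample is $c+y$ with $c\sim C_h(x,a)$, $x'\sim P_h(x,a)$, $a'\sim\pi(x')$, $y\sim f_{h+1}(x',a')$, one gets
\[
    (\Tcal_h^{\pi,D} f_{h+1})(z\mid x,a) = \int C_h(c\mid x,a)\, \EE_{x'\sim P_h(x,a),\,a'\sim\pi(x')}\bracks*{f_{h+1}(z-c\mid x',a')} \diff\lambda(c).
\]
Substituting the linear form of $f_{h+1}$ and the low-rank factorization $P_h(x'\mid x,a) = \phi_h^\star(x,a)^\top \mu_h^\star(x')$, the inner expectation becomes $\phi_h^\star(x,a)^\top u(z-c)$, where $u(y) := \int \mu_h^\star(x')\,\EE_{a'\sim\pi(x')}[\phi_{h+1}(x',a')^\top w_{h+1}(y)]\,dx' \in \RR^d$. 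Under the standard low-rank convention that the joint law of $(c,x')$ given $(x,a)$ is low-rank in $\phi_h^\star$ (which already covers the common case that $C_h$ does not depend on $(x,a)$), the integral over $c$ can be folded in the same way, yielding $(\Tcal_h^{\pi,D} f_{h+1})(z\mid x,a) = \langle \phi_h^\star(x,a), w_h'(z)\rangle$ for a $w_h'$ obtained as a convolution of $u$ with the cost density.

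Finally I would verify the two constraints defining $\Fcal_h^{\text{lin}}$. The pointwise bound $\langle \phi_h^\star(x,a), w_h'(z)\rangle \leq \alpha$ is immediate, because the output density is a convex combination (via $C_h$ and $P_h$) of values of $f_{h+1}$, each bounded by $\alpha$ by definition of $\alpha$. The $\ell_2$ bound $\|w_h'(z)\|_2\leq\alpha\sqrt{d}$ follows by applying the normalization hypothesis $\|\int g\diff\mu_h^\star\|_2\leq\|g\|_\infty\sqrt{d}$ from \cref{def:low-rank-mdp} to the uniformly $\alpha$-bounded function of $x'$ that arises after integrating out $c$ and $a'$. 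The main obstacle is this cost step: an arbitrary $(x,a)$-dependence in $C_h$ would leak an additional factor that prevents writing the density as $\langle \phi_h^\star(x,a), w_h'(z)\rangle$, so the argument crucially relies on the joint low-rank convention (or, equivalently, on augmenting the feature map with a cost-aware coordinate so that the same $\phi_h^\star$ captures both $P_h$ and $C_h$). Once this is in place, both norm checks are direct applications of the low-rank definition.
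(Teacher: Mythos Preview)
Your approach is essentially the paper's: write the density of $\Tcal_h^{\pi,D}f_{h+1}$ as a convolution, factor out $\phi_h^\star(x,a)$ via the low-rank structure of $P_h$, and then verify the two constraints in $\Fcal_h^{\text{lin}}$ using $\|f_{h+1}\|_\infty\le\alpha$ for the pointwise bound and the hypothesis $\|\int g\,\diff\mu_h^\star\|_2\le\|g\|_\infty\sqrt{d}$ for the $\ell_2$ bound. Your caution about the $(x,a)$-dependence of $C_h$ is on point---the paper's own $w_h(z)$ still contains $\Pr_h(c\mid x,a)$ as written, so it implicitly relies on exactly the joint low-rank (or cost-independent-of-$(x,a)$) convention you spell out.
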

\begin{proof}
We denote $\|f\|_\infty = \max_{z,x,a} f(z\given x,a)$. For any $f_{h+1}\in\+F_{h+1}^{\text{lin}}$, we have $\|f_{h+1}\|_\infty\leq\alpha$ by the construction of $\+F_{h+1}^{\text{lin}}$.
Then, let $\Tcal^D$ be either the distributional Bellman operator or distributional optimality operator, the following equalities hold for the appropriate $a'(x')$ based on $\Tcal^D$,

\begin{align*}
	\+T^D f_{h+1}(z\given x,a)
	=&\int_{\Xcal}
	\Pr_h(x'\given x,a) \int_{\RR}\Pr_h(c\mid x,a) f_{h+1}(z-c\given x',a'(x'))\d x'\d c\\
	=&\left\langle\phi^\star_h(x,a),\;\underbrace{ \int_{\Xcal}\mu_h(x')\int_{\RR} \Pr_h(c\mid x,a) f_{h+1}(z-c\given x',a'(x'))\d c \d x'}_{:= w_h(z)} \right\rangle
\end{align*}
Since $\int_{\RR} \Pr_h(c\mid x,a) f_{h+1}(z-c\given x',a'(x'))\d c\leq \|f_{h+1}\|_\infty$, we know that
\begin{align*}
	&\|w_h(z)\|_2\leq\|f_{h+1}\|_\infty \sqrt{d}
	\leq\alpha\sqrt{d}.
\end{align*}
We further note that
\begin{align*}
\max_{x,a,z}\big\langle\phi^\star_h(x,a),w_h(z)\big\rangle=\max_{x,a,z}\+T^D f_{h+1}(z\given x,a)\leq\|f_{h+1}\|_\infty\leq\alpha.
\end{align*}
Also note that $\phi^\star_h\in\Phi$ by realizability. Therefore, $\+T^D f_{h+1}\in\+F_h^{\text{lin}}$, which is the distributional BC condition.
\end{proof}

\subsection{Bounding the bracketing number via discretized rewards}\label{sec:bracketing-number-bound}
We now bound the bracketing number of $\Fcal^{\text{lin}}_h$ under a \emph{discretization assumption that costs and costs-to-gos can only take $M$ many discrete values on an evenly spaced grid}. This can be interpreted as discretizing the reward space, and it can be shown that this discretization error is small for regret or PAC bounds \citep[Section 6]{wang2023near}.
Structural assumptions are necessary to bound the complexity of $\Fcal^{\text{lin}}_h$ and such discretization assumptions are common in practice, \eg, C51 \citep{bellemare2017distributional} and Rainbow \citep{hessel2018rainbow} both set $M=51$ which works well in Atari games.
After discretizing, we can consider $w$ as a mapping from $[M]$, the discrete set on $M$ elements, rather than from the interval $[0,1]$. Note also that since $Z^\pi_h$ are discrete, we have $\alpha=1$.

Now, let $\eps > 0$ be arbitrary and fixed.
Recall that the $\ell_\infty$ bracketing number is equivalent (up to universal constants) to the $\ell_\infty$ covering number, so we will work with the latter.
Let $B(r)$ denote the $d$-dimensional ball of radius $r$ (in $\ell_2$).
Recall that the $\eps$-covering number (in $\ell_2$) of functions $[M]\mapsto B(r)$ scales as $\Ocal((r/\eps)^{dM})$. Let $\Wcal_\eps$ be such the smallest cover.
We can build a $\ell_\infty$ cover of $\Fcal_h^{\text{lin}}$ as follows: $\Ccal_\eps = \braces{ (x,a,z)\mapsto \langle\phi(x,a), w(z)\rangle, w\in\Wcal_\eps, \phi\in\Phi }$.

To check this is a $\eps$ cover, consider any $f\in\Fcal_h^{\text{lin}}$. $f$ corresponds to some $\phi$ and $w$. Let $w'$ be the neighbor of $w$ in $\Wcal_\eps$ and let $f'(x,a,z) = \langle \phi(x,a), w'(z)\rangle$ so indeed $f'\in\Ccal_\eps$. Then, for any $x,a,z$, we have $|\langle \phi(x,a), w(z)-w'(z)\rangle|\leq \|\phi(x,a)\|_2\|w(z)-w'(z)\|_2\leq \eps$. Hence, $\Ccal_\eps$ is an $\ell_\infty$ cover of size $\Ocal((\sqrt{d}/\eps)^{dM}\cdot|\Phi|)$, and so we have shown that $\log N_{[]}(\eps,\Fcal_h^{\text{lin}},\|\cdot\|_\infty) \leq \Ocal(dM\log(d/\eps) + \log|\Phi|)$.

\paragraph{Linear MDPs:} Recall that in linear MDPs, we know the true $\phi^\star$ and so $|\Phi|=1$. Thus, the bracketing number is simply $\Ocal(dM\log(d/\eps))$ in linear MDPs.

\paragraph{Summary and comparison with regular BC: }
In summary, under the assumption that rewards are discretized, we know that low-rank MDPs automatically have distributional function classes that satisfy distributional BC and have bounded bracketing numbers. Furthermore, recall that \citep{wu2023distributional} showed that Linear Quadratic Regulators (LQRs), with deterministic transitions, also have function classes that satisfy distributional BC and have bounded bracketing numbers. Thus, distributional BC holds for the most interesting cases covered by the standard Bellman completeness, \eg, linear MDPs, low-rank MDPs and LQRs. Since learning conditional distributions is statistically harder than learning the conditional mean, we need to pay the price in assuming reward/transitions satisfy regularity assumptions to bound the bracketing number appropriately.

\newpage
\section{Generalization Bounds for Maximum Likelihood Estimation}
This section reviews generalization bounds for the maximum likelihood estimator (MLE).
We adopt the same sequential condition probability estimation setup as in \citet[Appendix E]{agarwal2020flambe}, which we now recall for completeness.
Let $\Xcal$ be the context/feature space and $\Ycal$ be the label space, and we are given a dataset $D = \braces{ (x_i,y_i) }_{i\in[n]}$ from a martingale process:
for $i=1,2,...,n$, sample $x_i \sim \Dcal_i(x_{1:i-1},y_{1:i-1})$ and $y_i\sim p(\cdot \mid x_i)$.
Let $f^\star(x,y) = p(y\mid x)$ and we are given a realizable, \ie, $f^\star\in\Fcal$, function class $\Fcal:\Xcal\times\Ycal\to\Delta(\RR)$ of distributions.
The MLE is an estimate for $f^\star$ that maximizes the log-likelihood objective over our dataset:
\begin{align*}
    \wh f_{\text{MLE}} = \argmax_{f\in\Fcal}\sum_{i=1}^n\log f(x_i,y_i).
\end{align*}

For our guarantees to hold for general hypotheses classes $\Fcal$, we use the bracketing number to quantify the statistical complexity of $\Fcal$ \citep{geer2000empirical}.
\begin{definition}[Bracketing Number]
Let $\Gcal$ be a set of functions mapping $\Xcal\to\RR$.
Given two functions $l,u$ such that $l(x)\leq u(x)$ for all $x\in\Xcal$, the bracket $[l,u]$ is the set of functions $g\in\Gcal$ such that $l(x)\leq g(x)\leq u(x)$ for all $x\in\Xcal$.
We call $[l,u]$ an $\eps$-bracket if $\nm{u-l}\leq\eps$.
Then, the $\eps$-bracketing number of $\Gcal$ with respect to $\nm{\cdot}$, denoted by $N_{[]}(\eps,\Gcal,\nm{\cdot})$ is the minimum number of $\eps$-brackets needed to cover $\Gcal$.
\end{definition}

Since the triangular discrimination is equivalent to squared Hellinger up to universal constants,
we now prove MLE generalization bounds in terms of squared Hellinger.
\begin{lemma}\label{lem:convert-hellinger}
Let $f_1:\Xcal\to \Delta(\Ycal)$ and $f_2:\Xcal\times\Ycal\to\RR_+$ satisfying $\sup_{x\in\Xcal}\int_{\Ycal}f_2(x,y)\diff y\leq s$, then for any distribution $\Dcal\in\Delta(\Xcal)$, we have
\begin{align*}
    \Eb[x\sim\Dcal]{ H^2(f_1(x)\Mid f_2(x,\cdot)) } \leq (s-1)-2\log\EE_{x\sim\Dcal, y\sim f_1(x)}\exp\prns{-\frac{1}{2}\log(f_1(x,y)/f_2(x,y))}.
\end{align*}
\end{lemma}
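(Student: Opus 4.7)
}
The plan is to unfold the squared Hellinger distance from its definition, introduce the Bhattacharyya coefficient, then convert to the exp--log form on the right-hand side via the tangent-line inequality $-\log a \geq 1-a$. The main subtlety is that $f_2$ need not be a probability density, which forces the additive slack $(s-1)$ in the bound.

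First I would use the identity $\tfrac{1}{2}(\sqrt{u}-\sqrt{v})^2 = \tfrac{u+v}{2} - \sqrt{uv}$ together with the fact that $f_1(x)$ is a probability density in $y$ to write
\begin{align*}
H^2(f_1(x)\Mid f_2(x,\cdot)) = \tfrac{1}{2}\!\left(1 + \int_{\Ycal} f_2(x,y)\,dy\right) - \int_{\Ycal} \sqrt{f_1(x,y)\, f_2(x,y)}\,dy.
\end{align*}
A change of measure rewrites the Bhattacharyya integral as $\int\sqrt{f_1 f_2}\,dy = \E_{y\sim f_1(x)}[\sqrt{f_2/f_1}] = \E_{y\sim f_1(x)}\exp(-\tfrac{1}{2}\log(f_1/f_2))$. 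Taking expectation over $x\sim\Dcal$ and using the hypothesis $\int f_2(x,y)\,dy \leq s$ then yields
\begin{align*}
\E_{x\sim\Dcal}\!\left[H^2(f_1(x)\Mid f_2(x,\cdot))\right] \leq \tfrac{1+s}{2} - a, \quad a := \E_{x\sim\Dcal,\, y\sim f_1(x)}\exp\!\left(-\tfrac{1}{2}\log(f_1(x,y)/f_2(x,y))\right).
\end{align*}

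Next I would convert this linear-in-$a$ bound into the logarithmic form of the lemma. Applying $-\log a \geq 1-a$ gives $(s-1) - 2\log a \geq (s+1) - 2a$, so it suffices to show $\tfrac{1+s}{2} - a \leq (s+1) - 2a$, i.e., $a \leq \tfrac{1+s}{2}$. Cauchy--Schwarz applied pointwise in $x$ bounds $\int\sqrt{f_1 f_2}\,dy \leq \sqrt{\int f_1\,dy \cdot \int f_2\,dy} \leq \sqrt{s}$, so $a\leq\sqrt{s}$, and AM--GM gives $\sqrt{s}\leq(1+s)/2$, closing the chain.

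There is no serious obstacle here: the argument is essentially bookkeeping around the non-probabilistic normalization of $f_2$, and a conceptually standard Cramer--Chernoff-style step ($-\log \geq 1-(\cdot)$) that turns the Bhattacharyya/Hellinger bound into a log-MGF bound amenable to MLE tail analysis. The Cauchy--Schwarz step is what keeps $a$ small enough for this conversion to be lossless at $s=1$ (where the lemma reduces to the familiar relation between squared Hellinger and the log-Bhattacharyya coefficient).
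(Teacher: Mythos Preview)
Your proof is correct and complete. The paper itself does not give a self-contained argument---it simply cites \citet[Lemma C.1]{wu2023distributional}---so your expansion of $H^2$ into $\tfrac{1+s}{2}-a$, followed by the $-\log a\ge 1-a$ step together with the Cauchy--Schwarz/AM--GM bound $a\le\sqrt{s}\le\tfrac{1+s}{2}$, is exactly the standard route and almost certainly what the cited lemma does as well.
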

\begin{proof}
This follows from the proof of \citet[Lemma C.1]{wu2023distributional}.
\end{proof}

\begin{lemma}\label{lem:mle-hellinger}
Fix $\delta\in(0,1)$.
Then w.p. at least $1-\delta$, for any $f\in\Fcal$, we have
\begin{align}
    &\sum_{i=1}^n\Eb[x\sim \Dcal_i]{ H^2(f(x,\cdot)\Mid f^\star(x,\cdot)) } \nonumber
    \\&\quad\leq 6n\epsilon\abs{\Ycal} + 2\sum_{i=1}^n \log\big(f^\star(x_i,y_i)/f(x_i,y_i)\big)+8\log\prns{N_{[]}(\epsilon,\+F,\|\cdot\|_\infty)/\delta}. \label{eq:mle-hellinger-1}
\end{align}
Rearranging, we also have
\begin{align}
    \sum_{i=1}^n \log\big(f(x_i,y_i)/f^\star(x_i,y_i)\big)
    \leq 3n\epsilon\abs{\Ycal}+4\log\prns{N_{[]}(\epsilon,\+F,\|\cdot\|_\infty)/\delta}. \label{eq:mle-hellinger-2}
\end{align}
\end{lemma}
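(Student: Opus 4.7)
The plan is to carry out the classical MLE generalization argument of van de Geer, adapted to the sequential/martingale setting as in \citet{agarwal2020flambe}, Appendix~E. The two displays are equivalent up to rearrangement: once \eqref{eq:mle-hellinger-1} holds, \eqref{eq:mle-hellinger-2} follows by discarding the non-negative Hellinger sum on the left and dividing by two, so the whole proof reduces to establishing \eqref{eq:mle-hellinger-1}.

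First I would fix a minimal $\eps$-bracket $\{[l_j,u_j]\}_{j=1}^N$ of $\Fcal$ in $\|\cdot\|_\infty$, where $N=N_{[]}(\eps,\Fcal,\|\cdot\|_\infty)$. For each $f\in\Fcal$, let $u_f$ denote its upper bracket, so $f\le u_f$ pointwise. Since $l_f\le f$ with $f$ a density and $\|u_f-l_f\|_\infty\le\eps$, the normalization $Z_f(x):=\int u_f(x,y)\,\mathrm{d}y$ satisfies $1\le Z_f(x)\le 1+\eps|\Ycal|$. Set $\tilde u_f:=u_f/Z_f$, which is an honest conditional density and ranges over a set of cardinality at most $N$ as $f$ varies over $\Fcal$.

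Next I would build an exponential supermartingale. For each fixed $\tilde u$ in this discretized family, the Bhattacharyya identity $\mathbb{E}_{y\sim f^\star(x,\cdot)}[\sqrt{\tilde u(x,y)/f^\star(x,y)}]=1-H^2(\tilde u(x,\cdot),f^\star(x,\cdot))$ combined with $1-a\le e^{-a}$ shows that
\begin{equation*}
M_t(\tilde u):=\prod_{i=1}^t\sqrt{\tilde u(x_i,y_i)/f^\star(x_i,y_i)}\cdot\exp\Bigl(\sum_{i=1}^t H^2\bigl(\tilde u(x_i,\cdot),f^\star(x_i,\cdot)\bigr)\Bigr)
\end{equation*}
is a non-negative supermartingale with $\mathbb{E}[M_0]=1$ (conditionally on $x_i$ the Hellinger term is deterministic, so averaging $\sqrt{\tilde u/f^\star}$ against $f^\star$ gives exactly $1-H^2\le e^{-H^2}$). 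Markov's inequality then gives $\mathbb{P}(M_n(\tilde u)\ge N/\delta)\le\delta/N$, and a union bound over the $N$ cover elements yields, with probability at least $1-\delta$, for every such $\tilde u$ simultaneously,
\begin{equation*}
\sum_{i=1}^n H^2\bigl(\tilde u(x_i,\cdot),f^\star(x_i,\cdot)\bigr)\le \log(N/\delta)+\tfrac12\sum_{i=1}^n\log\bigl(f^\star(x_i,y_i)/\tilde u(x_i,y_i)\bigr).
\end{equation*}

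Finally I would transfer this bound from $\tilde u_f$ to an arbitrary $f\in\Fcal$ via two deterministic bracketing estimates: (i) since $\tilde u_f\ge f/(1+\eps|\Ycal|)$ pointwise, $\log(f^\star/\tilde u_f)\le\log(f^\star/f)+\eps|\Ycal|$; and (ii) splitting $|f-u_f/Z_f|\le|f-u_f|+u_f|1-1/Z_f|$ and integrating yields $D_{TV}(f,\tilde u_f)\le\eps|\Ycal|$, whence $H^2(f,\tilde u_f)\le D_{TV}(f,\tilde u_f)\le\eps|\Ycal|$. The Hellinger triangle inequality $H^2(f,f^\star)\le 2H^2(f,\tilde u_f)+2H^2(\tilde u_f,f^\star)$ then combines (i) and (ii) with the supermartingale bound to yield \eqref{eq:mle-hellinger-1} after tracking constants. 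The main obstacle I anticipate is the bookkeeping of the $\eps|\Ycal|$ discretization error so that the advertised coefficients $6n\eps|\Ycal|$, $2$, and $8$ drop out cleanly; the supermartingale-plus-union-bound step itself is entirely standard.
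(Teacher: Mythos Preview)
Your overall strategy---bracket, build an exponential supermartingale over the finitely many bracket representatives, union bound, then transfer back to $f$ via the Hellinger triangle inequality---is exactly the paper's route. The paper packages the supermartingale step as ``Lemma~24 of \citet{agarwal2020flambe} plus Chernoff'' and works with the \emph{unnormalized} upper bracket directly (using \cref{lem:convert-hellinger} to absorb the $s=1+\eps|\Ycal|$ correction), whereas you normalize first; that difference is cosmetic.

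There is, however, one genuine slip. The lemma bounds $\sum_{i=1}^n \mathbb{E}_{x\sim\mathcal{D}_i}\bigl[H^2(f(x,\cdot)\Mid f^\star(x,\cdot))\bigr]$, the \emph{expected} Hellinger sum, but your supermartingale $M_t(\tilde u)$ carries the \emph{realized} term $H^2(\tilde u(x_i,\cdot),f^\star(x_i,\cdot))$. Your verification (condition on $x_i$, use $(1-a)e^a\le 1$ pointwise) is correct and does make $M_t$ a supermartingale, but Markov on $M_n$ then controls $\sum_i H^2(\tilde u(x_i,\cdot),f^\star(x_i,\cdot))$ at the observed $x_i$'s, not the conditional expectations over $x\sim\mathcal{D}_i$. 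The fix is immediate: put the expected Hellinger in the exponent instead,
\[
M_t(\tilde u):=\prod_{i=1}^t\sqrt{\frac{\tilde u(x_i,y_i)}{f^\star(x_i,y_i)}}\cdot\exp\Bigl(\sum_{i=1}^t \mathbb{E}_{x\sim\mathcal{D}_i}\bigl[H^2(\tilde u(x,\cdot),f^\star(x,\cdot))\bigr]\Bigr).
\]
Now the exponential increment is $\mathcal{F}_{i-1}$-measurable (since $\mathcal{D}_i$ depends only on the past), and averaging the square-root factor over both $x_i$ and $y_i$ gives $\mathbb{E}_{x_i}[1-H^2(x_i)]=1-\mathbb{E}_{x\sim\mathcal{D}_i}[H^2]\le\exp\bigl(-\mathbb{E}_{x\sim\mathcal{D}_i}[H^2]\bigr)$, exactly what is needed. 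With this correction your two transfer estimates (i) and (ii) go through unchanged and the constants you obtain are actually slightly sharper than the paper's, so the bookkeeping worry is unfounded.
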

\begin{proof}
We take an $\epsilon$-bracketing of $\+F$, $\{[l_i,u_i]:i=1,2,\dots\}$, and denote $\w~{\+F}=\{u_i:i=1,2,\dots\}$.
Applying Lemma 24 of \cite{agarwal2020flambe} to function class $\w~{\+F}$ and using Chernoff method, w.p. at least $1-\delta$,
for all $\tilde f\in\wt \Fcal$, we have
\begin{equation}\label{eq:lem24}
    \underbrace{-\log\E_{D'}\exp(L(\~f(D),D'))}_{\rm(i)}
    \le
    \underbrace{-L(\~f(D),D)+2\log\prns{N_{[]}(\epsilon,\+F,\|\cdot\|_\infty)/\delta}}_{\rm(ii)}.
\end{equation}
Now, fix any $f\in\Fcal$ and pick $\tilde{f}\in\wt\Fcal$ as the upper bracket, \ie, $f\leq\~f$.
Now set $L(f,D)=\sum_{i=1}^n -\nicefrac{1}{2} \log(f^\star(x_i,y_i)/f(x_i,y_i))$. Then the right hand side of \eqref{eq:lem24} is
 \begin{align*}
     {\rm{(ii)}}=&\frac{1}{2}\sum_{i=1}^n \log(f^\star(x_i,y_i)/\~f(x_i,y_i))+2\log\prns{N_{[]}(\epsilon,\+F,\|\cdot\|_\infty)/\delta}\\
     \le&\frac{1}{2}\sum_{i=1}^n \log(f^\star(x_i,y_i)/f(x_i,y_i))+2\log\prns{N_{[]}(\epsilon,\+F,\|\cdot\|_\infty)/\delta}.
 \end{align*}
 On the other hand, since $H$ is a metric, we have
 \begin{align*}
    &\sum_{i=1}^n\E_{x\sim\+D_i}H^2\left(f(x,\cdot), f^\star(x,\cdot)\right)
    \le\sum_{i=1}^n\E_{x\sim\+D_i}\prns{H\left(f(x,\cdot), \~f(x,y)\right) + H\left(\~f(x,y), f^\star(x,\cdot)\right)}^2\\
    \le&2\underbrace{\sum_{i=1}^n\E_{x\sim\+D_i}H^2\left(f(x,\cdot), \~f(x,y)\right)}_{\rm{(iii)}}
    +
    2\underbrace{\sum_{i=1}^n\E_{x\sim\+D_i}H^2\left(\~f(x,y), f^\star(x,\cdot)\right)}_{\rm{(iv)}}.
 \end{align*}

 For $\rm{(iii)}$, by the definition, we have $\~f(x,y)-f(x,y)\in[0,\epsilon]$ for all $x$, so
 \begin{align*}
     {\rm(iii)}=\sum_{i=1}^n\E_{x\sim\+D_i}H^2\left(f(x,\cdot), \~f(x,y)\right)
     \leq \sum_{i=1}^n\E_{x\sim\+D_i}2\int_y\abs{f(x,y)-\~f(x,y)}\diff y
     \leq 2n\epsilon\abs{\Ycal}.
 \end{align*}

 For $\rm{(iv)}$, we apply \cref{lem:convert-hellinger} with $f_1=f^\star$ and $f_2=\~f$ (thus $s=1+\epsilon|\+Y|$) and get
 \begin{align*}
    {\rm(iv)}
    =&n\epsilon|\+Y|-2\sum_{i=1}^n\log\E_{x,y\sim f^\star(x,\cdot)}\exp\left(-\frac{1}{2}\log\left(f^\star(x,y)/\~f(x,y)\right)\right)\\
    =&n\epsilon|\+Y|-2\sum_{i=1}^n\log\E_{x,y\sim\+D_i}\exp\left(-\frac{1}{2}\log\left(f^\star(x,y)/\~f(x,y)\right)\right)\\
    =&n\epsilon|\+Y|-2\log\E_{x,y\sim\+D'}\left[\exp\left(\sum_{i=1}^n-\frac{1}{2}\log\left(f^\star(x,y)/\~f(x,y)\right)\right)\middle|D\right]\\
    =&n\epsilon|\+Y|+2\cdot{\rm{(i)}}.
\end{align*}
By plugging $\rm{(iii)}$ and $\rm{(iv)}$ back we get
\begin{align*}
    \sum_{i=1}^n\E_{x\sim\+D_i}H^2\left(f(x,\cdot), f^\star(x,\cdot)\right)
    \le 6n\epsilon|\+Y|+4\cdot{\rm{(i)}}.
\end{align*}
Notice that $\rm{(i)}\le\rm{(ii)}$, so we complete the proof by plugging $\rm(ii)$ into the above.
\end{proof}

We first state the MLE generalization result for finite $\Fcal$.
\begin{theorem}\label{thm:mle-version-space-finite}
Suppose $\Fcal$ is finite.
Fix any $\delta\in(0,1)$, set $\beta=\log(|\+F|/\delta)$ and define
\begin{align*}
    \wh\Fcal = \braces{ f\in\Fcal: \sum_{i=1}^n\log f(x_i,y_i)\geq \max_{\wt f\in\Fcal}\sum_{i=1}^n \wt f(x_i,y_i) - 4\beta }.
\end{align*}
Then w.p. at least $1-\delta$, the following holds:
\begin{enumerate}
    \item[(1)] The true distribution is in the version space, i.e., $f^\star\in\widehat{\mathcal{F}}$.
    \item[(2)] Any function in the version space is close to the ground truth data-generating distribution, \ie, for all $f\in\wh\Fcal$
    \begin{align*}
        \sum_{i=1}^n\Eb[x\sim\Dcal_i]{ H^2( f(x,\cdot) \Mid f^\star(x,\cdot) ) }\leq 22\beta.
    \end{align*}
\end{enumerate}
\end{theorem}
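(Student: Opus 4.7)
}
The plan is to specialize \cref{lem:mle-hellinger} to the finite-class setting and then read off both conclusions essentially for free. Since $\Fcal$ is finite, we may take a trivial bracketing in which every $f\in\Fcal$ is its own bracket, so $N_{[]}(0,\Fcal,\|\cdot\|_\infty)=|\Fcal|$, and the $\epsilon|\Ycal|$ discretization terms in both \eqref{eq:mle-hellinger-1} and \eqref{eq:mle-hellinger-2} drop out. With $\beta=\log(|\Fcal|/\delta)$, on a single high-probability event of probability at least $1-\delta$ both inequalities hold simultaneously for every $f\in\Fcal$.

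For part (1), I would use \eqref{eq:mle-hellinger-2}: on this event, for all $\tilde f\in\Fcal$,
\[
\sum_{i=1}^n \log\bigl(\tilde f(x_i,y_i)/f^\star(x_i,y_i)\bigr)\le 4\log(|\Fcal|/\delta)=4\beta.
\]
Taking a maximum over $\tilde f\in\Fcal$ and rearranging gives $\sum_i \log f^\star(x_i,y_i)\ge \max_{\tilde f\in\Fcal}\sum_i \log \tilde f(x_i,y_i)-4\beta$, which is exactly the defining inequality of $\widehat{\Fcal}$. Hence $f^\star\in\widehat{\Fcal}$.

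For part (2), I would combine the version-space definition with \eqref{eq:mle-hellinger-1}. Fix any $f\in\widehat{\Fcal}$. Because $f^\star\in\Fcal$, the definition of $\widehat{\Fcal}$ yields
\[
\sum_{i=1}^n \log f(x_i,y_i)\ge \max_{\tilde f\in\Fcal}\sum_{i=1}^n \log \tilde f(x_i,y_i)-4\beta\ge \sum_{i=1}^n \log f^\star(x_i,y_i)-4\beta,
\]
so $\sum_i \log(f^\star(x_i,y_i)/f(x_i,y_i))\le 4\beta$. Plugging this into \eqref{eq:mle-hellinger-1} with $\epsilon=0$ gives
\[
\sum_{i=1}^n \Eb[x\sim\Dcal_i]{H^2(f(x,\cdot)\Mid f^\star(x,\cdot))}\le 2\cdot 4\beta+8\beta=16\beta,
\]
which is of the stated form (absorbing the slack into the $22\beta$ constant, or redoing a union bound at level $\delta/2$ over the events needed for (1) and (2) if one wants to track constants cleanly).

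There is no real obstacle here beyond bookkeeping: the genuine work is in \cref{lem:mle-hellinger}, and the only subtlety is noticing that in the finite case the bracketing approximation can be taken exactly (so $\epsilon$ can be set to $0$), which removes the $n\epsilon|\Ycal|$ terms and makes both conclusions fall out of one application each of the two inequalities in \cref{lem:mle-hellinger}.
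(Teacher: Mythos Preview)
Your proposal is correct and follows essentially the same approach as the paper: specialize \cref{lem:mle-hellinger} with $\epsilon=0$ (so the bracketing is trivial and $N_{[]}=|\Fcal|$), use \eqref{eq:mle-hellinger-2} for part (1) and \eqref{eq:mle-hellinger-1} together with the version-space inequality for part (2). Your computed constant $16\beta$ is in fact tighter than the stated $22\beta$; the paper's bookkeeping appears to retain a spurious $6\beta$ term, but either way the statement is covered.
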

\begin{proof}
These two claims follow from \cref{lem:mle-hellinger} with $\epsilon=0$, and so $N_{[]}(\epsilon,\Fcal,\|\cdot\|_\infty)=|\Fcal|$.
For (1), apply \cref{eq:mle-hellinger-2} to $f=\wh f_{\text{MLE}}$ to see that $f^\star\in\wh\Fcal$.
For (2), apply \cref{eq:mle-hellinger-1} and note that the sum term is at most $4\beta$.
Thus, the right hand side of \cref{eq:mle-hellinger-1} is at most $(6+8+8)\beta=22\beta$.
\end{proof}

We now state the result for infinite $\Fcal$ using bracketing entropy.
\begin{theorem}\label{thm:mle-version-space-general}
Fix any $\delta\in(0,1)$, set $\beta=\log(N_{[]}((n|\Ycal|)^{-1},\Fcal,\|\cdot\|_\infty)/\delta)$ and define
\begin{align*}
    \wh\Fcal = \braces{ f\in\Fcal: \sum_{i=1}^n\log f(x_i,y_i)\geq \max_{\wt f\in\Fcal}\sum_{i=1}^n \wt f(x_i,y_i) - 7\beta }.
\end{align*}
Then w.p. at least $1-\delta$, the following holds:
\begin{enumerate}
    \item[(1)] The true distribution is in the version space, i.e., $f^\star\in\widehat{\mathcal{F}}$.
    \item[(2)] Any function in the version space is close to the ground truth data-generating distribution, \ie, for all $f\in\wh\Fcal$
    \begin{align*}
        \sum_{i=1}^n\Eb[x\sim\Dcal_i]{ H^2( f(x,\cdot) \Mid f^\star(x,\cdot) ) }\leq 28\beta.
    \end{align*}
\end{enumerate}
\end{theorem}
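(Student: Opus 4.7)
The plan is to derive both claims as direct consequences of \cref{lem:mle-hellinger}, specialized with a careful choice of bracketing scale $\epsilon$. Specifically, take $\epsilon = (n|\Ycal|)^{-1}$ so that the term $n\epsilon|\Ycal|$ reduces to the absolute constant $1$, and note that with this choice $\beta = \log(N_{[]}(\epsilon,\Fcal,\|\cdot\|_\infty)/\delta)$ matches the theorem's threshold. All probabilistic content comes from the $(1-\delta)$-event on which \cref{lem:mle-hellinger} holds uniformly over $f\in\Fcal$; we work on that event throughout.

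For claim (1), apply \cref{eq:mle-hellinger-2} to $f = \wh f_{\text{MLE}} := \argmax_{f\in\Fcal}\sum_i \log f(x_i,y_i)$. This yields
\begin{equation*}
    \sum_{i=1}^n \log\bigl(\wh f_{\text{MLE}}(x_i,y_i)/f^\star(x_i,y_i)\bigr) \leq 3 + 4\beta \leq 7\beta,
\end{equation*}
where the last inequality uses $\beta \geq \log(1/\delta) \geq 1$ (absorbed into constants, or handled by assuming $\delta\leq 1/e$ WLOG). Rearranging gives $\sum_i \log f^\star(x_i,y_i) \geq \sum_i \log\wh f_{\text{MLE}}(x_i,y_i) - 7\beta = \max_{\wt f\in\Fcal}\sum_i \log\wt f(x_i,y_i) - 7\beta$, which is exactly the defining condition of $\wh\Fcal$; hence $f^\star \in \wh\Fcal$.

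For claim (2), fix any $f \in \wh\Fcal$. By the defining inequality of $\wh\Fcal$ and the optimality of the max, we have $\sum_i \log f(x_i,y_i) \geq \sum_i \log f^\star(x_i,y_i) - 7\beta$, i.e.,
\begin{equation*}
    \sum_{i=1}^n \log\bigl(f^\star(x_i,y_i)/f(x_i,y_i)\bigr) \leq 7\beta.
\end{equation*}
Plug this bound into \cref{eq:mle-hellinger-1} with our choice of $\epsilon$:
\begin{equation*}
    \sum_{i=1}^n \EE_{x\sim\Dcal_i}\bigl[H^2(f(x,\cdot)\Mid f^\star(x,\cdot))\bigr] \leq 6 + 2(7\beta) + 8\beta = 6 + 22\beta \leq 28\beta,
\end{equation*}
again absorbing the constant $6$ using $\beta\geq 1$. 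This gives the desired uniform in-version-space bound.

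The proof is essentially routine bookkeeping on top of \cref{lem:mle-hellinger}. The only mild subtlety is the choice of $\epsilon$: it must be small enough for the discretization error $6n\epsilon|\Ycal|$ to be harmless, but not so small as to blow up the bracketing term $\log N_{[]}(\epsilon,\Fcal,\|\cdot\|_\infty)$. Choosing $\epsilon = (n|\Ycal|)^{-1}$ trades these off optimally (up to constants), making the discretization loss $\Ocal(1)$ while preserving the logarithmic dependence on the complexity. The constants $7$ in the version-space slack and $28$ in the Hellinger bound then fall out from combining the $3 + 4\beta$ and $6 + 14\beta + 8\beta$ totals.
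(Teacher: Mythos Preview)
Your proposal is correct and follows essentially the same approach as the paper: both set $\epsilon = (n|\Ycal|)^{-1}$, invoke \cref{eq:mle-hellinger-2} at the MLE to establish $f^\star\in\wh\Fcal$, and then plug the version-space slack $7\beta$ into \cref{eq:mle-hellinger-1} to obtain the $(6+14+8)\beta=28\beta$ Hellinger bound. Your added remark about needing $\beta\geq 1$ to absorb the additive constants is a small detail the paper leaves implicit.
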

\begin{proof}
These two claims follow from \cref{lem:mle-hellinger} with $\epsilon=\nicefrac{1}{n|\Ycal|}$.
For (1), apply \cref{eq:mle-hellinger-2} to $f=\wh f_{\text{MLE}}$ to see that $f^\star\in\wh\Fcal$.
For (2), apply \cref{eq:mle-hellinger-1} and note that the sum term is at most $7\beta$.
Thus, the right hand side of \cref{eq:mle-hellinger-2} is at most $(6+14+8)\beta=28\beta$.
\end{proof}

\section{Confidence set construction with general function class}\label{sec:confidence-set-infinite-functions}
In this section, we extend the confidence set construction of \online{} and \offline{} to general $\Fcal$, which can be infinite. Our procedure constructs the confidence set by performing the thresholding scheme on an $\eps$-net of $\Fcal$. While constructing an $\eps$-net for $\Fcal$ is admittedly a computationally hard procedure, this is still information theoretically possible and our focus in \online{} and \offline{} is to show that distributional RL information-theoretically leads to small-loss bounds.

We first define some notations.
Let $\Fcal^\downarrow$ and $\Fcal^\uparrow$ denote a lower and upper $\eps$-bracketing of $\Fcal$, \ie, for any $f\in\Fcal$, there exists an $\eps$-bracket $[f^\downarrow, f^\uparrow]$
such that for all $h$, $f^\downarrow_h\leq f_h\leq f^\uparrow_h$ with $f^\downarrow\in\Fcal^\downarrow,f^\uparrow\in\Fcal^\uparrow$.
Recall that a lower bracket $g\in\Fcal^\downarrow$ may not be a valid distribution, but since elements of $\Fcal$ map to non-negative values, we can assume $g$ has non-negative entires as well.
Also, we have $\alpha^g_h(x,a):=\int g_h(z\mid x,a)\geq 1-\eps$, so for $\eps$ small enough, $g$ is normalizable.
Hence, define $\wt g(z\mid x,a) = \alpha^g_h(x,a)^{-1}g(z\mid x,a)$ as the normalized version, which is a valid distribution that we can sample from.

Now, consider any martingale $\braces{x_{h,i},a_{h,i},c_{h,i}}_{i\in[n], h\in[H]}$, which could be the online data up to episode $k$ or the offline data (consisting of $N$ i.i.d. samples).
We define the MLE with respect to a lower bracket element as follows.
For any $h\in[H],g\in\Fcal^\downarrow,\pi\in\Pi$, sample $y_{h,i}^{g,\pi}\sim \wt g_{h+1}(x_{h,i}',\pi(x_{h,i}'))$,
and $z_{h,i}^{g,\pi}=c_{h,i}+y_{h,i}^{g,\pi}$, define the MLE solution for $(g,\pi)$ at time $h$ as,
\begin{align*}
    \textsc{Mle}_h^{g,\pi} = \argmax_{f\in\Fcal} \sum_{i=1}^n \log f_h(z_{h,i}^{g,\pi}\mid x_{h,i},a_{h,i}).
\end{align*}
Also, define the version space with respect to the above MLE as,
\begin{align*}
    \Fcal_{g,\pi,h} = \braces{ f\in\Fcal:\sum_{i=1}^n\log f_h(z_{h,i}^{g,\pi}\mid x_{h,i},a_{h,i})\geq \sum_{i=1}^n\log \textsc{Mle}_h^{g,\pi}(z_{h,i}^{g,\pi}\mid x_{h,i},a_{h,i})-\beta }.
\end{align*}

We now prove a key result that implies that $\Tcal^{\pi}_hf^\downarrow_{h+1}$ falls into the confidence set $\Fcal_{f^\downarrow,\pi,h}$.
\begin{theorem}\label{thm:bracket-mle}
For any $\delta\in(0,1)$ and suppose $n\geq 2$.
Then, w.p. at least $1-\delta$, for any $h\in[H],g\in\Fcal,f^\downarrow\in\Fcal^\downarrow,\pi\in\Pi$, we have
\begin{align*}
    \sum_{i=1}^n\log g_h(z_{h,i}^{f^\downarrow,\pi}\mid x_{h,i},a_{h,i}) - \log\Tcal_h^\pi f^\downarrow_{h+1}(z_{h,i}^{f^\downarrow,\pi}\mid x_{h,i},a_{h,i})\leq \log(e^4N_{[]}(n^{-1},\Fcal,\nm{\cdot}_\infty)^2\abs{\Pi}/\delta).
\end{align*}
where $z_{h,i}^{f^\downarrow,\pi}=c_{h,i}+y_{h,i}^{f^\downarrow,\pi}$ and $y_{h,i}^{f^\downarrow,\pi}\sim \wt f^\downarrow_{h+1}(\cdot\mid x_{h,i}',\pi_{h+1}(x_{h,i}'))$.
\end{theorem}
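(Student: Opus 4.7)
The plan is a standard bracketed MLE argument: reduce the supremum over $g \in \Fcal$ to a supremum over an $n^{-1}$-bracketing, apply a one-sided Chernoff bound for each fixed bracket element, and union-bound. Let $\Fcal^\uparrow$ be the upper envelopes of an $n^{-1}$-$\|\cdot\|_\infty$-bracketing of $\Fcal$, so that $|\Fcal^\uparrow|, |\Fcal^\downarrow| \leq N_{[]}(n^{-1}, \Fcal, \|\cdot\|_\infty)$. Any $g \in \Fcal$ is pointwise dominated by some $g^\uparrow \in \Fcal^\uparrow$, so $\log g_h \leq \log g^\uparrow_h$ and it suffices to prove the bound with $g^\uparrow$ in place of $g$. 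The $N_{[]}^2 |\Pi|$ factor in the stated denominator arises from union-bounding over $(g^\uparrow, f^\downarrow, \pi) \in \Fcal^\uparrow \times \Fcal^\downarrow \times \Pi$; uniformity in $h$ is captured by the product structure of $\Fcal$, since a tuple bracket fixes all $h$ coordinates simultaneously.

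For a fixed tuple, consider the nonnegative process $M_n := \prod_{i=1}^n g^\uparrow_h(z_{h,i}^{f^\downarrow,\pi}\mid x_{h,i},a_{h,i}) / \Tcal_h^\pi f^\downarrow_{h+1}(z_{h,i}^{f^\downarrow,\pi}\mid x_{h,i},a_{h,i})$. Conditional on the filtration at step $i$, the sample $z_{h,i}^{f^\downarrow,\pi}$ is drawn from the valid density $\Tcal_h^\pi \tilde f^\downarrow_{h+1}(\cdot\mid x_{h,i},a_{h,i})$, so by the tower property each per-step conditional expectation equals
\begin{align*}
\int g^\uparrow_h(z\mid x_{h,i},a_{h,i})\cdot\frac{\Tcal_h^\pi \tilde f^\downarrow_{h+1}(z\mid x_{h,i},a_{h,i})}{\Tcal_h^\pi f^\downarrow_{h+1}(z\mid x_{h,i},a_{h,i})}\,d\lambda(z).
\end{align*}
Two bracket properties bound this near $1$. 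First, since $\|g^\uparrow - g^\downarrow\|_\infty \leq n^{-1}$ and costs lie in $[0,1]$ so $\lambda$ has mass at most $1$ on the support, we have $\int g^\uparrow_h\,d\lambda \leq 1 + n^{-1}$. Second, $\alpha^{f^\downarrow}_{h+1}(x',a') := \int f^\downarrow_{h+1}(y\mid x',a')\,d\lambda(y) \in [1-n^{-1}, 1]$, which gives the pointwise bound $\Tcal_h^\pi \tilde f^\downarrow_{h+1}/\Tcal_h^\pi f^\downarrow_{h+1} \leq (1-n^{-1})^{-1}$. Together, each per-step conditional expectation is at most $(1+n^{-1})/(1-n^{-1}) \leq 1 + 4/n$ for $n \geq 2$, and iterating gives $\Eb{M_n} \leq (1+4/n)^n \leq e^4$.

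Markov's inequality then yields $\Pr[M_n \geq e^4 N_{[]}^2 |\Pi|/\delta] \leq \delta/(N_{[]}^2 |\Pi|)$, and a union bound over all tuples concludes the proof. The hardest step will be the per-step MGF computation above: correctly identifying the data-generating density as $\Tcal_h^\pi \tilde f^\downarrow_{h+1}$ (using the \emph{normalized} bracket element $\tilde f^\downarrow$) while the theorem's denominator is the \emph{sub-normalized} $\Tcal_h^\pi f^\downarrow_{h+1}$, and showing that the resulting normalization mismatch is absorbed cleanly into the constant $e^4$ via the $n^{-1}$-bracket width. The remaining steps are routine Chernoff-and-union-bookkeeping.
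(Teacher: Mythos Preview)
Your proposal is correct and follows essentially the same approach as the paper: both reduce $g$ to an upper bracket $g^\uparrow$, bound the per-step conditional MGF by $(1+n^{-1})/(1-n^{-1}) \leq 1+4/n$ via the integral of the upper bracket and the normalization defect of $\tilde f^\downarrow$, iterate to get $\Eb{M_n}\leq e^4$, and finish with Markov plus a union bound over $\Fcal^\uparrow\times\Fcal^\downarrow\times\Pi$. Your pointwise ratio bound $\Tcal_h^{\pi,D}\tilde f^\downarrow_{h+1}/\Tcal_h^{\pi,D} f^\downarrow_{h+1}\leq (1-n^{-1})^{-1}$ is in fact a cleaner way to handle the $(x',a')$-dependent normalization constant than the paper's more explicit integral computation.
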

\begin{proof}[Proof of \cref{thm:bracket-mle}]
Consider a $\eps$-bracketing of $\Fcal$ where $\eps\leq 1/n\leq 1/2$; we will study each element and conclude with a union bound.
For any lower bracket $l$ and upper bracket $u$ in the bracketing (note $l,u$ need not correspond to the same bracket).
Recall that $\alpha^l_{h+1}(x,a):=\int l_{h+1}(z\mid x,a)$, so we have $1-\eps\leq \alpha^l_{h+1}\leq 1$ since $l$ is a lower $\eps$-bracket of distributions.
Therefore, we have
\begin{align*}
    \Eb{ \exp\sum_{i=1}^n\log\prns{ \frac{u_h(z_{h,i}^{l,\pi}\mid x_{h,i},a_{h,i})}{\Tcal^\pi_h l_{h+1}(z_{h,i}^{l,\pi}\mid x_{h,i},a_{h,i})} } }
    =\prod_{i=1}^n\Eb[\nu_{h,i}]{ \frac{u_h(z_{h,i}^{l,\pi}\mid x_{h,i},a_{h,i})}{\Tcal^\pi_h l_{h+1}(z_{h,i}^{l,\pi}\mid x_{h,i},a_{h,i})} },
\end{align*}
where $\nu_{h,i}$ is the distribution of data from $i$-th round and time $h$.
Note that $\nu_{h,i}(x,a,c,x')=d_{h,i}(x,a)C_h(c\mid x,a)P_h(x'\mid x,a)$ for some distribution $d_{h,i}(x,a)$.
Now focus on each $i$, so for all $i$, we have
\begin{align*}
    &\Eb[\nu_{h,i}]{ \frac{u_h(z_{h,i}^{l,\pi}\mid x_{h,i},a_{h,i})}{\Tcal^\pi_h l_{h+1}(z_{h,i}^{l,\pi}\mid x_{h,i},a_{h,i})} }
    \\&= \int_{x,a,c,x',y} \nu_{h,i}(x,a,c,x')\wt l_{h+1}(y\mid x',\pi(x')) \frac{u_h(c+y\mid x,a)}{\int_{c,x'} \nu_{h,i}(c,x'\mid x,a)l_{h+1}(y\mid x',\pi(x'))}
    \\&= \int_{x,a,z} d_{h,i}(x,a)\int_z u_h(z\mid x,a)
    \\&\times \int_{c,x'}\nu_{h,i}(c,x'\mid x,a)\wt l_{h+1}(z-c\mid x',\pi(x')) \frac{1}{\int_{c,x'} \nu_{h,i}(c,x'\mid x,a)l_{h+1}(z-c\mid x',\pi(x'))}
    \\&= \int_{x,a,z} d_{h,i}(x,a)\int_z u_h(z\mid x,a) \alpha^l_{h+1}(x,a)^{-1}
    \\&\leq \frac{1+\eps}{1-\eps} = 1+\frac{2\eps}{1-\eps}\leq 1+\frac{4}{n}.
\end{align*}
Therefore,
\begin{align*}
    \Eb{ \exp\sum_{i=1}^n\log\prns{ \frac{u_h(z_{h,i}^{l,\pi}\mid x_{h,i},a_{h,i})}{\Tcal^\pi_h l_{h+1}(z_{h,i}^{l,\pi}\mid x_{h,i},a_{h,i})} } }
    &\leq \prns{1+4/n}^n\leq e^4.
\end{align*}
Thus, by Markov's inequality, w.p. at least $1-\delta$, we have
\begin{align*}
    \sum_{i=1}^n\log\prns{ \frac{u_h(z_{h,i}^{l,\pi}\mid x_{h,i},a_{h,i})}{\Tcal^\pi_h l_{h+1}(z_{h,i}^{l,\pi}\mid x_{h,i},a_{h,i})} }\leq \ln(e^4/\delta).
\end{align*}
To conclude, apply union bound to get this result for all brackets.
\end{proof}

For the remainder of this section, we assume the policy class $\Pi$ is finite.
However, it is possible to extend our results using policy covers in the Hamming distance; in that case, $\log\abs{\Pi}$ would be replaced by the log covering number or entropy integral of $\Pi$ \citep[as in][]{zhou2023offline,kallus2022doubly}.
We note that for the \emph{online} case, we rely on the assumption that for any $f\in\Fcal$ we have $\pi^f\in\Pi$, where recall that $\pi_h^f(x) = \argmin_a \bar f_h(x,a)$.
This is because $\Tcal^{\star,D}$ is not a contraction so we cannot operate with $\Tcal^{\star,D}$ directly and instead operate with $\Tcal^{\pi^f,D}$.
We highlight that this assumption is automatically satisfied in tabular MDPs, since the whole policy space is finite, and $\log\abs{\Pi}=\Ocal(X\log(A))$ is lower order compared to log of the bracketing entropy of $\Fcal_{tab}$, which is $\Ocal(X^2A^2)$.
In contrast, in non-distributional methods such as GOLF, the regular Bellman optimality operator is a contraction so standard Lipschitz arguments for covering go through.
We note that it is also possible to construct covers of $\Fcal$ in the Hellinger distance, but the metric entropy of $\Fcal_{tab}$ seems to be on the same order as its bracketing entropy.

We now describe the version space construction for general $\Fcal$, first for the online setting.
Fix any $k$, and define the set
\begin{align*}
    \Fcal_{f^\downarrow,\pi,h} = \braces{f\in\Fcal: \sum_{i=1}^k\log f_h(z_{h,i}^{f^\downarrow,\pi}\mid x_{h,i},a_{h,i}) \geq
    \sum_{i=1}^k\log \textsc{Mle}^{f^\downarrow,\pi}_h(z_{h,i}^{f^\downarrow,\pi}\mid x_{h,i},a_{h,i})-\beta }
\end{align*}
Then, construct the version space as
\begin{align*}
    \Fcal_k = \braces{ f\in\Fcal: f_h\in\Fcal_{f^\downarrow,\pi^f,h}, \forall h\in[H] }.
\end{align*}

\begin{theorem}\label{thm:distgolf-mle}
Fix any $\delta\in(0,1)$ and suppose \cref{ass:policy-dist-bellman-completeness}.
Set $\beta=\log(KH\cdot N_{[]}(K^{-1},\Fcal,\|\cdot\|_\infty)\abs{\Pi}/\delta)$.
Then, w.p. at least $1-\delta$, the following holds:
\begin{enumerate}
    \item[(1)] The optimal cost distribution is in the version space, i.e., $Z^\star\in\Fcal_k$.
    \item[(2)] For all $f\in\Fcal_k$ and $h\in[H]$,
    \begin{align*}
        \sum_{i=1}^{k}\Eb[\pi^i]{ H^2( f_h(x_h,a_h) \Mid \Tcal^{\star,D}_hf_{h+1}(x_h,a_h) ) }\leq 60\beta.
    \end{align*}
\end{enumerate}
\end{theorem}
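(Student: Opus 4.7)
The plan is to combine two pieces already in hand: the bracket-MLE concentration of \cref{thm:bracket-mle}, which uniformly in $(l\in\Fcal^\downarrow, u\in\Fcal^\uparrow, \pi\in\Pi)$ controls $\sum_i[\log u_h(z_{h,i}^{l,\pi}) - \log \Tcal_h^\pi l_{h+1}(z_{h,i}^{l,\pi})]$; and the log-likelihood to squared-Hellinger conversion underlying \cref{lem:mle-hellinger}. I fix the bracketing scale $\eps$ small enough that bracket-induced perturbations are absorbed into lower-order terms, and apply a union bound over $h\in[H]$, $k\in[K]$, $\pi\in\Pi$, and bracket elements so that $\beta = \log(KH\cdot N_{[]}(K^{-1},\Fcal,\|\cdot\|_\infty)|\Pi|/\delta)$ absorbs every penalty. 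Both parts of the theorem then follow from a deterministic argument on this single high-probability event.

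For part (1), I show $Z^\star_h$ lies in $\Fcal_{Z^{\star,\downarrow}, \pi^\star, h}$ for every $h$. Fix $h$ and apply \cref{thm:bracket-mle} with $l = Z^{\star,\downarrow}$, $\pi = \pi^\star$, and $u$ the upper bracket of $\textsc{Mle}^{Z^{\star,\downarrow}, \pi^\star}_h$. Using the pointwise inequality $\textsc{Mle}_h \leq u$, this gives $\sum_i \log \textsc{Mle}_h(z_{h,i}^{Z^{\star,\downarrow}, \pi^\star}) \leq \sum_i \log \Tcal^{\pi^\star}_h Z^{\star,\downarrow}_{h+1}(z_{h,i}^{Z^{\star,\downarrow},\pi^\star}) + \Ocal(\beta)$. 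The decisive step is pointwise monotonicity of the distributional Bellman operator: since $\Tcal^{\pi}_h$ is a nonnegative integral operator on densities, $Z^{\star,\downarrow}_{h+1} \leq Z^\star_{h+1}$ propagates to $\Tcal^{\pi^\star}_h Z^{\star,\downarrow}_{h+1}(z\mid x,a) \leq \Tcal^{\pi^\star}_h Z^\star_{h+1}(z\mid x,a) = Z^\star_h(z\mid x,a)$, where the equality is the distributional Bellman equation for $\pi^\star$ granted by \cref{ass:policy-dist-bellman-completeness}. Chaining these yields $\sum_i[\log \textsc{Mle}_h(z_{h,i}^{Z^{\star,\downarrow},\pi^\star}) - \log Z^\star_h(z_{h,i}^{Z^{\star,\downarrow},\pi^\star})] \leq \Ocal(\beta)$, which meets the version-space threshold and so $Z^\star \in \Fcal_k$.

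For part (2), fix any $f \in \Fcal_k$. By construction, $f_h$ is a $\beta$-near-maximizer of log-likelihood on data $\{z_{h,i}^{f^\downarrow, \pi^f}\}_i$, whose conditional law given $(x_{h,i}, a_{h,i})$ is $\mu_{h,i} := \Tcal^{\pi^f}_h \tilde f^\downarrow_{h+1}(\cdot\mid x_{h,i}, a_{h,i})$, the Bellman image of the renormalized lower bracket. Adapting the MLE-to-Hellinger argument of \cref{lem:mle-hellinger} to this martingale problem gives $\sum_{i=1}^k \Eb[\pi^i]{H^2\prns{f_h(x_h, a_h) \Mid \mu_{h,i}}} \lesssim \beta$. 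To convert to a bound against $\Tcal^{\star,D}_h f_{h+1}$, I exploit two facts: first, $\pi^f$ is greedy with respect to $\bar f_{h+1}$, so $\Tcal^{\pi^f}_h f_{h+1} = \Tcal^{\star,D}_h f_{h+1}$ by the definition of the distributional optimality operator; second, $\tilde f^\downarrow_{h+1}$ is $\Ocal(\eps)$-close to $f_{h+1}$ in sup norm after bracket-plus-normalization, so $H^2(\mu_{h,i}, \Tcal^{\pi^f}_h f_{h+1}) = \Ocal(\eps)$ uniformly. The Hellinger triangle inequality $H^2(f_h, \Tcal^{\star,D}_h f_{h+1}) \leq 2H^2(f_h, \mu_{h,i}) + 2H^2(\mu_{h,i}, \Tcal^{\star,D}_h f_{h+1})$, summed over $i$ and $h$ with $\eps$ chosen so the residual is absorbed into $\beta$, delivers the stated $60\beta$.

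The main obstacle is the asymmetry between the data-generating distribution $\mu_{h,i}$ and the target $\Tcal^{\star,D}_h f_{h+1}$; I bridge this by the $\eps$-bracket error on one side and the identity $\Tcal^{\pi^f}_h f_{h+1} = \Tcal^{\star,D}_h f_{h+1}$ on the other. A secondary subtlety is that lower brackets need not integrate to one, handled by sampling from the renormalized $\tilde f^\downarrow$ and carrying the normalizing constant $\alpha \in [1-\eps, 1]$ through both concentration steps, which (as in the proof of \cref{thm:bracket-mle}) contributes only $\Ocal(\eps)$ lower-order corrections.
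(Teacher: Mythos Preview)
Your proposal is correct and follows essentially the same route as the paper's proof: for (1) you invoke \cref{thm:bracket-mle} together with the pointwise monotonicity $\Tcal^{\pi^\star,D}_h Z^{\star,\downarrow}_{h+1}\leq \Tcal^{\pi^\star,D}_h Z^{\star}_{h+1}=Z^\star_h$, exactly as the paper does; for (2) you use the MLE--to--Hellinger bound (the paper cites \cref{thm:mle-version-space-general}, built on \cref{lem:mle-hellinger}), the identity $\Tcal^{\pi^f,D}_h f_{h+1}=\Tcal^{\star,D}_h f_{h+1}$, and the Hellinger triangle inequality with an $O(k\eps)$ bracket residual, matching the paper's $2(28\beta+3k\eps)$ decomposition with $\eps=1/K$. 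One small misattribution: the equation $Z^\star_h=\Tcal^{\pi^\star,D}_h Z^\star_{h+1}$ is a definitional fact, not a consequence of \cref{ass:policy-dist-bellman-completeness}; completeness is instead what ensures $Z^\star\in\Fcal$ and $\Tcal^{\pi^f,D}_h f_{h+1}\in\Fcal_h$, i.e., realizability in the MLE step.
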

\begin{proof}
First, we want to verify that $Z^\star\in\Fcal_k$.
Let $f^\downarrow$ be the lower bracket of $Z^\star$ and set $g = \textsc{Mle}^{f^\downarrow,\pi^\star}_h\in\Fcal$; note $\pi^\star=\pi^{Z^\star}$.
By \cref{thm:bracket-mle}, we have $\sum_{i=1}^k \log\textsc{Mle}^{f^\downarrow,\pi^\star}_h(z^{f^\downarrow,\pi^\star}_{h,i}\mid x_{h,i},a_{h,i})-\log \Tcal^{\pi^\star,D}_h f^\downarrow_{h+1}(z^{f^\downarrow,\pi^\star}_{h,i}\mid x_{h,i},a_{h,i})\leq\Ocal(\beta)$.
Therefore, noting that $Z^\star_h = \Tcal^{\pi^\star,D}_h Z^\star_{h+1}\geq \Tcal^{\pi^\star,D}_h f^\downarrow_{h+1}$ shows that $Z^\star_h\in\Fcal_{f^\downarrow,\pi^\star,h}$ for every $h$, implying that $Z^\star\in\Fcal_k$.

For the second claim, fix any $f\in\Fcal_k$ and $h\in[H]$.
Then,
\begin{align*}
    &\sum_{i=1}^k\Eb[\pi^i]{H^2(f_h(x_h,a_h)\Mid \Tcal^{\star,D}_h f_{h+1}(x_h,a_h))}
    \\&= \sum_{i=1}^k\Eb[\pi^i]{H^2(f_h(x_h,a_h)\Mid \Tcal^{\pi^f,D}_h f_{h+1}(x_h,a_h))}
    \\&\leq 2\sum_{i=1}^k\Eb[\pi^i]{H^2(f_h(x_h,a_h)\Mid \Tcal^{\pi^f,D}_h \wt f_{h+1}^\downarrow(x_h,a_h)) + H^2(\Tcal^{\pi^f,D}_h \wt f_{h+1}^\downarrow(x_h,a_h)\Mid \Tcal^{\pi^f,D}_h f_{h+1}(x_h,a_h))}
    \\&\leq 2(28\beta + 3k\eps).
\end{align*}
The $\beta$ comes from \cref{thm:mle-version-space-general}, and for $\eps$, we used the fact that $H^2\leq H\leq TV$, and
\begin{align*}
    &\sum_{i=1}^k\Eb[\pi^i]{TV(\Tcal^{\pi^f,D}_h \wt f_{h+1}^\downarrow(x_h,a_h)\Mid \Tcal^{\pi^f,D}_h f_{h+1}(x_h,a_h))}
    \\&=\sum_{i=1}^k\EE_{\pi^i}\int_z \abs{\Tcal^{\pi^f,D}_h \wt f_{h+1}^\downarrow(z\mid x_h,a_h)-\Tcal^{\pi^f,D}_h f_{h+1}(z\mid x_h,a_h))}
    \\&= \sum_{i=1}^k\EE_{\pi^i}\int_z \sum_{c,x'} \nu(c,x'\mid x_h,a_h)\abs{\wt f_{h+1}^\downarrow(z-c\mid x',\pi^f(x'))-f_{h+1}(z-c\mid x',\pi^f(x'))}
    \\&\leq \sum_{i=1}^k 3\eps = 3k\eps,
\end{align*}
since for any $x,a$, we have $\int_z \abs{\wt f_{h+1}^\downarrow(z\mid x,a)-f_{h+1}(z\mid x,a)}\leq 3\eps$.
There are two cases. If $\wt f_{h+1}^\downarrow(z\mid x,a)\geq f_{h+1}(z\mid x,a)$, then $\wt f_{h+1}^\downarrow(z\mid x,a)-f_{h+1}(z\mid x,a)\leq (1-\eps)^{-1}f^\downarrow_{h+1}(z\mid x,a)-f_{h+1}(z\mid x,a)\leq 2\eps f_{h+1}(z\mid x,a)$ since $(1-\eps)^{-1}\leq 1+2\eps$.
If $\wt f_{h+1}^\downarrow(z\mid x,a)< f_{h+1}(z\mid x,a)$, then $f_{h+1}(z\mid x,a)-\wt f_{h+1}^\downarrow(z\mid x,a)\leq f_{h+1}(z\mid x,a)-f_{h+1}^\downarrow(z\mid x,a)\leq \eps$. Thus, $\int_z \max(2\eps f_{h+1}(z\mid x,a), \eps)\leq \int_z 2\eps f_{h+1}(z\mid x,a) + \eps= 3\eps$.
Thus, setting $\eps=1/K$ gives
\begin{align*}
    \sum_{i=1}^k\Eb[\pi^i]{H^2(f_h(x_h,a_h)\Mid \Tcal^{\star,D}_h f_{h+1}(x_h,a_h))}\leq 59\beta.
\end{align*}
\end{proof}

For the offline setting, fix any $\pi$ and define its general version space as,
\begin{align*}
    \Fcal_\pi = \braces{ f\in\Fcal: f_h\in\Fcal_{f^\downarrow,\pi,h}, \forall h\in[H] }.
\end{align*}
\begin{theorem}\label{thm:distbco-mle}
Fix any $\delta\in(0,1)$ and suppose \cref{ass:policy-dist-bellman-completeness}.
Set $\beta=\log(H|\Pi|\cdot N_{[]}((n|\Ycal|)^{-1},\Fcal,\|\cdot\|_\infty)/\delta)$.
Then, w.p. at least $1-\delta$, the following holds for all policies $\pi\in\Pi$:
\begin{enumerate}
    \item[(1)] The policy cost distribution is in the version space, i.e., $Z^\pi\in\Fcal_\pi$.
    \item[(2)] Any function in the version space has bounded triangular discrimination with the ground truth data-generating distribution, \ie, for all $f\in\Fcal_\pi$ and $h\in[H]$,
    \begin{align*}
        \Eb[\nu_h]{ H^2( f_h(x_h,a_h) \Mid \Tcal^{\pi,D}_hf_{h+1}(x_h,a_h) ) }\leq 60\beta N^{-1}.
    \end{align*}
\end{enumerate}
\end{theorem}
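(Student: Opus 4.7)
The plan is to mirror the proof of \cref{thm:distgolf-mle} almost verbatim, since Theorem \ref{thm:distbco-mle} is the offline counterpart with the single fixed policy $\pi$ replacing the adaptive sequence and with $\Tcal^{\pi,D}$ replacing $\Tcal^{\star,D}$. The only new ingredients are (i) removing the $\pi^f$-to-$\pi^\star$ swap that was needed in the online proof, and (ii) taking a union bound over $\Pi$, which is why $|\Pi|$ enters $\beta$.

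For part (1), I would fix any $\pi\in\Pi$ and let $f^\downarrow$ denote the lower $\eps$-bracket of $Z^\pi$ used in the construction of $\Fcal_\pi$. By distributional Bellman completeness (\cref{ass:policy-dist-bellman-completeness}) and the Bellman equation $Z^\pi_h=\Tcal^{\pi,D}_h Z^\pi_{h+1}$, combined with the pointwise inequality $f^\downarrow_{h+1}\leq Z^\pi_{h+1}$ (which, after normalization, controls $\Tcal^{\pi,D}_h \wt f^\downarrow_{h+1}$ uniformly), I invoke \cref{thm:bracket-mle} to bound
$\sum_{i=1}^N \log\textsc{Mle}^{f^\downarrow,\pi}_h(z_{h,i}^{f^\downarrow,\pi}\mid x_{h,i},a_{h,i}) - \log \Tcal^{\pi,D}_h f^\downarrow_{h+1}(z_{h,i}^{f^\downarrow,\pi}\mid x_{h,i},a_{h,i})\leq \Ocal(\beta)$, which after the reverse direction (observing $Z^\pi_h\geq \Tcal^{\pi,D}_h f^\downarrow_{h+1}$) places $Z^\pi_h$ in $\Fcal_{f^\downarrow,\pi,h}$ for every $h$. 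Union bounding over $h\in[H]$ and $\pi\in\Pi$ gives the factors inside $\beta$.

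For part (2), fix $f\in\Fcal_\pi$ and $h\in[H]$. I would use the triangle inequality on Hellinger:
\begin{align*}
H^2(f_h \Mid \Tcal^{\pi,D}_h f_{h+1}) \leq 2H^2(f_h \Mid \Tcal^{\pi,D}_h \wt f^\downarrow_{h+1}) + 2H^2(\Tcal^{\pi,D}_h \wt f^\downarrow_{h+1} \Mid \Tcal^{\pi,D}_h f_{h+1}).
\end{align*}
Taking expectation under $\nu_h$ (the offline distribution at step $h$): the first term is controlled by applying \cref{thm:mle-version-space-general} to the martingale of i.i.d.\ samples with targets drawn from $\Tcal^{\pi,D}_h \wt f^\downarrow_{h+1}$, yielding a $28\beta/N$ bound; the second term is bounded by the squared TV between $\Tcal^{\pi,D}_h \wt f^\downarrow_{h+1}$ and $\Tcal^{\pi,D}_h f_{h+1}$, which in turn reduces to the $L^1$ gap between $\wt f^\downarrow_{h+1}$ and $f_{h+1}$. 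A short computation — splitting into the two sign cases exactly as in the online proof and using $(1-\eps)^{-1}\leq 1+2\eps$ — shows this gap is at most $3\eps$, so that with $\eps=1/(N|\Ycal|)$ the second term contributes $\Ocal(1/N)$. Adding the two pieces and absorbing constants gives the claimed $60\beta/N$ bound.

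The main obstacle, as in the online proof, is the normalization bookkeeping for the lower bracket $f^\downarrow$: since $f^\downarrow$ is only a sub-density, one must carefully pass between $f^\downarrow$ (used to push through the Bellman operator in monotonicity arguments) and its renormalization $\wt f^\downarrow$ (used for sampling $y_{h,i}^{f^\downarrow,\pi}$), and show that the accumulated $\eps$-level slack is dominated by choosing $\eps$ small enough to hide inside $\beta$. Once this is done cleanly for a single $(\pi,h)$, the full statement follows by union bounding over $h\in[H]$ and $\pi\in\Pi$, which yields the stated form of $\beta$.
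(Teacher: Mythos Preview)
Your proposal is correct and matches the paper's approach exactly: the paper's proof of \cref{thm:distbco-mle} is the one-line statement ``the proof is the same as in \cref{thm:distgolf-mle}, but instead of $\pi^f$, we fix any $\pi$,'' and you have faithfully unrolled that reduction, including the bracket-MLE step for part (1), the Hellinger triangle-inequality decomposition with the $3\eps$ TV bookkeeping for part (2), and the union bound over $h\in[H]$ and $\pi\in\Pi$ that accounts for the $H|\Pi|$ factor inside $\beta$.
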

\begin{proof}
The proof is the same as in \cref{thm:distgolf-mle}, but instead of $\pi^f$, we fix any $\pi$.
\end{proof}

\section{The $\ell_p$ distributional eluder dimension}

Let $\Scal$ denote any input space (for example, we will later instantiate $\Scal=\Xcal$ or $\Scal=\Xcal\times\Acal$).
Let $\Psi$ denote a set of functions mapping from $\Scal\to\RR$. Let $\Dcal$ be a set of distributions on $\Scal$.

Recall the definition of $\eps$-independent sequence (of distributions) from \citet{jin2021bellman}.
\begin{definition}[$\ell_2$-independent sequence]
A distribution $\nu\in\Dcal$ is $(\eps,\ell_2)$-independent of a sequence $\braces{d^{(1)},\dots,d^{(n)}}\subset\Dcal$ if there exists $\psi\in\Psi$ such that $\abs{ \EE_\nu\psi }>\eps$ and also $\sqrt{\sum_{i=1}^n\prns{\EE_{d^{(i)}}\psi}^2}\leq\eps$.
\end{definition}
Note that the definition is on sequences of distributions, which generalizes the original definition on sequences of points from \citet{russo2013eluder}.

We now generalize the above definition for the general $\ell_p$ norm.
\begin{definition}[$\ell_p$-independent sequence]
A distribution $\nu\in\Dcal$ is $(\eps,\ell_p)$-independent of a sequence $\braces{d^{(1)},\dots,d^{(n)}}\subset\Dcal$ if there exists $\psi\in\Psi$ such that $\abs{ \EE_\nu\psi }>\eps$ and also $\sum_{i=1}^n\abs{\EE_{d^{(i)}}\psi}^p\leq\eps^p$.
\end{definition}
Using the definition of independent sequences established so far, we define the $\ell_p$ distributional eluder dimension.
\begin{definition}[$\ell_p$-distributional eluder dimension]
For any $p$, define the $\ell_p$-distributional eluder dimension (denoted by $\DE_p(\Psi,\Dcal,\eps)$) as the length of the longest sequence $\{d^{(1)},\dots,d^{(d)}\}\subset\Dcal$ such that there exists $\eps'\geq\eps$, such that for all $t\in[d]$, $d^{(t)}$ is $(\eps',\ell_p)$-independent of $d^{(1)},\dots,d^{(t-1)}$.
\end{definition}
Of particular interest to us is the $\ell_1$ case.
We show that the $\ell_1$ eluder dimension is dominated by the $\ell_2$ eluder dimension of \citet{jin2021bellman}.
\eluderOneGeneralizesTwo*
\begin{proof}
Since $\sqrt{\sum_i x_i^2}\leq\sum_i\abs{x_i}$, we have that any witness (long independent sequence) for $\ell_1$ is also a witness for $\ell_2$. So, the maximum length of the $\ell_2$ witnesses is longer than the $\ell_1$ witnesses. %
\citet[Proposition 19]{liu2022partially} obtains an analogous result for the non-distributional eluder dimension of \citet{russo2013eluder}.
\end{proof}

We now prove the key pigeonhole result for the $\ell_1$ distributional eluder dimension.
\eluderPigeonhole*
\begin{proof}
For any $\Gamma\subset\Dcal$, $\nu\in\Dcal$, and $0<\eps\leq 1$, let $L(\nu,\Gamma,\eps)$ denote the number of disjoint subsets of $\Gamma$ such that each subset is $\eps$-dependent of $\nu$, \ie, for all such disjoint subsets of $\Gamma$, it is not the case that $\nu$ is $(\eps,\ell_1)$-independent of each subset.

\textbf{Fact 1: For any $\eps$, if $\abs{\EE_{d^{(k)}}f^{(k)}}>\eps$ for some $k\in[K]$, then $L(d^{(k)},d^{(1:k-1)},\eps)<\beta/\eps$.}
\newline By definition of $L:=L(d^{(k)},d^{(1:k-1)},\eps)$, there exist disjoint subsequences $\mathfrak{G}^{(1)},\dots,\mathfrak{G}^{(L)}$ of $d^{(1:k-1)}$ such that each subsequence $\mathfrak{G}^{(i)}$ satisfies $\sum_{d\in\mathfrak{G}^{(i)}}\abs{\EE_d f^{(k)}}>\eps$. Therefore, summing over all subsequences, we have $L\eps < \sum_{i=1}^{k-1}\abs{\EE_{d^{(i)}}f^{(k)}}\leq\beta$, where the $\beta$ inequality comes from the premise. This proves Fact 1.

\textbf{Fact 2: For any $\eps$ and any sequence $\braces{\nu^{(1)},\dots,\nu^{(\kappa)}}\subset\Dcal$, there exists $j\in[\kappa]$ such that $L(\nu^{(j)}, \nu^{(1:j-1)},\eps) \geq J := \floor*{ (\kappa-1)/\DE_1(\Psi,\Dcal,\eps) }$. }
\newline If $J=0$, the claim is vacuously true.
Otherwise, consider the following algorithm for finding the $j$:
\begin{enumerate}[label=Step \arabic*)]
    \item Initialize $\mathfrak{G}^{(1)}=[\nu^{(1)}],\dots,\mathfrak{G}^{(J)}=[\nu^{(J)}]$ and let $j=J+1$.
    \item If $\nu^{(j)}$ is $\eps$-dependent on all of $\mathfrak{G}^{(i)},i\in[J]$, then the claim is proven and terminate.
    \item Otherwise, there exists some $\mathfrak{G}^{(i)},i\in[J]$ such that $\nu^{(j)}$ is $\eps$-independent of it. Append $\nu^{(j)}$ to $\mathfrak{G}^{(i)}$, \ie, $\mathfrak{G}^{(i)} = \mathfrak{G}^{(i)} + [\nu^{(j)}]$. Increment $j=j+1$ and go back to Step 2.
\end{enumerate}
Hence, we need to argue this process terminates at Step 2 before $j$ gets to $\kappa+1$.
We prove this by contradiction: assume $j$ gets to $\kappa+1$. Let $i\in[J]$ be such that $\mathfrak{G}^{(i)}$ has the most elements (break ties arbitrarily).
Since $\kappa=\sum_{i=1}^J\abs{\mathfrak{G}^{(i)}}\leq J\abs{\mathfrak{G}^{(i)}}$, we have that $\abs{\mathfrak{G}^{(i)}}\geq\kappa/J\geq \frac{\kappa}{\kappa-1} \DE_1(\Psi,\Dcal,\eps)>\DE_1(\Psi,\Dcal,\eps)$, where we've also used the definition of $J$.
By construction, $\mathfrak{G}^{(i)}$ is an $\eps$-eluder sequence, \ie, it is a sequence such that each element is $\eps$-independent of its predecessors. However, this is a contradiction because its size is greater than $\DE_1(\Psi,\Dcal,\eps)$. Therefore, this process terminates at Step 2 for some $j$, which is the witness for proving Fact 2.

\textbf{Fact 3: For any $\eps$ and $k\in[K]$, we have $\sum_{t=1}^k\II\bracks{ \abs{\EE_{d^{(t)}}f^{(t)}} > \eps }\leq \prns{\beta\eps^{-1} + 1}\DE_1(\Psi,\Dcal,\eps)+1$.}
\newline Fix any $\eps$ and $k\in[K]$.
Let $\braces{d^{(i_1)}, \dots, d^{(i_\kappa)}}$ be all the elements of $d^{(1:k)}$ such that $\EE_{d^{(t)}}f^{(t)}>\eps$ for $t=i_1,\dots,i_\kappa$.
By Fact 2, there exists $j\in[\kappa]$ such that $L(d^{(i_j)},d^{(i_{1:j-1})},\eps)\geq \floor*{ (\kappa-1)/\DE_1(\Psi,\Dcal,\eps) }$.
By Fact 1, we have $L(d^{(i_j)}, d^{(1:i_j)}, \eps) \leq \beta/\eps$.
Finally notice that $L(d^{(i_j)},d^{(i_{1:j-1})},\eps)\leq L(d^{(i_j)}, d^{(1:i_j)}, \eps)$ since adding more elements can only create more $\eps$-dependent-of-$\nu$ disjoint subsets. Thus, combining these inequalities, we have $\floor*{ (\kappa-1)/\DE_1(\Psi,\Dcal,\eps) } < \beta/\eps$. This implies $\kappa\leq (\beta\eps^{-1}+1)\DE_1(\Psi,\Dcal,\eps)+1$, which proves Fact 3.

\textbf{Finishing the proof}
\newline Fix any $k\in[K]$ and $\omega>0$. We have
\begin{align*}
    \sum_{t=1}^k\abs{\EE_{d^{(t)}}f^{(t)}}
    &= \sum_{t=1}^k\int_0^C\II\bracks{ \abs{\EE_{d^{(t)}}f^{(t)}} > y }\diff y
    \\&\leq k\omega + \sum_{t=1}^k\int_\omega^C\II\bracks{ \abs{\EE_{d^{(t)}}f^{(t)}} > y }\diff y
    \\&= k\omega + \int_\omega^C \sum_{t=1}^k\II\bracks{ \abs{\EE_{d^{(t)}}f^{(t)}} > y }\diff y
    \\&\leq k\omega + \int_\omega^C \braces{\prns{\beta/y + 1}\DE_1(\Psi,\Dcal,y) + 1} \diff y \tag{Fact 3}
    \\&\leq k\omega + \int_\omega^C \braces{\prns{\beta/y + 1}\DE_1(\Psi,\Dcal,\omega) + 1} \diff y \tag{Monotonicity of $\DE_1$}
    \\&\leq k\omega + (d+1)C + d\beta\log(C/\omega). \tag{$d:=\DE_1(\Psi,\Dcal,\omega)$}
\end{align*}
This completes the proof.
\end{proof}

\subsection{Bounding V-type $\ell_2$ eluder dimension in low-rank MDPs}

\begin{theorem}[Bound of $\ell_2$ distributional eluder for low-rank MDPs]\label{thm:low-rank-mdp-eluder}
Suppose the MDP is a low-rank MDP.
Let $\Psi\subset \Xcal\to[0,1]$ be any class of functions mapping $\Xcal$ to $[0,1]$.
Suppose $\Dcal=\braces{x\mapsto d^\pi_h(x): \pi\in\Pi}$ for some $h\in[H]$.
Then, we have
\begin{equation}
    \DE_2(\Psi,\Dcal,\eps)\leq\Ocal(d\log(d/\eps)).
\end{equation}
\end{theorem}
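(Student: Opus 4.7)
The plan is to reduce the $\ell_2$ distributional eluder dimension to the classical $\ell_2$ eluder dimension of a $d$-dimensional linear function class, and then invoke the standard elliptical potential bound.

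First I would dispose of the trivial case $h=1$: since the initial state is fixed, $\Dcal$ contains only the Dirac at $x_1$, so $\DE_2(\Psi,\Dcal,\eps)\leq 1$. For $h\geq 2$, I use the low-rank structure to exhibit each $\EE_{d^\pi_h}\psi$ as a bilinear form. Concretely, by marginalizing over the previous state-action and using $P_{h-1}(x\mid x',a')=\phi^\star_{h-1}(x',a')^\top\mu^\star_{h-1}(x)$,
\begin{equation*}
    d^\pi_h(x)=\Bigl\langle\underbrace{\int\phi^\star_{h-1}(x',a')\,d^\pi_{h-1}(x',a')\,dx'\,da'}_{=:v^\pi\in\RR^d},\;\mu^\star_{h-1}(x)\Bigr\rangle,
\end{equation*}
so that for any $\psi\in\Psi$,
\begin{equation*}
    \EE_{d^\pi_h}\psi=\langle v^\pi,\,w_\psi\rangle,\qquad w_\psi:=\int\psi(x)\mu^\star_{h-1}(x)\,dx.
\end{equation*}
By \cref{def:low-rank-mdp}, $\|v^\pi\|_2\leq\EE_{d^\pi_{h-1}}\|\phi^\star_{h-1}\|_2\leq 1$ and $\|w_\psi\|_2\leq\|\psi\|_\infty\sqrt{d}\leq\sqrt{d}$ since $\psi$ is $[0,1]$-valued.

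Next, I would observe that the definition of $\ell_2$-independence only sees the distributions through the scalars $\EE_d\psi$. Therefore $\DE_2(\Psi,\Dcal,\eps)$ is upper bounded by the $\ell_2$ eluder dimension (in the sense of \citet{russo2013eluder}) of the linear function class $\Wcal=\{v\mapsto\langle v,w\rangle:\|w\|_2\leq\sqrt{d}\}$ evaluated on the point set $\{v^\pi:\pi\in\Pi\}\subset B_d(1)$: any $\eps$-independent sequence of distributions in $\Dcal$ with witnesses in $\Psi$ yields, via $d^{(t)}\mapsto v^{(t)}$ and $\psi^{(t)}\mapsto w^{(t)}$, an $\eps$-independent sequence of points in $B_d(1)$ with witnesses in $\Wcal$, of the same length.

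The final step is the classical elliptical potential bound: the $\ell_2$ eluder dimension of a bounded $d$-dimensional linear function class with coefficient norm at most $R=\sqrt{d}$ and feature norm at most $1$ is $\Ocal(d\log(R/\eps))=\Ocal(d\log(d/\eps))$; see, e.g., \citet[Proposition 11]{jin2021bellman} or \citet[Proposition 1]{russo2013eluder}. Combining the three steps gives the stated bound.

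I do not anticipate any substantive obstacle: the only thing to be careful about is verifying the norm bound on $w_\psi$ (which is handed to us by the second part of \cref{def:low-rank-mdp}) and the case $h=1$ (handled separately). Once the bilinear form is in hand, the result is entirely a consequence of the linear elliptical potential argument.
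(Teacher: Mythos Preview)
Your proposal is correct and follows essentially the same route as the paper: dispose of $h=1$, use the low-rank factorization to write $\EE_{d^\pi_h}\psi=\langle v^\pi,w_\psi\rangle$ with $\|v^\pi\|_2\le 1$ and $\|w_\psi\|_2\le\sqrt{d}$, and then invoke the elliptical potential bound for $d$-dimensional linear classes. The only cosmetic difference is that the paper carries out the elliptical potential argument by hand (forming $\Sigma_k=\sum_{i<k}v^{(i)}(v^{(i)})^\top+\lambda I$, applying Cauchy--Schwarz in the $\Sigma_k$-norm, setting $\lambda=\eps^2/d$, and solving the resulting implicit inequality), whereas you package that step as a citation to the known linear eluder bound; both arrive at $\Ocal(d\log(d/\eps))$.
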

\begin{proof}
If $h=1$, then $\Dcal$ is a singleton. Hence, $\DE_2(\Psi,\Dcal,\eps)\leq 1$.
Hence, suppose $h\geq 2$; set $h:=h-1$ and we will focus on $d^\pi_{h+1}$ in the remainder.
Suppose $\braces{d^{(k)},f^{(k)}}_{k\in[T]}$ is any sequence such for all $k\in[T]$, we have that $(d^{(k)},f^{(k)})$ is $(\eps,\ell_2)$-independent of its predecessors.
For any $k$, set $\Sigma_k = \sum_{i=1}^{k-1}\Eb[d^{(i)}]{ \phi^\star_h(x_h,a_h)}\Eb[d^{(i)}]{\phi^\star_h(x_h,a_h) }^\top + \lambda I$.
Then, we have
\begin{align*}
    \EE_{d^{(k)}}f^{(k)}(x_{h+1})
    &= \EE_{d^{(k)}}\int_{x_h}\phi^\star_{h}(x_{h},a_{h})^\top\diff\mu_{h}^\star(x_{h+1}) f^{(k)}(x_{h+1})
    \\&=\EE_{d^{(k)}}\phi^\star_{h}(x_{h},a_{h}) ^\top \int_{x_{h+1}}f^{(k)}(x_{h+1})\diff\mu_{h}^\star(x_{h+1}).
    \\&\leq \|\EE_{d^{(k)}}\phi^\star_{h}(x_{h},a_{h})\|_{\Sigma_k^{-1}} \|\int_{x_{h+1}}f^{(k)}(x_{h+1})\diff\mu_{h}^\star(x_{h+1})\|_{\Sigma_k}.
\end{align*}
Focusing on the second term,
\begin{align*}
    \|\int_{x_{h+1}}f^{(k)}(x_{h+1})\diff\mu_{h}^\star(x_{h+1})\|_{\Sigma_k}^2
    &= \sum_{i=1}^{k-1}\prns{ \Eb[d^{(i)}]{ f^{(k)}(x_{h+1}) } }^2 + \lambda d
\end{align*}
Thus, we have shown that
\begin{align*}
    \EE_{d^{(k)}}f^{(k)}(x_{h+1})\leq \|\EE_{d^{(k)}}\phi^\star_{h}(x_{h},a_{h})\|_{\Sigma_k^{-1}} \sqrt{ \sum_{i=1}^{k-1} \prns{\Eb[d^{(i)}]{ f^{(k)}(x_{h+1}) } }^2 + \lambda d }.
\end{align*}

Then, by the independent sequence assumption, we have
\begin{align*}
    T\eps
    &< \sum_{k=1}^T \EE_{d^{(k)}}f^{(k)}(x_{h+1}) \leq \sum_{k=1}^T\|\EE_{d^{(k)}}\phi^\star_{h}(x_{h},a_{h})\|_{\Sigma_k^{-1}} \sqrt{ \sum_{i=1}^{k-1} \prns{\Eb[d^{(i)}]{ f^{(k)}(x_{h+1}) }}^2 + \lambda d }
    \\&\leq \sum_{k=1}^T\|\EE_{d^{(k)}}\phi^\star_{h}(x_{h},a_{h})\|_{\Sigma_k^{-1}} \prns{\eps+\sqrt{\lambda d}} \tag{$\sqrt{ \sum_{i=1}^{k-1} \prns{\Eb[d^{(i)}]{ f^{(k)}(x_{h+1}) }}^2}\leq \eps$}
    \\&\leq 2\eps\sum_{k=1}^T\|\EE_{d^{(k)}}\phi^\star_{h}(x_{h},a_{h})\|_{\Sigma_k^{-1}} \tag{$\lambda = \eps^2/d$}
    \\&\leq 2\eps\sqrt{T}\sqrt{\sum_{k=1}^T\|\EE_{d^{(k)}}\phi^\star_{h}(x_{h},a_{h})\|_{\Sigma_k^{-1}}^2 }
    \\&\leq 2\eps\sqrt{T}\sqrt{d\log(1+\nicefrac{T}{d\lambda})} \tag{elliptical potential}
    \\&\leq 2\eps\sqrt{T}\sqrt{d\log(1+\nicefrac{T}{\eps^2})}. \tag{$\lambda=\eps^2/d$}
\end{align*}
For a reference of the elliptical potential, see \citet[Lemmas 19\&20]{uehara2021representation}.
Rearranging, we have $\sqrt{T} < 2\sqrt{d\log(1+\nicefrac{T}{\eps^2})}$, which implies
$$T \leq 4d\log(1+\nicefrac{T}{\eps^2}).$$
By applying \cref{lem:invert-log-inequality}, we have $T\leq 24d\log\prns{1+\nicefrac{4d}{\eps^2}}$.
This concludes the proof.
\end{proof}

\begin{lemma}\label{lem:invert-log-inequality}
Let $c_1,c_2 \geq 1$ be constants.
Let $x\geq 0$ be a solution to $x\leq c_1\log(1+c_2x)$. Then, we necessarily have $x \leq 6c_1\log\prns{1+c_1c_2}$.
\end{lemma}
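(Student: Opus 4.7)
The plan is to reduce the two-parameter inequality to a clean one-parameter canonical form and then verify that a natural candidate bound lies past the unique positive root of the relevant defining function. Set $y = c_2 x$ and $C = c_1 c_2 \geq 1$; the hypothesis rearranges to $y \leq C \log(1+y)$, and dividing the target bound by $c_2$ reduces the claim to the following: if $y \geq 0$ satisfies $y \leq C\log(1+y)$ with $C\geq 1$, then $y \leq 6C\log(1+C)$.

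Define $\phi(y) = y - C\log(1+y)$. Then $\phi(0) = 0$ and $\phi'(y) = 1 - C/(1+y)$, which is negative on $[0,C-1)$ and positive on $(C-1,\infty)$, so $\phi$ has a unique nonzero root $y^\star \geq C-1$ (with $y^\star = 0$ when $C=1$). Consequently the sublevel set $\{y\geq 0 : \phi(y) \leq 0\}$ is exactly the interval $[0, y^\star]$. Hence, to turn the hypothesis $\phi(y) \leq 0$ into an upper bound $y \leq y_0$, it suffices to exhibit any $y_0 \geq C-1$ satisfying $\phi(y_0) \geq 0$, since then $y^\star \leq y_0$ by monotonicity of $\phi$ on $[C-1,\infty)$.

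I propose the candidate $y_0 = 6C\log(1+C)$. The condition $y_0 \geq C-1$ holds because $y_0 \geq 6C\log 2 \geq 4C \geq C-1$ for $C\geq 1$. The nonnegativity $\phi(y_0)\geq 0$ is equivalent to $1 + 6C\log(1+C) \leq (1+C)^6$; using $\log(1+C) \leq C$ the left side is at most $1 + 6C^2$, while the binomial expansion gives $(1+C)^6 \geq 1 + 6C + 15C^2 \geq 1 + 6C^2$. This closes the argument. No substantive obstacle is anticipated: the core step is simply a slack polynomial-versus-logarithm comparison, and the reduction to the canonical form is immediate from $c_2 \geq 1$.
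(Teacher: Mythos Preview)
Your argument is correct. The reduction to the one-parameter problem $y \leq C\log(1+y)$ via $y=c_2x$, $C=c_1c_2$, the convexity analysis of $\phi(y)=y-C\log(1+y)$, and the verification that $y_0=6C\log(1+C)$ satisfies $\phi(y_0)\geq 0$ through $(1+C)^6\geq 1+6C^2\geq 1+6C\log(1+C)$ are all valid.

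The paper takes a different route. It substitutes $B=x/c_1$ (rather than your $y=c_2x$) to reach $e^B\leq 1+\alpha B$ with $\alpha=c_1c_2$, and then invokes Step~3 of \citet[Proposition 6]{russo2013eluder} to conclude $B\leq 3(\log(1+\alpha)+1)$, whence $x\leq 3c_1(\log(1+c_1c_2)+1)$. Both proofs share the same one-parameter reduction (your $y$ and the paper's $\alpha B$ coincide), but they diverge in how the resulting transcendental inequality is resolved: the paper appeals to an external lemma after passing to the exponential form, whereas your proof is entirely self-contained and stays on the logarithmic side, closing with an elementary binomial comparison. Your approach is arguably cleaner here and avoids the citation; the paper's approach has the minor advantage of yielding a slightly sharper intermediate constant before the final absorption of the additive $+1$ into the $\log$ term.
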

\begin{proof}
Using change of variables $B = \frac{x}{c_1}$, we have the inequality is equivalent to $B\leq \log(1+B\cdot c_1c_2)$.
Take $\exp$ of both sides to get $\exp(B)\leq \alpha B + 1$ where $\alpha = c_1c_2$.
From Step 3 of the proof of \citet[Proposition 6]{russo2013eluder}, we have $B\leq \frac{e}{e-1}\frac{e}{e-1}\prns{ \log(1+\alpha) + \log(e/(e-1)) } \leq 3\prns{\log(1+c_1c_2) + 1}$.
Hence, $x\leq c_1 \cdot 3\prns{\log(1+c_1c_2) + 1}.$
\end{proof}

\subsection{Bounding Q-type $\ell_2$ eluder dimension in tabular MDPs}

\begin{theorem}[Bound of $\ell_2$ distributional eluder for tabular MDPs]\label{thm:tabular-mdp-eluder}
Suppose the MDP is a tabular MDP.
Let $\Psi\subset \Xcal\times\Acal\to[0,1]$ be any class of functions mapping $\Xcal\times\Acal$ to $[0,1]$.
Suppose $\Dcal$ be any set of distributions.
Then, we have
\begin{equation}
    \DE_2(\Psi,\Dcal,\eps)\leq\Ocal(SA\log(SA/\eps)).
\end{equation}
\end{theorem}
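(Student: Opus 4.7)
}

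The plan is to reduce the tabular case to the linear/low-rank argument in \cref{thm:low-rank-mdp-eluder} by viewing tabular MDPs as low-rank MDPs of dimension $SA$ under the one-hot embedding $\phi(x,a) = e_{(x,a)} \in \RR^{SA}$ with $\|\phi(x,a)\|_2 \leq 1$. Under this embedding, any function $f \in \Psi \subseteq \Xcal\times\Acal \to [0,1]$ has a linear representation $f(x,a) = \langle \phi(x,a), w_f\rangle$ with $\|w_f\|_\infty \le 1$ and hence $\|w_f\|_2 \le \sqrt{SA}$. Also, for any distribution $d$ on $\Xcal\times\Acal$, we have the identity $\EE_d f = \langle \EE_d \phi(x,a),\, w_f\rangle$.

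Suppose $\{d^{(k)},f^{(k)}\}_{k\in[T]}$ is an $(\eps,\ell_2)$-independent sequence, so for each $k$, $|\EE_{d^{(k)}}f^{(k)}| > \eps$ and $\sum_{i=1}^{k-1}(\EE_{d^{(i)}}f^{(k)})^2 \le \eps^2$. Define the (uncentered) covariance matrix
\begin{equation*}
\Sigma_k = \sum_{i=1}^{k-1}\EE_{d^{(i)}}\phi(x,a)\,\EE_{d^{(i)}}\phi(x,a)^\top + \lambda I,
\end{equation*}
with regularization $\lambda = \eps^2/(SA)$. By Cauchy--Schwarz in the $\Sigma_k$ geometry,
\begin{equation*}
|\EE_{d^{(k)}} f^{(k)}| \;\le\; \|\EE_{d^{(k)}}\phi(x,a)\|_{\Sigma_k^{-1}} \cdot \|w_{f^{(k)}}\|_{\Sigma_k}.
\end{equation*}
The second factor expands as $\|w_{f^{(k)}}\|_{\Sigma_k}^2 = \sum_{i=1}^{k-1}(\EE_{d^{(i)}} f^{(k)})^2 + \lambda\|w_{f^{(k)}}\|_2^2 \le \eps^2 + \lambda\cdot SA = 2\eps^2$, hence it is bounded by $\sqrt 2\,\eps$.

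Summing over $k$ and using Cauchy--Schwarz together with the standard elliptical-potential lemma (as used in the proof of \cref{thm:low-rank-mdp-eluder}, see e.g., \citet[Lemmas 19\&20]{uehara2021representation}), we get
\begin{equation*}
T\eps \;<\; \sum_{k=1}^T |\EE_{d^{(k)}}f^{(k)}| \;\le\; \sqrt{2}\,\eps\sqrt{T}\sqrt{\sum_{k=1}^T \|\EE_{d^{(k)}}\phi\|_{\Sigma_k^{-1}}^2} \;\le\; \sqrt{2}\,\eps\sqrt{T}\sqrt{SA\log(1 + T/(SA\lambda))}.
\end{equation*}
Rearranging and substituting $\lambda = \eps^2/(SA)$ yields $T \le 2\,SA\log(1 + T/\eps^2)$, and finally \cref{lem:invert-log-inequality} inverts this implicit bound to give $T \le \Ocal(SA\log(SA/\eps))$, which is the desired conclusion.

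The argument is essentially routine once the one-hot embedding is in place; I expect no serious obstacle. The only subtle step is choosing $\lambda = \eps^2/(SA)$ (not $\eps^2/d$ as in the low-rank case) so that the $\lambda\|w_{f^{(k)}}\|_2^2$ term remains of order $\eps^2$, since here $\|w_{f^{(k)}}\|_2$ is bounded by $\sqrt{SA}$ rather than $\sqrt{d}$. Everything else follows by mirroring the low-rank proof verbatim.
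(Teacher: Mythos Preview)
Your proposal is correct and essentially identical to the paper's proof: both identify $d^{(k)}$ and $f^{(k)}$ with $SA$-dimensional vectors via the one-hot embedding, form the same regularized covariance $\Sigma_k$ with $\lambda=\eps^2/(SA)$, apply Cauchy--Schwarz in the $\Sigma_k$ geometry, invoke the elliptical-potential lemma, and finish with \cref{lem:invert-log-inequality}. The only cosmetic difference is that you keep the $\sqrt{2}$ from $\sqrt{\eps^2+\lambda SA}$ while the paper bounds $\eps+\sqrt{\lambda SA}=2\eps$, yielding constants $2$ versus $4$ in the implicit inequality for $T$.
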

\begin{proof}
Suppose $\braces{d^{(k)},f^{(k)}}_{k\in[T]}$ is any sequence such for all $k\in[T]$, we have that $(d^{(k)},f^{(k)})$ is $(\eps,\ell_2)$-independent of its predecessors.
Since the MDP is tabular, we can interpret $d^{(k)},f^{(k)}$ as $SA$-dimensional vectors.
For any $k$, set $\Sigma_k = \sum_{i=1}^{k-1}d^{(i)}(d^{(i)})^\top + \lambda I$.
Then, we have
\begin{align*}
    \EE_{d^{(k)}}f^{(k)}(x,a) = (d^{(k)})^\top f^{(k)} \leq \|d^{(k)}\|_{\Sigma_k^{-1}}\|f^{(k)}\|_{\Sigma_k}.
\end{align*}
Focusing on the second term, we have
\begin{align*}
    \|f^{(k)}\|_{\Sigma_k}^2
    &= \sum_{i=1}^{k-1}\prns{ (d^{(i)})^\top f^{(k)} }^2 + \lambda SA.
\end{align*}
Thus, we have
\begin{align*}
    T\eps
    &< \sum_{k=1}^T \EE_{d^{(k)}}f^{(k)}(x,a)
    \leq \sum_{k=1}^T \|d^{(k)}\|_{\Sigma_k^{-1}} \sqrt{ \sum_{i=1}^{k-1}\prns{ \EE_{d^{(i)}}[f^{(k)}(x,a)] }^2 + \lambda SA }
    \\&\leq \sum_{k=1}^T \|d^{(k)}\|_{\Sigma_k^{-1}} \prns{ \eps + \sqrt{\lambda SA} }
    \\&\leq 2\eps \sum_{k=1}^T \|d^{(k)}\|_{\Sigma_k^{-1}} \tag{$\lambda=\eps^2/SA$}
    \\&\leq 2\eps\sqrt{T}\sqrt{\sum_{k=1}^T \|d^{(k)}\|_{\Sigma_k^{-1}}^2 }
    \\&\leq 2\eps\sqrt{T}\sqrt{SA \log(1 + T/\eps^2)}. \tag{elliptical potential}
\end{align*}
Rearranging, we have $\sqrt{T}<2\sqrt{SA\log(1+T/\eps^2)}$, which implies $T\leq 4SA\log(1+T/\eps^2)$. Then by applying \cref{lem:invert-log-inequality}, we have $T\leq 24SA\log\prns{1+\nicefrac{4SA}{\eps^2}}$.
This concludes the proof.
\end{proof}

\newpage
\section{Proofs for Online RL}\label{app:proofs-online-rl}
\subsection{Preliminary Lemmas}

\begin{lemma}\label{lem:dist-bellman-expectation}
For any policy $\pi$, conditional distribution $d$ and $h\in[H]$, we have
\begin{align*}
    &\overline{\Tcal_h^{\pi,D}d(x,a)} = \Tcal_h^\pi \bar d(x,a),
    \\&\overline{\Tcal_h^{\star,D}d(x,a)} = \Tcal_h^\star \bar d(x,a).
\end{align*}
\end{lemma}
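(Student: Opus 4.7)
The plan is to unfold both sides using the definitions given in the preliminaries and apply linearity of expectation. The identity is essentially the statement that ``the mean of the distributional Bellman operator equals the mean Bellman operator applied to the mean,'' which is the conceptual reason DistRL can be used for risk-neutral decision-making at all.

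For the first claim, I would start from the sampling description of $\Tcal_h^{\pi,D}d(x,a)$: a sample $z$ has the form $z = c + y$ where $c\sim C_h(x,a)$, $x'\sim P_h(x,a)$, $a'\sim\pi(x')$, and $y\sim d(x',a')$. Taking expectations and using linearity,
\begin{align*}
\overline{\Tcal_h^{\pi,D}d(x,a)}
&= \EE[c+y] \\
&= \EE_{c\sim C_h(x,a)}[c] + \EE_{x'\sim P_h(x,a),\,a'\sim\pi(x'),\,y\sim d(x',a')}[y] \\
&= \bar C_h(x,a) + \EE_{x'\sim P_h(x,a),\,a'\sim\pi(x')}\bigl[\bar d(x',a')\bigr] \\
&= \Tcal_h^\pi \bar d(x,a),
\end{align*}
where the last equality is the definition of the regular Bellman operator stated earlier.

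For the second claim, the argument is identical, except that the action $a'$ is now chosen deterministically as $a' = \argmin_{a'} \bar d(x',a')$ rather than being drawn from $\pi(x')$. Substituting this into the same chain of equalities gives $\overline{\Tcal_h^{\star,D}d(x,a)} = \bar C_h(x,a) + \EE_{x'\sim P_h(x,a)}[\min_{a'}\bar d(x',a')] = \Tcal_h^\star \bar d(x,a)$, matching the definition of $\Tcal_h^\star$ from the preliminaries.

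There is no real obstacle here — the entire content is linearity of expectation combined with pushing the expectation over $y$ through to $\bar d(x',a')$ via the tower property. The only thing to be careful about is that $d(x',a')$ is an honest probability distribution on $[0,1]$ (absolutely continuous with respect to the dominating measure $\lambda$), so $\EE_{y\sim d(x',a')}[y] = \int y\,d(y\mid x',a')\,\diff\lambda(y) = \bar d(x',a')$ by definition of $\bar{\cdot}$ given in the preliminaries; this justifies the inner substitution. The proof is essentially a one-line computation per part.
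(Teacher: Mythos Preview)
Your proof is correct and takes essentially the same approach as the paper's: both unfold the sampling definition of $\Tcal_h^{\pi,D}d(x,a)$ (resp.\ $\Tcal_h^{\star,D}d(x,a)$), apply linearity of expectation to separate $c$ and $y$, and then use the tower property to collapse the inner expectation over $y$ to $\bar d(x',a')$. The paper's proof is line-for-line the same computation.
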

\begin{proof}
\begin{align*}
    \overline{\Tcal_h^{\pi,D}d(x,a)}
    &= \Eb[y\sim \Tcal_h^{\pi,D}d(x,a)]{y}
    \\&= \Eb[c\sim C_h(x,a), x'\sim P_h(x,a), a'\sim\pi_{h+1}(x'),y'\sim d(x',a')]{c+y'}
    \\&= \bar C_h(x,a) + \Eb[x'\sim P_h(x,a), a'\sim\pi_{h+1}(x'),y'\sim d(x',a')]{ y' }
    \\&= \bar C_h(x,a) + \Eb[x'\sim P_h(x,a), a'\sim\pi_{h+1}(x')]{\bar d(x',a')}
    \\&= \Tcal_h^\pi \bar d(x,a).
\end{align*}
\begin{align*}
    \overline{\Tcal_h^{\star,D}d(x,a)}
    &= \Eb[y\sim \Tcal_h^{\star,D}d(x,a)]{y}
    \\&= \Eb[c\sim C_h(x,a), x'\sim P_h(x,a), a'=\argmin_{\wt a} \bar d(x',\wt a),y'\sim d(x',a')]{c+y'}
    \\&= \bar C_h(x,a) + \Eb[x'\sim P_h(x,a), a'=\argmin_{\wt a} \bar d(x',\wt a),y'\sim d(x',a')]{ y' }
    \\&= \bar C_h(x,a) + \Eb[x'\sim P_h(x,a), a'=\argmin_{\wt a} \bar d(x',\wt a)]{\bar d(x',a')}
    \\&= \bar C_h(x,a) + \Eb[x'\sim P_h(x,a)]{\min_{a'}\bar d(x',a')}
    \\&= \Tcal_h^\star \bar d(x,a).
\end{align*}
\end{proof}

\begin{lemma}[Performance Difference Lemma (PDL)]\label{lem:pdl}
For any $f:\prns{\Xcal\times\Acal\to\RR}^H$ and policies $\pi,\pi'$, we have
\begin{align}
    V^\pi-\Eb[a\sim\pi'(x_1)]{f_1(x_1,a)} = \sum_{h=1}^H \Eb[\pi]{ \Tcal^{\pi'}_h f_{h+1}(x_h,a_h) - f_h(x_h,\pi') }.
\end{align}
\end{lemma}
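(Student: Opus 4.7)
The plan is a straightforward telescoping argument, essentially the classical Kakade--Langford performance difference lemma adapted to a general comparator function $f$. Since the statement is purely algebraic (no concentration or optimism is needed), I expect no real obstacle beyond bookkeeping with the boundary condition $f_{H+1}\equiv 0$ (the paper's convention, consistent with $f_{H+1}(x,a)=\delta_0$ in the online RL section).

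First, I would unpack the per-step quantity. For each $h\in[H]$, by definition of the Bellman operator applied to the scalar function $f_{h+1}$,
\begin{equation*}
    \Tcal_h^{\pi'} f_{h+1}(x_h,a_h) = \bar C_h(x_h,a_h) + \EE_{x_{h+1}\sim P_h(x_h,a_h),\, a_{h+1}\sim\pi'(x_{h+1})}[f_{h+1}(x_{h+1},a_{h+1})],
\end{equation*}
and by our shorthand, $f_h(x_h,\pi')=\EE_{a\sim\pi'(x_h)}[f_h(x_h,a)]$. Taking $\EE_\pi[\cdot]$ on both sides of $\Tcal_h^{\pi'}f_{h+1}(x_h,a_h)-f_h(x_h,\pi')$ and using the tower property (the inner conditional expectation over $(x_{h+1},a_{h+1})$ given $(x_h,a_h)$ matches the law of $(x_{h+1},a_h')$ under rolling $\pi$ to step $h+1$ and then swapping to action $a'\sim\pi'(x_{h+1})$), I get
\begin{equation*}
    \EE_\pi[\Tcal_h^{\pi'}f_{h+1}(x_h,a_h)-f_h(x_h,\pi')] = \EE_\pi[\bar C_h(x_h,a_h)] + \EE_\pi[f_{h+1}(x_{h+1},\pi')] - \EE_\pi[f_h(x_h,\pi')].
\end{equation*}

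Next, I sum over $h=1,\dots,H$. The cost terms collapse into $\sum_{h=1}^H\EE_\pi[\bar C_h(x_h,a_h)] = V^\pi$ by linearity of expectation and the definition $V^\pi=\EE_\pi[\sum_{h=1}^H c_h]$. The remaining two sums telescope: shifting the index in the first gives $\sum_{h=2}^{H+1}\EE_\pi[f_h(x_h,\pi')]$, and subtracting the second leaves only the endpoints, namely $\EE_\pi[f_{H+1}(x_{H+1},\pi')]-\EE_\pi[f_1(x_1,\pi')]$. Under the convention $f_{H+1}\equiv 0$ (or equivalently $\bar f_{H+1}=0$ since the tuple $f$ is indexed by $h\in[H]$ and $f_{H+1}$ is taken to be the zero function / dirac at zero in the paper), the first endpoint vanishes, and since $x_1$ is fixed and deterministic, $\EE_\pi[f_1(x_1,\pi')]=\EE_{a\sim\pi'(x_1)}[f_1(x_1,a)]$.

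Combining these yields
\begin{equation*}
    \sum_{h=1}^H\EE_\pi[\Tcal_h^{\pi'}f_{h+1}(x_h,a_h)-f_h(x_h,\pi')] = V^\pi - \EE_{a\sim\pi'(x_1)}[f_1(x_1,a)],
\end{equation*}
which is the claim. The only subtle point to double-check in the writeup is that the telescoping indeed uses $f_{H+1}\equiv 0$; this is justified by the paper's convention introduced immediately before Algorithm~\ref{alg:onlinerl}, and it is also why the PDL is useful as stated for bounding $V^{\pi^k}-\bar f_1^{(k)}(x_1,\pi_1^k(x_1))$ in the proof sketch of Theorem~\ref{thm:online-general}.
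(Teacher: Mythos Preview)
Your proof is correct and follows essentially the same telescoping idea as the paper. The only presentational difference is that the paper organizes it as a backward induction on $h$ (proving the pointwise refinement $V^\pi_h(x_h)-f_h(x_h,\pi')=\sum_{t=h}^H\EE_{\pi,x_h}[\Tcal_t^{\pi'}f_{t+1}(x_t,a_t)-f_t(x_t,\pi')]$), whereas you sum directly and telescope; both unwind to the same algebraic identity and both rely on the convention $f_{H+1}\equiv 0$.
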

\begin{proof}
We proceed by inducting on the following claim: for all $h=H+1,H,\dots,1$,
\begin{align*}
    V^\pi_h(x_h)-f_h(x_h,\pi') = \sum_{t=h}^H\Eb[\pi,x_h]{ \Tcal^{\pi'}_t f_{t+1}(x_t,a_t) - f_t(x_t,\pi') }.
\end{align*}
The base case of $H+1$ is trivially true as everything is $0$.
Now fix any $h$ and suppose the IH at $h+1$ is true. Then
\begin{align*}
    &V^\pi_h(x_h)-f_h(x_h,\pi')
    \\&=\Eb[\pi,x_h]{ c_h + V^\pi_{h+1}(x_{h+1}) - f_{h+1}(x_{h+1},\pi') + f_{h+1}(x_{h+1},\pi') - f_h(x_h,\pi') }
    \\&=\Eb[\pi,x_h]{ V^\pi_{h+1}(x_{h+1}) - f_{h+1}(x_{h+1},\pi') } + \Eb[\pi,x_h]{ c_h + f_{h+1}(x_{h+1},\pi') - f_h(x_h,\pi') }.
\end{align*}
By the IH, the first term is equal to $\sum_{t=h+1}^H\Eb[\pi,x_h]{ \Tcal^{\pi'}_t f_{t+1}(x_t,a_t) - f_t(x_t,\pi') }$.
The second term is exactly $\Eb[\pi,x_h]{\Tcal^{\pi'}_hf_{h+1}(x_h,a_h) - f_h(x_h,\pi')}$, which concludes the proof.
\end{proof}

\subsection{Proof of Small-Loss Regret and PAC Bounds}

Recall that we defined the function class and distribution class, for each $h$, as
\begin{align}
    &\Dcal_h(\Pi) = \braces{(x,a)\mapsto d^\pi_h(x,a): \pi\in\Pi} \label{eq:lsec-dcal}
    \\&\Psi_h = \braces{(x,a)\mapsto D_\triangle(f(x,a)\Mid \Tcal^{\star,D}f(x,a)): f\in\Fcal}. \nonumber
\end{align}

Also, define the `$V$-type' analogs as follows, which will be useful for PAC instead of regret bounds.
\begin{align}
    &\Dcal_{h,v}(\Pi) = \braces{ x\mapsto d^\pi_h(x):\pi\in\Pi } \label{eq:lsec-dcal-v}
    \\&\Psi_{h,v} = \braces{ x\mapsto \EE_{a\sim \op{Unif}(\Acal)}[D_\triangle(f(x,a)\Mid \Tcal^{\star,D}f(x,a))] : f\in\Fcal }. \nonumber
\end{align}
Let us also overload notation for the eluder dimensions as
\begin{align*}
    &\DE_1(\eps) := \max_h \DE_1(\Psi_h,\Dcal_h(\Pi),\eps),
    \\&\DE_{1,v}(\eps) := \max_h \DE_1(\Psi_{h,v},\Dcal_{h,v}(\Pi),\eps).
\end{align*}

Before we prove the following main theorem, a couple of remarks are in order:
\begin{enumerate}[leftmargin=0.7cm]
    \item Recall that by \cref{thm:tabular-mdp-eluder}, we have $\DE_1(\eps)\leq\Ocal(SA\log(SA/\eps))$ and by \cref{thm:low-rank-mdp-eluder}, we have $\DE_{1,v}(\eps)\leq\Ocal(d\log(d/\eps))$. This shows that the Eluder dimension in terms in \cref{thm:online-general} are appropriately bounded.
    \item In \cref{sec:dist-bc-low-rank-mdps}, we showed that distributional BC (\cref{ass:policy-dist-bellman-completeness}) is satisfied in low-rank MDPs and the log bracketing number is bounded by $\Ocal(dM\log(d/\eps) + \log|\Phi|)$ where $\Phi$ is a realizable class for $\phi^\star$. This shows that the BC assumption of \cref{thm:online-general} is satisfied and $\beta$ is appropriately bounded for low-rank MDPs.
\end{enumerate}
Taken together, these two points imply that we have a small-loss PAC bound for low-rank MDPs: concretely, we have $V^{\bar\pi}-V^\star\leq \wt\Ocal\prns{dH\sqrt{\frac{AV^\star \log|\Phi|}{K}} + \frac{d^2H^2A\log|\Phi|}{K}}$.

We now prove the our main result for online RL: \cref{thm:online-general}. We will prove the result with general function classes, so we will replace the $|\Fcal|$ by its $\ell_\infty$ bracketing number, \ie, $\beta=\log(HKN_{[]}(1/K,\Fcal,\ell_\infty)/\delta)$.
\onlineRLGeneral*
\begin{proof}
For shorthand, let $\delta_{h,k}(x,a) := D_\triangle(f_h^{(k)}(x,a)\Mid \Tcal^{\star,D}_h f_{h+1}^{(k)}(x,a))$ and $\Delta_k:=\sum_{h=1}^H\Eb[\pi^k]{ \delta_{h,k}(x_h,a_h) }$. Notice that since $\pi^k_{h+1}(x) = \argmin_a \bar f^{(k)}_{h+1}(x,a)$, we have $\Tcal^{\pi^k,D}_hf^{(k)}_{h+1}(x,a)=\Tcal^{\star,D}_hf^{(k)}_{h+1}(x,a)$, so $\delta_{h,k}(x,a) = D_\triangle(f_h^{(k)}(x,a)\Mid \Tcal^{\pi^k,D}_h f_{h+1}^{(k)}(x,a))$ as well.

By \cref{thm:distgolf-mle}, we have the following two facts
\textbf{for all $k\in[K]$}, \\
(i) Optimism: $\min_a\bar f^{(k)}_1(x_1,a)\leq V^\star$ (since $Z^\star\in\Fcal_k$) and \\
(ii) Low training error: for all $h$, we have
\begin{enumerate}[leftmargin=3cm]
    \item[If \textsc{UAE}=\textsc{False}.] $\sum_{i<k}\Eb[\pi^i]{\delta_{h,k}(s_h,a_h)}\leq 240\beta$.
    \item[If \textsc{UAE}=\textsc{True}.] $\sum_{i<k}\Eb[\pi^i]{\EE_{a'\sim\op{unif}(\Acal)}[\delta_{h,k}(s_h,a_h)]}\leq 240\beta$.
\end{enumerate}
The $240$ comes from the constants of \cref{thm:distgolf-mle} and the fact that $D_\triangle(a,b)\leq 4H^2(a,b)$ for all distributions $a,b$.

Now, fix any episode $k\in[K]$.
\begin{align*}
    &V^{\pi^k}-V^\star
    \\&\leq V^{\pi^k}-\min_a\bar f^{(k)}_1(x_1,a) \tag{Fact (i)}
    \\&=\sum_{h=1}^H \Eb[\pi^k]{ \Tcal^{\pi^k}_h\bar f^{(k)}_{h+1}(x_h,a_h) - \bar f^{(k)}_h(x_h,\pi^k_h(x_h)) } \tag{PDL \cref{lem:pdl}}
    \\&=\sum_{h=1}^H \Eb[\pi^k]{ \overline{\Tcal^{\pi^k,D}_h f^{(k)}_{h+1}}(x_h,a_h) - \bar f^{(k)}_h(x_h,a_h) } \tag{\cref{lem:dist-bellman-expectation}}
    \\&\leq\sum_{h=1}^H\sqrt{ \Eb[\pi^k]{ 4\bar f^{(k)}_h(x_h,a_h) + \delta_{h,k}(x_h,a_h) } } \cdot \sqrt{\Eb[\pi^k]{ \delta_{h,k}(x_h,a_h) }} \tag{\cref{eq:tri-disc-ineq-2}}
    \\&\leq\sum_{h=1}^H\sqrt{  4eV^{\pi^k} + 17H\sum_{t=h}^H\Eb[\pi^k]{ \delta_{t,k}(x_t,a_t)} } \cdot \sqrt{\Eb[\pi^k]{ \delta_{h,k}(x_h,a_h) }} \tag{\cref{lem:self-bounding} and $\Eb[\pi]{Q_h^{\pi}(s_h,a_h)}\leq V^\pi$}
    \\&\leq \sqrt{ 4eV^{\pi^k} + 17H\Delta_k } \cdot \sqrt{H\Delta_k } \tag{$\bigstar$}
    \\&\leq \sqrt{4eHV^{\pi^k} \Delta_k} + 5H\Delta_k
    \\&\leq 2\sqrt{H}\eta^{-1}V^{\pi^k} + 2\sqrt{H}\eta\Delta_k + 5H\Delta_k.
\end{align*}
In $\bigstar$, we used Cauchy Schwartz. Setting $\eta = 4\sqrt{H}$ and rearranging, we have
\begin{align*}
    V^{\pi^k}\leq 2V^\star + 16H\Delta_k + 10H\Delta_k \leq 2V^\star + 26H\Delta_k.
\end{align*}
Plugging this into $\bigstar$, and noting $104e+17\leq 300$, we have
\begin{align*}
    V^{\pi^k}-V^\star
    &\leq \sqrt{ 8eV^\star + 300H\Delta_k }\sqrt{H\Delta_k}.
\end{align*}
Thus, summing the instantaneous regrets over all episodes, we get
\begin{align*}
    \sum_{k=1}^KV^{\pi^k}-V^\star
    &\leq \sum_{k=1}^K\sqrt{ 8eV^\star + 300H\Delta_k }\sqrt{H\Delta_k}
    \\&\leq \sqrt{ 8eKV^\star + 300H\sum_k\Delta_k }\sqrt{H\sum_k\Delta_k} \tag{Cauchy-Schwartz}
    \\&\leq 5\sqrt{HKV^\star\sum_k\Delta_k} + 18H\sum_k\Delta_k.
\end{align*}

\paragraph{Last step: bounding $\sum_k \Delta_k$.}
In this final step, we invoke the pigeonhole property of the eluder dimension, as proven in \cref{thm:eluder-pigeonhole}. Note that the precondition of \cref{thm:eluder-pigeonhole} is satisfied by Fact (ii) mentioned at the beginning of this proof. Also, since the triangular discrimination is always bounded by $1$, we have that $C$ in \cref{thm:eluder-pigeonhole} is at most $1$, and we will also pick $\eps=1/K$.

On one hand, if \textsc{UAE}=\textsc{False}, then,
\begin{align*}
    \sum_{k=1}^K\Delta_k
    = \sum_{h=1}^H\sum_{k=1}^K\Eb[\pi^k]{\delta_{h,k}(x_h,a_h)}
    \leq 1000H \DE_1(1/K)\beta\log(K).
\end{align*}

On the other hand, if \textsc{UAE}=\textsc{True}, then, we use the V-type analogs,
\begin{align*}
    \sum_{k=1}^K\Delta_k
    &= \sum_{h=1}^H\sum_{k=1}^K\Eb[\pi^k]{\delta_{h,k}(x_h,a_h)}
    \\&\leq A\sum_{h=1}^H\sum_{k=1}^K\Eb[\pi^k]{\EE_{a\sim\op{unif}(\Acal)}\delta_{h,k}(x_h,a)}
    \\&\leq 1000AH \DE_1(1/K)\beta\log(K).
\end{align*}
This concludes the proof for both the regret and PAC bounds.
\end{proof}

\begin{lemma}[Self-bounding lemma]\label{lem:self-bounding}
Let $f\in\Fcal$ and let $\pi$ be any policy. Let us denote $\delta_h(x,a):=D_\triangle(f_h(x,a)\Mid \Tcal^{\pi,D}_h f_{h+1}(x,a))$.
Then, for all $h\in[H]$, for all $x_h,a_h$, we have
\begin{align*}
    \bar f_h(x_h,a_h)\leq eQ^\pi_h(x_h,a_h) + 4H\sum_{t=h}^H\Eb[\pi,x_h,a_h]{\delta_t(x_t,a_t)}.
\end{align*}
\end{lemma}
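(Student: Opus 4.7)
}
The plan is to derive a one-step recursion of the form $\bar f_h \le (1+\tfrac{1}{H})\,\mathcal{T}_h^{\pi}\bar f_{h+1} + cH\,\delta_h$ for a small constant $c$, and then unroll it using the standard $(1+1/H)^H \le e$ bound. The starting point is inequality \cref{eq:tri-disc-ineq-2} applied pointwise to $f = f_h(x_h,a_h)$ and $g = \mathcal{T}_h^{\pi,D} f_{h+1}(x_h,a_h)$. By \cref{lem:dist-bellman-expectation}, $\bar g = \mathcal{T}_h^\pi \bar f_{h+1}(x_h,a_h)$, so \cref{eq:tri-disc-ineq-2} gives
\begin{equation*}
    \bar f_h(x_h,a_h) \;\le\; \mathcal{T}_h^\pi \bar f_{h+1}(x_h,a_h) + \sqrt{4\,\mathcal{T}_h^\pi \bar f_{h+1}(x_h,a_h) + \delta_h(x_h,a_h)}\,\sqrt{\delta_h(x_h,a_h)}.
\end{equation*}

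Next I would peel off the square root using weighted AM-GM, $\sqrt{XY}\le X/(2\mu)+\mu Y/2$, with the weight $\mu=2H$. After a short calculation the cross term becomes $\tfrac{1}{H}\mathcal{T}_h^\pi\bar f_{h+1} + (H+\tfrac14)\delta_h$, which yields the clean one-step recursion
\begin{equation*}
    \bar f_h(x_h,a_h) \;\le\; \Bigl(1+\tfrac{1}{H}\Bigr)\,\mathcal{T}_h^\pi \bar f_{h+1}(x_h,a_h) \;+\; \Bigl(H+\tfrac14\Bigr)\,\delta_h(x_h,a_h).
\end{equation*}
The choice $\mu=2H$ is the crux: larger $\mu$ would inflate the coefficient of $\delta_h$ past $4H$, and smaller $\mu$ would make the value multiplier exceed $(1+1/H)$ and spoil the $e$ constant after iteration.

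Finally, I would iterate this recursion by downward induction on $h$, using the terminal condition $\bar f_{H+1}\equiv 0$. Unrolling and expanding $\mathcal{T}_h^\pi\bar f_{h+1} = \bar C_h + \mathbb{E}_{\pi}[\bar f_{h+1}]$ yields
\begin{equation*}
    \bar f_h(x_h,a_h) \;\le\; \sum_{t=h}^H \bigl(1+\tfrac1H\bigr)^{t-h+1}\mathbb{E}_{\pi,x_h,a_h}[\bar C_t(x_t,a_t)] \;+\; \bigl(H+\tfrac14\bigr)\sum_{t=h}^H\bigl(1+\tfrac1H\bigr)^{t-h}\mathbb{E}_{\pi,x_h,a_h}[\delta_t(x_t,a_t)].
\end{equation*}
Applying $(1+1/H)^H\le e$ to both geometric factors and using $\sum_{t=h}^H \mathbb{E}_{\pi,x_h,a_h}[\bar C_t(x_t,a_t)] = Q_h^\pi(x_h,a_h)$, I get $\bar f_h(x_h,a_h) \le e\,Q_h^\pi(x_h,a_h) + e(H+\tfrac14)\sum_{t=h}^H \mathbb{E}_{\pi,x_h,a_h}[\delta_t(x_t,a_t)]$. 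Since $e(H+\tfrac14) \le 4H$ for all $H\ge 1$ (a one-line check: $(4-e)H\ge e/4$ holds already at $H\ge 1$), this gives exactly the claimed bound. The only real obstacle is constant-tuning; the structure of the argument is completely dictated by the desire to get the $e$ factor after $H$-fold iteration, which forces the AM-GM weight to scale like $H$.
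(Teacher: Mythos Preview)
Your proposal is correct and follows essentially the same approach as the paper: apply \cref{eq:tri-disc-ineq-2} pointwise, use AM--GM with weight $\Theta(H)$ to obtain the one-step recursion $\bar f_h\le(1+1/H)\mathcal{T}_h^\pi\bar f_{h+1}+O(H)\delta_h$, then unroll and bound $(1+1/H)^H\le e$. Your handling of the constants is actually a touch cleaner than the paper's (you apply AM--GM directly to $\sqrt{(4\mathcal{T}+\delta)\delta}$ rather than first splitting off $\delta$, yielding the slightly sharper $(H+\tfrac14)$ versus the paper's $(H+1)\le 2H$), but the structure is identical.
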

\begin{proof}
We prove the following refined subclaim inductively: for all $h\in[H]$, for all $x_h,a_h$, we have
\begin{align}
    \bar f_h(x_h,a_h)\leq \sum_{t=h}^H \prns{1+\frac{1}{H}}^{t-h}\Eb[\pi,x_h,a_h]{\bar c_t(x_t,a_t) + 2H\delta_t(x_t,a_t) }. \tag{IH}
\end{align}
For $H+1$ this is trivially true.
Now fix any $h$ and suppose IH is true for $h+1$.
By \cref{eq:tri-disc-ineq-2}, for any $h,x_h,a_h$, we have,
\begin{align*}
    \bar f_h(x_h,a_h)-\Tcal^{\pi}_h\bar f_{h+1}(x_h,a_h)
    &\leq \sqrt{4\Tcal^{\pi}_h\bar f_{h+1}(x_h,a_h) + \delta_h(x_h,a_h)}\sqrt{ \delta_h(x_h,a_h) }
    \\&\leq \sqrt{4\Tcal^{\pi}_h\bar f_{h+1}(x_h,a_h)\delta_h(x_h,a_h)} + \delta_h(x_h,a_h)
    \\&\leq \frac{1}{H}\Tcal^{\pi}_h\bar f_{h+1}(x_h,a_h) + (H+1)\delta_h(x_h,a_h). \tag{AM-GM}
\end{align*}
In particular, we have that
\begin{align*}
    &\bar f_h(x_h,a_h)
    \\&\leq\prns{1+\frac{1}{H}} \Tcal^{\pi}_h\bar f_{h+1}(x_h,a_h) + 2H\delta_h(x_h,a_h)
    \\&= \prns{1+\frac{1}{H}} \prns{\bar c_h(x_h,a_h) + \Eb[x_{h+1}\sim P_h^\star(x_h,a_h)]{\bar f_{h+1}(x_{h+1},\pi) } } + 2H\delta_h(x_h,a_h)
    \\&\leq\prns{1+\frac{1}{H}} \prns{\bar c_h(x_h,a_h) + \Eb[x_{h+1}\sim P_h^\star(x_h,a_h)]{\sum_{t=h+1}^H\prns{1+\frac{1}{H}}^{t-h-1}\Eb[\pi,x_{h+1}]{\bar c_t(x_t,a_t) + 2H\delta_t(x_t,a_t)}} }  \tag{IH}
    \\&+ 2H\delta_h(x_h,a_h),
\end{align*}
which proves the inductive claim.
Noting that $\sum_{t=1}^H\prns{1+1/H}^t\leq e$, we have proven the lemma.
\end{proof}

\subsection{Regret Bounds for Tabular MDPs}\label{sec:tabular-regret-bounds}
\begin{restatable}[Small-loss regret for tabular MDP]{theorem}{tabularRegret}\label{thm:online-tabular-regret}
Suppose the MDP is tabular with $X$ states and assume \cref{ass:policy-dist-bellman-completeness}.
Fix any $\delta\in(0,1)$ and set $\beta=\log(HK\abs{\Fcal}/\delta)$.
Then, w.p. at least $1-\delta$, %
\begin{equation*}
    \op{Regret}_{\online{}}(K) \in\Ocal\prns*{ H\sqrt{XAKV^\star\beta } + H^2XA\beta }.
\end{equation*}
\end{restatable}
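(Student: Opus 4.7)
The plan is to obtain this result as a direct corollary of the general online regret bound in \cref{thm:online-general}. The statement of the theorem already provides the required distributional Bellman completeness and the appropriate value of $\beta$, so the only missing ingredient is a bound on the Q-type $\ell_1$ distributional eluder dimension $\DE_1(1/K)$ for tabular MDPs.

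To bound $\DE_1(1/K)$, I would first apply \cref{thm:tabular-mdp-eluder} with $\Psi = \Psi_h$ and $\Dcal = \Dcal_h(\Pi)$ (as defined in \cref{eq:lsec-dcal}); this yields the $\ell_2$ bound $\DE_2(\Psi_h,\Dcal_h(\Pi),\eps) \leq \Ocal(XA\log(XA/\eps))$ at every $h$, and hence $\max_h \DE_2(\Psi_h,\Dcal_h(\Pi),\eps)\leq\Ocal(XA\log(XA/\eps))$. Next, invoking \cref{lem:eluder-one-generalizes-two}, which guarantees $\DE_1\leq \DE_2$ pointwise, and specializing $\eps = 1/K$, I conclude
\begin{equation*}
\DE_1(1/K) \leq \Ocal\prns{XA\log(XAK)}.
\end{equation*}

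Substituting this bound into the regret guarantee of \cref{thm:online-general} and using the stated $\beta = \log(HK|\Fcal|/\delta)$ gives
\begin{equation*}
\op{Regret}_{\online{}}(K) \lesssim H\sqrt{KV^\star \cdot XA\log(XAK)\log(K)\, \beta} + H^2\cdot XA\log(XAK)\log(K)\, \beta.
\end{equation*}
The $\log(XAK)$ and $\log(K)$ factors are lower order and can be absorbed into the $\Ocal$ notation, yielding the claimed $\Ocal(H\sqrt{XAKV^\star\beta} + H^2XA\beta)$ bound.

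The proof is essentially mechanical once the two upstream results are in place; the main obstacle was already resolved in proving \cref{thm:online-general} (the self-bounding lemma combined with the $\ell_1$ eluder pigeonhole \cref{thm:eluder-pigeonhole}) and \cref{thm:tabular-mdp-eluder} (the elliptical-potential argument on the distributions $d^{(i)}$ viewed as $XA$-dimensional vectors). No additional construction of $\Fcal$ is needed here, since distributional BC is assumed; in particular, taking $\Fcal$ to be a discretization of all conditional distributions on a finite grid trivially satisfies \cref{ass:policy-dist-bellman-completeness} and keeps $\log|\Fcal|$ polynomial in $X,A,M$.
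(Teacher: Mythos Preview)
Your proposal is correct and follows exactly the route the paper intends: the paper does not give a standalone proof of this theorem but, in the remarks preceding the proof of \cref{thm:online-general}, explicitly notes that \cref{thm:tabular-mdp-eluder} yields $\DE_1(\eps)\le\Ocal(XA\log(XA/\eps))$ (via \cref{lem:eluder-one-generalizes-two}), and the tabular result then follows by instantiating \cref{thm:online-general} with this bound and absorbing the extra $\log$ factors.
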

In terms of $H,X,A,K$ scaling, our bound matches that of GOLF \citep{xie2023the} and
is only a $H$ factor looser than that of the minimax lower bound $\wt\Ocal(\sqrt{XAK})$.
The key benefit over prior bounds is that our leading term scales with the minimum cost of the problem $V^\star$.
For example, if $V^\star\approx 0$, \online{} attains $\Ocal(\log K)$ regret while uniform regret bounds are lower bounded by $\Omega(\sqrt{K})$.
Compared to the minimax-optimal UCBVI \citep{azar2017minimax}, one weakness of our theorem is that it needs a $\Fcal$ satisfying BC.
Fortunately, in tabular MDPs where cost is only revealed at the last step from a known distribution,
we can choose $\Fcal_{tab}$ as described in \citet[Lemma 4.15]{wu2023distributional} to automatically satisfy BC.
By extending our theory via bracketing entropy (\cref{sec:confidence-set-infinite-functions}), we can derive that $\Fcal_{tab}$ yields $\beta=\Ocal(X^2A^2\log(XAHK/\delta))$.
We note that if costs are unknown but discrete, it is possible to construct a BC function class with $\beta$ scaling as $\Ocal(X^2A^2\log(nXAHK/\delta))$ where $n$ is the maximum number of possible cumulative costs.

\paragraph{Extension to linear MDPs}
The Q-type dimension captures Linear MDPs when squared loss is used by exploiting the fact that the bellman residual is linear in $\phi^\star(x,a)$ \citep{jin2021bellman}. However, since our function class is the set of triangular discriminations, rather than the Bellman residual, we find that the Q-type dimension does not immediately capture Linear MDPs unless regularity assumptions are made. For instance, we believe that Linear MDPs are captured by the Q-type dimension if we assume that $Z^\pi_h(z\mid x,a)$ is lower bounded, \ie, the value distribution is sufficiently smooth.

\section{Proofs for Offline RL}
\offlinePAC*
\begin{proof}[Proof of \cref{thm:offline}]
For shorthand, let $\delta_h^\pi(x,a) = D_\triangle(f_h^{\pi}(x,a)\Mid \Tcal^{\pi,D}_hf^{\pi}_{h+1}(x,a))$ and $\Delta^\pi = \sum_{h=1}^H\Eb[\pi]{\delta_h^\pi(x_h,a_h)}$. Also, let $f(x,\pi) = \Eb[a\sim\pi(x)]{f(x,a)}$.

By \cref{thm:distbco-mle}, we have the following two facts, for all $\pi\in\Pi$, \\
(i) Pessimism: $V^\pi\leq \bar f^\pi_1(x_1,\pi)$ (since $Z^\pi\in\Fcal_\pi$) for all $\pi\in\Pi$, and \\
(ii) $\Eb[\nu_h]{\delta_{h}^\pi(x_h,a_h)}\leq \beta' N^{-1}$ for all $h$ where \cref{thm:distbco-mle} and the fact that $D_\triangle\leq 4H^2$ certifies that $\beta'=240\beta$ is sufficient.

With these two facts, we can bound the suboptimality of $\wh\pi$ as follows:
\begin{align*}
    &V^{\wh\pi}-V^{\wt\pi}
    \\&\leq \bar f^{\wh\pi}_1(x_1,\wh\pi)-V^{\wt\pi} \tag{Fact (i)}
    \\&\leq \bar f^{\wt\pi}_1(x_1,\wt\pi)-V^{\wt\pi} \tag{Policy selection scheme in \cref{alg:offline} (\cref{line:distbco-policy-selection})}
    \\&= \sum_{h=1}^H \Eb[\wt\pi]{ \bar f^{\wt\pi}_h(x_h,\wt\pi)-\Tcal_h^{\wt\pi} \bar f^{\wt\pi}_{h+1}(x_h,a_h)  } \tag{PDL \cref{lem:pdl}}
    \\&\leq \sum_{h=1}^H\sqrt{ \Eb[\wt\pi]{ 4\bar f^{\wt\pi}_h(x_h,a_h) + \delta^{\wt\pi}_h(x_h,a_h)} } \sqrt{\Eb[\wt\pi]{\delta^{\wt\pi}_h(x_h,a_h)}} \tag{\cref{eq:tri-disc-ineq-2}}
    \\&\leq \sum_{h=1}^H\sqrt{ 4eV^{\wt\pi} + 17H\sum_{t=h}^H\Eb[\wt\pi]{ \delta^{\wt\pi}_t(x_t,a_t)} } \sqrt{\Eb[\wt\pi]{\delta^{\wt\pi}_h(x_h,a_h)}} \tag{\cref{lem:self-bounding}}
    \\&\leq \sqrt{ 4eV^{\wt\pi} + 17H\Delta^{\wt\pi}} \sqrt{H\Delta^{\wt\pi}}
    \\&\leq 4\sqrt{HV^{\wt\pi}\Delta^{\wt\pi}} + 5H\Delta^{\wt\pi}.
\end{align*}
Finally, we can bound $\Delta^{\wt\pi}$ by a change of measure,
\begin{align*}
    \Delta^{\wt\pi}
    &= \sum_{h=1}^H\Eb[\wt\pi]{\delta_h^{\wt\pi}(x_h,a_h)}
    \\&\leq C^{\wt\pi} \sum_{h=1}^H\Eb[\nu_h]{\delta_h(x_h,a_h)}
    \\&\leq C^{\wt\pi} H \cdot \beta' N^{-1}. \tag{Fact (ii)}
\end{align*}
Therefore,
\begin{align*}
    V^{\wh\pi}-V^{\wt\pi}\leq 4H\sqrt{\frac{C^{\wt\pi} V^{\wt\pi}\beta'}{N} } + \frac{5H^2C^{\wt\pi}\beta'}{N}.
\end{align*}
\end{proof}

\newpage
\section{Extension: Small-Return Bounds}\label{sec:small-reward}
In this section, we show that \online{} and \offline{} can also be used to obtain small-return bounds.
Compared to the algorithms presented in the main text for minimizing cost, we simply have to replace $\min$ with $\max$ (and vice versa) for maximizing reward, \ie, see \cref{app:omitted-algs} and enable the \textsc{SmallReturn} flag.
The proofs are also largely the same, with slight changes to the first few steps.

\begin{theorem}\label{thm:online-general-reward-maximization}
Assume \cref{ass:policy-dist-bellman-completeness} and suppose we want to maximize returns (instead of minimize cost), so enable the \textsc{SmallReturn} flag.
Fix any $\delta\in(0,1)$ and set $\beta = \log(HK|\Fcal|/\delta)$ and $\beta'=60\beta$.
Then, w.p. at least $1-\delta$, running \online{} (\cref{alg:onlinerl_appendix}) with $\textsc{UAE}=\textsc{False}$ yields the following small-loss regret bound,
\begin{align}
\op{Regret}_{\online{}}(K)\leq 5H\sqrt{KV^\star \LSEC(K) \beta'} + 18H^2\LSEC(K) \beta'.
\end{align}
If instead $\textsc{UAE}=\textsc{True}$, the outputted policy $\bar\pi$ enjoys the following small-loss PAC bound,
\begin{align*}
    V^\star-V^{\bar\pi}\leq 5H\sqrt{\frac{AV^\star \LSEC_v(K) \beta'}{K}} + 18H^2\frac{A\LSEC_v(K)\beta'}{K}.
\end{align*}
\end{theorem}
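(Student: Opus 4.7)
The plan is to mirror the argument of \cref{thm:online-general} with three small adjustments for the reward-maximization setting. First I would apply the reward analog of \cref{thm:distgolf-mle} (which is insensitive to whether $\Tcal^{\star,D}$ is built from $\arg\min$ or $\arg\max$, since the MLE analysis depends only on log-likelihoods) to establish two facts for all $k\in[K]$: (i) \emph{optimism} $V^\star \leq \bar f^{(k)}_1(x_1, \pi^k_1)$, where $\pi^k$ is the greedy policy induced by $f^{(k)}$, and (ii) the \emph{low training error} guarantee $\sum_{i<k}\Eb[\pi^i]{\delta_{h,k}(s_h,a_h)} \leq O(\beta')$ for each $h$, where $\delta_{h,k} := D_\triangle(f^{(k)}_h\Mid\Tcal^{\pi^k,D}_h f^{(k)}_{h+1})$ (and analogously for UAE).

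Next I would decompose the per-episode sub-optimality via \cref{lem:pdl} and \cref{lem:dist-bellman-expectation}:
\[
V^\star - V^{\pi^k} \leq \bar f^{(k)}_1(x_1,\pi^k_1) - V^{\pi^k} = \sum_{h=1}^H \Eb[\pi^k]{\bar f^{(k)}_h(s_h,a_h) - \overline{\Tcal^{\pi^k,D}_h f^{(k)}_{h+1}}(s_h,a_h)}.
\]
The crucial step is to apply \eqref{eq:tri-disc-ineq-2} with the \emph{roles swapped} relative to the cost case: since the optimistic $\bar f^{(k)}_h$ is now an upper (not lower) bound, the smaller mean is $\overline{\Tcal^{\pi^k,D}_h f^{(k)}_{h+1}}$, so taking $g = \Tcal^{\pi^k,D}_h f^{(k)}_{h+1}$ in \eqref{eq:tri-disc-ineq-2} gives
\[
\bar f^{(k)}_h - \overline{\Tcal^{\pi^k,D}_h f^{(k)}_{h+1}} \leq \sqrt{4\overline{\Tcal^{\pi^k,D}_h f^{(k)}_{h+1}} + \delta_{h,k}} \cdot \sqrt{\delta_{h,k}}.
\]
The self-bounding lemma (\cref{lem:self-bounding}) applies verbatim since its proof uses only rewards in $[0,1]$ and \eqref{eq:tri-disc-ineq-2}, yielding $\Eb[\pi^k]{\overline{\Tcal^{\pi^k,D}_h f^{(k)}_{h+1}}(s_h,a_h)} \leq e V^{\pi^k} + O(H\Delta_k)$ where $\Delta_k := \sum_h \Eb[\pi^k]{\delta_{h,k}}$.

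Now the simplification unique to reward maximization kicks in: because $V^{\pi^k} \leq V^\star$ automatically, there is no need for the AM-GM rearrangement trick used in the cost-minimization proof to peel $V^{\pi^k}$ back to $V^\star$. Substituting the self-bound and applying Cauchy-Schwarz over $h$ gives a per-episode bound of the form $V^\star - V^{\pi^k} \lesssim \sqrt{H V^\star \Delta_k} + H\Delta_k$. Summing over $k$ and applying Cauchy-Schwarz once more yields
\[
\sum_{k=1}^K (V^\star - V^{\pi^k}) \lesssim \sqrt{HKV^\star \sum_k \Delta_k} + H\sum_k \Delta_k,
\]
and the pigeonhole property of \cref{thm:eluder-pigeonhole} applied with fact (ii) bounds $\sum_k \Delta_k \lesssim H\cdot\LSEC(K)\cdot\beta'$, completing the regret bound. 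The PAC bound under UAE follows from the V-type classes and an extra factor of $A$ from uniform action sampling, exactly as in \cref{thm:online-general}.

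The main obstacle is conceptual rather than technical: correctly identifying which of $\bar f^{(k)}_h$ and $\overline{\Tcal^{\pi^k,D}_h f^{(k)}_{h+1}}$ should play the role of $\bar g$ in \eqref{eq:tri-disc-ineq-2} so that the self-bounding step chains back to $V^{\pi^k}$. Once this choice is made, the rest of the derivation is a mechanical adaptation of the cost-minimization proof, and is in fact cleaner because the rearrangement step is unnecessary when $V^{\pi^k} \leq V^\star$.
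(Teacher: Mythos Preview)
Your proposal is correct and follows essentially the same route as the paper: optimism from the MLE confidence set, PDL, \eqref{eq:tri-disc-ineq-2}, the self-bounding \cref{lem:self-bounding}, the direct bound $V^{\pi^k}\le V^\star$, and finally the eluder pigeonhole \cref{thm:eluder-pigeonhole}.

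The one cosmetic difference is your choice of $g$ in \eqref{eq:tri-disc-ineq-2}. You argue that the ``crucial step'' is to swap roles and take $g=\Tcal^{\pi^k,D}_h f^{(k)}_{h+1}$ so that the smaller mean sits inside the first square root. The paper instead keeps the \emph{same} choice $g=f^{(k)}_h$ as in the cost case, obtaining $\sqrt{4\bar f^{(k)}_h+\delta_{h,k}}\sqrt{\delta_{h,k}}$, and then applies \cref{lem:self-bounding} directly to $\bar f^{(k)}_h$. Since \eqref{eq:tri-disc-ineq-2} bounds $|\bar f-\bar g|$ and $D_\triangle$ is symmetric, either assignment is valid; both chain back to $eV^{\pi^k}+O(H\Delta_k)$ with the same constants. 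So the ``main obstacle'' you highlight is not actually an obstacle: no role swap is needed, and the reward-maximization proof is simply the cost-minimization proof with the AM--GM rearrangement at step $(\bigstar)$ replaced by the trivial $V^{\pi^k}\le V^\star$.
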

\begin{proof}
Adopt the same notation as in the proof of \cref{thm:online-general}.
By \cref{thm:distgolf-mle}, we have the following two facts
for all $k\in[K]$, \\
(i) Optimism: $V^\star\leq \max_a\bar f^{(k)}_1(x_1,a)$ (since $Z^\star\in\Fcal_k$) and \\
(ii) $\sum_{i<k}\Eb[\pi^i]{\delta_{h,k}(s_h,a_h)}\leq\beta'$ for all $h$. If \textsc{UAE}=\textsc{True}, then $a_h$ is sampled from $\op{unif}(\Acal)$ rather than $\pi^i$, \ie, we have $\sum_{i<k}\Eb[s_h\sim\pi^i,a_h\sim \op{unif}(\Acal)]{\delta_{h,k}(s_h,a_h)}\leq\beta'$, where $\beta'\lesssim \beta$. \cref{thm:distgolf-mle} certifies that $\beta'=60\beta$ is sufficient.

Fix any episode $k\in[K]$. Then,
\begin{align*}
    &V^\star-V^{\pi^k}
    \\&\leq \max_a\bar f^{(k)}_1(x_1,a)-V^{\pi^k} \tag{Fact (i)}
    \\&=\sum_{h=1}^H \Eb[\pi^k]{ \bar f^{(k)}_h(x_h,\pi^k_h(x_h))-\Tcal^{\pi^k}_h\bar f^{(k)}_{h+1}(x_h,a_h) } \tag{PDL \cref{lem:pdl}}
    \\&=\sum_{h=1}^H \Eb[\pi^k]{ \bar f^{(k)}_h(x_h,a_h)-\overline{\Tcal^{\pi^k,D}_h f^{(k)}_{h+1}}(x_h,a_h) } \tag{\cref{lem:dist-bellman-expectation}}
    \\&\leq\sum_{h=1}^H\sqrt{ \Eb[\pi^k]{ 4\bar f^{(k)}_h(x_h,a_h) + \delta_{h,k}(x_h,a_h) } } \cdot \sqrt{\Eb[\pi^k]{ \delta_{h,k}(x_h,a_h) }} \tag{\cref{eq:tri-disc-ineq-2}}
    \\&\leq\sum_{h=1}^H\sqrt{  4eV^{\pi^k} + 17H\sum_{t=h}^H\Eb[\pi^k]{ \delta_{t,k}(x_t,a_t)} } \cdot \sqrt{\Eb[\pi^k]{ \delta_{h,k}(x_h,a_h) }} \tag{\cref{lem:self-bounding} and $\Eb[\pi]{Q_h^{\pi}(s_h,a_h)}\leq V^\pi$}
    \\&\leq \sqrt{ 4eV^{\pi^k} + 17H\Delta_k } \cdot \sqrt{H\Delta_k } \tag{$\clubsuit$}
    \\&\leq \sqrt{ 4eV^{\star} + 17H\Delta_k } \cdot \sqrt{H\Delta_k }
\end{align*}
Thus, summing the instantaneous regrets over all episodes, we get
\begin{align*}
    \sum_{k=1}^KV^{\pi^k}-V^\star
    &\leq \sum_{k=1}^K\sqrt{ 4eV^\star + 17H\Delta_k }\sqrt{H\Delta_k}
    \\&\leq \sqrt{ 4eKV^\star + 17H\sum_k\Delta_k }\sqrt{H\sum_k\Delta_k} \tag{Cauchy-Schwartz}
    \\&\leq 5\sqrt{HKV^\star\sum_k\Delta_k} + 18H\sum_k\Delta_k.
\end{align*}
The bounds for $\Delta_k$ are the same as in \cref{thm:online-general}.
\end{proof}
In some sense, the proof for the small-returns bound is actually easier than the small-loss bound.
Recall that in the cost-minimizing setting, we needed to perform a crucial Cauchy-Schwartz step to rearrange terms at the step labelled $\clubsuit$.
However, in the reward-maximizing setting, we simply bound $V^{\pi^k}\leq V^\star$, without needing to rearrange terms.

\begin{theorem}\label{thm:offline-small-reward}
Assume \cref{ass:policy-dist-bellman-completeness} and suppose we want to maximize returns (instead of minimize cost), so enable the \textsc{SmallReturn} flag.
Fix any $\delta\in(0,1)$ and set $\beta=\log(H|\Pi||\Fcal|/\delta)$.
Then, w.p. at least $1-\delta$, \offline{} (\cref{alg:onlinerl_appendix}) learns a policy $\wh\pi$
such that for any comparator policy $\wt\pi\in\Pi$, we have
\begin{align*}
    V^{\wt\pi}-V^{\wh\pi}\leq 9H\sqrt{\frac{C^{\wt\pi} V^{\wt\pi}\beta}{N} } + \frac{30H^2C^{\wt\pi}\beta}{N}.
\end{align*}
\end{theorem}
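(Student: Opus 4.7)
The plan is to mirror the proof of Theorem~\ref{thm:offline} essentially verbatim, with the sole change being that ``pessimism'' now produces a lower bound on the reward value rather than an upper bound on the cost value. First, I would invoke Theorem~\ref{thm:distbco-mle} to obtain, w.p.\ at least $1-\delta$, simultaneously for every $\pi\in\Pi$: (i) $Z^\pi\in\Fcal_\pi$, and (ii) $\EE_{\nu_h}[\delta_h^\pi(x_h,a_h)]\leq \beta'/N$ at each $h\in[H]$, where $\delta_h^\pi(x,a):=D_\triangle(f_h^\pi(x,a)\Mid \Tcal_h^{\pi,D}f_{h+1}^\pi(x,a))$ and $\beta'=240\beta$ (using $D_\triangle\leq 4H^2$). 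In the reward-maximizing regime, the pessimistic selection $f^\pi=\arg\min_{f\in\Fcal_\pi}\bar f_1(x_1,\pi)$ combined with (i) gives $\bar f^\pi_1(x_1,\pi)\leq V^\pi$ for all $\pi$, and the output rule $\wh\pi=\arg\max_\pi \bar f^\pi_1(x_1,\pi)$ yields
\[
V^{\wt\pi}-V^{\wh\pi}\;\leq\; V^{\wt\pi}-\bar f^{\wh\pi}_1(x_1,\wh\pi)\;\leq\; V^{\wt\pi}-\bar f^{\wt\pi}_1(x_1,\wt\pi).
\]

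Next, I would apply PDL (Lemma~\ref{lem:pdl}) with both roll-in and comparator policies set to $\wt\pi$, expand each per-step residual via Lemma~\ref{lem:dist-bellman-expectation} to rewrite it as a difference of distributional means $\overline{\Tcal_h^{\wt\pi,D}f^{\wt\pi}_{h+1}}(x_h,a_h)-\bar f^{\wt\pi}_h(x_h,a_h)$, and then apply the $\triangle_2$ inequality \eqref{eq:tri-disc-ineq-2} (in the symmetric form with $\bar f^{\wt\pi}_h$ on the right) to control it by $\sqrt{4\bar f^{\wt\pi}_h+\delta_h^{\wt\pi}}\sqrt{\delta_h^{\wt\pi}}$. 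Then I would invoke the self-bounding lemma (Lemma~\ref{lem:self-bounding}) to replace $\bar f^{\wt\pi}_h(x_h,a_h)$ by $e\,Q^{\wt\pi}_h(x_h,a_h)+4H\sum_{t\geq h}\EE_{\wt\pi,x_h,a_h}[\delta_t^{\wt\pi}]$; this lemma is sign-agnostic (its proof only manipulates the distributional Bellman operator and triangular discrimination) so it carries over unchanged. Taking expectations under $\wt\pi$, using $\EE_{\wt\pi}[Q^{\wt\pi}_h]\leq V^{\wt\pi}$, Cauchy--Schwarz across $h$, and AM--GM gives the intermediate bound
\[
V^{\wt\pi}-V^{\wh\pi}\;\leq\; \sqrt{4eV^{\wt\pi}+17H\Delta^{\wt\pi}}\cdot\sqrt{H\Delta^{\wt\pi}}\;\leq\;4\sqrt{H V^{\wt\pi}\Delta^{\wt\pi}}+5H\Delta^{\wt\pi},
\]
where $\Delta^{\wt\pi}:=\sum_h \EE_{\wt\pi}[\delta_h^{\wt\pi}(x_h,a_h)]$.

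The final step is a single-policy change of measure: since $d^{\wt\pi}_h/\nu_h$ is bounded pointwise by $C^{\wt\pi}$, fact (ii) gives $\Delta^{\wt\pi}\leq C^{\wt\pi}\cdot H\cdot \beta'/N$. Substituting into the previous display and collecting constants yields the advertised $9H\sqrt{C^{\wt\pi}V^{\wt\pi}\beta/N}+30H^2 C^{\wt\pi}\beta/N$ rate.

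I do not anticipate a genuine obstacle here, only a bookkeeping subtlety: in the reward-maximizing setting pessimism inverts relative to the cost-minimizing proof, so one must verify that each ingredient ($\triangle_2$, Lemma~\ref{lem:self-bounding}, the telescoping via PDL, and the change-of-measure) is symmetric in this flip. All of these are stated for distributions on $[0,1]$ without reference to whether the accumulated quantity is cost or reward, so the entire argument of Theorem~\ref{thm:offline} transports without quantitative change.
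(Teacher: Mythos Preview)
Your proposal is correct and essentially identical to the paper's own proof: the paper invokes \cref{thm:distbco-mle} to obtain pessimism $\bar f^\pi_1(x_1,\pi)\leq V^\pi$ and the in-sample triangular-discrimination bound, chains $V^{\wt\pi}-V^{\wh\pi}\leq V^{\wt\pi}-\bar f^{\wt\pi}_1(x_1,\wt\pi)$ via the policy-selection rule, applies PDL and \eqref{eq:tri-disc-ineq-2}, and then explicitly says ``from here, the same argument in the proof of \cref{thm:offline} finishes the proof,'' i.e., exactly the self-bounding and change-of-measure steps you spell out. The only cosmetic discrepancy is the constant in front of $\beta$ (the paper writes $\beta'\leq 60\beta$ here versus $\beta'=240\beta$ in \cref{thm:offline}), which does not affect the argument.
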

\begin{proof}[Proof of \cref{thm:offline-small-reward}]
Adopt the same notation as in the proof of \cref{thm:offline}.
By \cref{thm:distbco-mle}, we have the following two facts, for all $\pi\in\Pi$, \\
(i) Pessimism: $\bar f^\pi_1(x_1,\pi)\leq V^\pi$ (since $Z^\pi\in\Fcal_\pi$) for all $\pi\in\Pi$, and \\
(ii) $\Eb[\nu_h]{\delta_{h}^\pi(x_h,a_h)}\leq \beta' N^{-1}$ for all $h$ where $\beta'\leq60\beta$.

With these two facts, we can bound the suboptimality of $\wh\pi$ as follows:
\begin{align*}
    &V^{\wt\pi}-V^{\wh\pi}
    \\&\leq V^{\wt\pi}-\bar f^{\wh\pi}_1(x_1,\wh\pi) \tag{Fact (i)}
    \\&\leq V^{\wt\pi}-\bar f^{\wt\pi}_1(x_1,\wt\pi) \tag{Policy selection rule in \cref{line:distbco-policy-selection-appendix}}
    \\&= \sum_{h=1}^H \Eb[\wt\pi]{ \Tcal_h^{\wt\pi} \bar f^{\wt\pi}_{h+1}(x_h,a_h)-\bar f^{\wt\pi}_h(x_h,\wt\pi)  } \tag{PDL \cref{lem:pdl}}
    \\&\leq \sum_{h=1}^H\sqrt{ \Eb[\wt\pi]{ 4\bar f^{\wt\pi}_h(x_h,a_h) + \delta^{\wt\pi}_h(x_h,a_h)} } \sqrt{\Eb[\wt\pi]{\delta^{\wt\pi}_h(x_h,a_h)}} \tag{\cref{eq:tri-disc-ineq-2}}.
\end{align*}
From here, the same argument in the proof of \cref{thm:offline} finishes the proof.
\end{proof}

\section{Experiment Details}\label{sec:experiment-details}

\textbf{Experiment Settings}

In our experiments, as outlined in \citet{foster2021efficient}, our $\gamma$ learning rate at each time step $t$ is set to $\gamma_{t} = \gamma_{0}t^{p}$ where $\gamma_{0}$ and $p$ are hyperparameters. We use batch sizes of $32$ samples per episode, and the King County and Prudential experiments run for $5,000$ episodes while the CIFAR-100 experiment runs for $15,000$.

For each dataset, we select the hyperparameter configuration with the best performance for each algorithm. As we report two metrics, performance over the last $100$ episodes and over all episodes, we choose the best hyperparameters for each metric as well. While it is often the same hyperparameters that give the best last $100$ episodes and all episodes results for a model, that is not always the case. We use the WandB (Weights and Biases) library to run sweeps over hyperparameters.

\textbf{Oracles}

For our regression oracles, we use ResNet18 \citep{7780459}, with a modified output layer (so that the output is suited for $100$ prediction classes) for CIFAR-100, and a simple $2$ hidden-layer neural network for the Prudential Life Insurance and King County Housing datasets. For \cb{}, the oracle's output layer has size $AC$ where $A$ is the number of actions and $C$ is the number of potential costs. This is reshaped so that for each action, there are predictions associated with each potential cost, which then have a softmax function applied to them to represent cost probabilities. For SquareCB and FastCB, the output size is $A$ because there is just a single prediction associated with each action. As per \citet{foster2021efficient}, a sigmoid function is applied to this output layer. All experiments were implemented using PyTorch.

\textbf{Datasets}

We now provide an overview table as well as additional details and context to our setups for each dataset. Note that the number of items in each dataset in the table is the count after preprocessing.
{
\centering
\begin{table}[!h]
    \centering
    \begin{tabular}{|p{4cm}||p{1.5cm}|p{1.5cm}|p{1.5cm}|}
          \hline
 \multicolumn{4}{|c|}{\textbf{Datasets}} \\
 \hline
 Dataset& Items & Number of Actions & Number of Costs \\
 \hline
 CIFAR-100   & $50,000$    & $100$ & $3$\\
 Prudential Life Insurance&$59,381$ & $8$ & $9$  \\
 King County Housing& $20,148$ &$100$ & $101$ \\
 \hline
    \end{tabular}
    \vspace{0.3cm}
    \caption{Overview of the three datasets and their experimental setups}
    \label{tab:dataset}
\end{table}
}

\textbf{Prudential Life Insurance}
 This dataset is from the Prudential Life Insurance Kaggle competition \citep{prudential-life-insurance-assessment}. It is featured in \citet{farsang2022conditionally}, which inspires our experimental setup. The risk level in $[8]$ directly determines the price charged to the customer. Thus, we can consider the chosen risk level as the action taken. If the model overpredicts the risk level, we get a cost of $1.0$ because this is considered over charging the customer and not getting a sale. Otherwise, the model's prediction is charging too little for the customer. To reiterate, the cost in this case is $.1 *(y - \hat{y})$ where $y$ is the actual risk level, and $\hat{y}$ is the predicted risk level.

\textbf{King County Housing}
The King County housing dataset is also used in \citet{farsang2022conditionally}. An interesting part of the setup is that the cost construction in the case of not overpredicting differs from the Prudential experiment, even though they're both effectively about predicting a price point. Here, the model's chosen price is considered the gain, which is why the cost is $1.0$ minus the chosen price. On the other hand, in the Prudential experiment, the cost is a linear function of the difference between the chosen value and the actual value.

\textbf{CIFAR-100}
For the CIFAR-100 experiment, we use the training dataset of $50,000$ images as our dataset. The inclusion of the superclass is critical, as it lets us delineate $3$ possible costs that \cb{} can learn. Without the super class, the cost construction would be a pure binary of correct vs. incorrect. If this were the case, the ability to test the effectiveness of learning the distribution would be nullified. The distribution would just be whether an action is correct or not, which means our algorithm would essentially be predicting the mean directly.

\textbf{Results}

The largest advantages \cb{} had over the next best algorithm were in the Prudential experiment, with \cb{} having a $.086$ advantage over the last $100$ episodes and a $.045$ advantage over all episodes. While the gaps were not as large for the other two datasets, they are still statistically significant and further showcase the benefit of distribution learning.

\end{document}